\documentclass[10pt]{article}
\usepackage{iclr2027_conference}
\iclrfinalcopy
\usepackage{iftex}
\ifXeTeX
  \usepackage[T1]{fontenc}
\fi
\usepackage{times}

\usepackage{microtype}

\usepackage{amsmath,amsfonts,bm}

\def\eqref#1{equation~\ref{#1}}

\def\1{\bm{1}}

\DeclareMathAlphabet{\mathsfit}{\encodingdefault}{\sfdefault}{m}{sl}
\SetMathAlphabet{\mathsfit}{bold}{\encodingdefault}{\sfdefault}{bx}{n}

\usepackage{tikz}
\usepackage{adjustbox}
\usetikzlibrary{tikzmark,decorations.pathreplacing,backgrounds,calc,positioning}

\usepackage{nicematrix}
\usepackage{wrapfig}
\usepackage{hyperref}
\usepackage{url}
\usepackage{physics}
\usepackage{amssymb}
\usepackage{amsmath}
\usepackage{amsthm}
\usepackage{booktabs}
\usepackage{caption}
\usepackage{cleveref}
\usepackage{graphicx}
\usepackage{subcaption}
\usepackage{colortbl} 
\usepackage{titletoc}

\usepackage{multirow}
\usepackage{graphicx}

\usepackage{algorithm}
\usepackage{algpseudocode}

\let\originalparagraph\paragraph
\renewcommand{\paragraph}[1]{\par\noindent\textbf{#1}\ }

\theoremstyle{plain}
\newtheorem{theorem}{Theorem}[section]
\newtheorem{lemma}[theorem]{Lemma}

\newtheorem{corollary}[theorem]{Corollary}

\theoremstyle{definition}

\newtheorem{axiom}[theorem]{Axiom}

\theoremstyle{remark}

\newtheorem*{theorem*}{Theorem}
\newtheorem*{lemma*}{Lemma}
\newtheorem*{remark*}{Remark}

\usepackage{mdframed}
\mdfsetup{
  skipabove=1pt,
  skipbelow=1pt,
  innertopmargin=4pt,
  innerbottommargin=4pt
}

\makeatletter
\patchcmd{\proof}
  {\topsep6\p@\@plus6\p@}
  {\topsep2\p@\@plus2\p@}
  {}{}
\makeatother

\usepackage{tabularx}
\usepackage{array}
\usepackage{ragged2e}

\definecolor{themePaper}{HTML}{F4F1ED}
\definecolor{themeInk}{HTML}{32292F}
\definecolor{themeAccent}{HTML}{32292F}

\definecolor{themeBlue}{HTML}{379392}
\definecolor{themeGreen}{HTML}{205620}
\definecolor{themeOrange}{HTML}{EE6958}
\definecolor{themeYellow}{HTML}{EDE1D4}
\definecolor{themeYellowAccent}{HTML}{DCFFFD}

\definecolor{themeGrayOne}{HTML}{EEE8E1}
\definecolor{themeGrayTwo}{HTML}{DFD7D0}
\definecolor{themeGrayThree}{HTML}{6C6569}

\definecolor{themeError}{HTML}{EE6958}

\colorlet{evOpen}{themeBlue}
\colorlet{evTraining}{themeOrange}
\colorlet{evFree}{themeGreen}

\definecolor{evSnapshot}{HTML}{7B5AA6}  
\definecolor{evVerify}{HTML}{B04F78}    
\definecolor{evSubmit}{HTML}{D99A32}    

\colorlet{codeblue}{themeBlue}
\colorlet{codegray}{themeGrayThree}
\colorlet{codegreen}{themeGreen}
\colorlet{codebg}{themePaper!20}

\hypersetup{%
    pdftitle={Training Witnesses: Trusting the Training without Trusting the Trainer},
    pdfauthor={Houjun Liu and Pratyusha Sharma},
    pdfborder = {0 0 0},
    colorlinks,
    citecolor=themeBlue,
    linkcolor=themeBlue,
}

\newcolumntype{Y}{%
  >{\RaggedRight\arraybackslash\hspace{0pt}}X%
}

\usepackage{xcolor}
\usepackage{colortbl}
\usepackage{listings}

\newcommand{\eventcell}[2]{%
  \cellcolor{#1!9}#2%
}

\newcommand{\eventbridge}[1]{%
  \makebox[\linewidth][c]{%
    \begin{tikzpicture}[
      x=\dimexpr\linewidth+2\tabcolsep\relax,
      y=1ex,
      baseline=-0.5ex
    ]
      \fill[#1!9]
        (0,-1.30)
        -- (0.13,-0.22)
        -- (0.87,-0.22)
        -- (1,-1.30)
        -- (1, 1.30)
        -- (0.87, 0.22)
        -- (0.13, 0.22)
        -- (0, 1.30)
        -- cycle;
    \end{tikzpicture}%
  }%
}

\newcommand{\route}[3]{%
  \textcolor{#1!90!black}{%
    \textsf{\bfseries #2}\,$\rightarrow$\,\textsf{\bfseries #3}%
  }%
}

\newcommand{\routeiii}[4]{%
  \textcolor{#1!90!black}{%
    \textsf{\bfseries #2}\,$\rightarrow$\,
    \textsf{\bfseries #3}\,$\rightarrow$\,
    \textsf{\bfseries #4}%
  }%
}
\title{Training Witnesses: Trusting the Training without Trusting the Trainer}

\author{Houjun Liu\thanks{Work done while at Microsoft Research.} \\
  Stanford University \\ 
  \texttt{houjun@cs.stanford.edu}
  \And
  Pratyusha Sharma\footnotemark[1] \\
  New York University\\
  \texttt{p.sharma@nyu.edu}
}

\usepackage{xspace}

\newcommand{\systemname}{Witnesses\xspace}
\newcommand{\certificate}{}

\begin{document}
\flushbottom
\everypar{\looseness=-2\relax}
\setlength{\textfloatsep}{6pt plus 2pt minus 2pt}
\setlength{\floatsep}{4pt plus 2pt minus 2pt}
\setlength{\intextsep}{4pt plus 2pt minus 2pt}
\maketitle
\fancyhead{}
\renewcommand{\headrulewidth}{0pt}

\begin{abstract}
Progress in machine learning cannot outpace our ability to verify it. With an explosion in papers today, every scientific claim rests initially on trust in the trainer, leading to uneven evaluation, baselines, and forestalling of reliable progress. Traditionally, the burden of verification falls on the reader, who must reproduce expensive training runs. This strategy is impractical due to an explosion in slop contributions, diversity of methods, and the sheer compute required. We put the burden of proof where it belongs, on the trainer, and in the process also cut the overall cost of verification significantly. We introduce \textbf{\systemname}, a method for certifying training, data usage and evaluation in a neural network training run. Our key insight is that fast behavioral fingerprints with occasional replay challenges are sufficient for auditing neural network training. Our method is applicable at scale with minimal overhead to the trainer, is cheap for the verifier, rejects bad training runs with amplifiable probability, and allows for exact queries of both data inclusion and exclusion. We test our method on language model training runs from 100M to 2B scales, across DDP and FSDP, and demonstrate this minimal overhead. We also introduce a self-regulating leaderboard of ``auto-certified'' training runs that enables shared baselines and progress. We invite the community to participate in the leaderboard to improve reproducibility in machine learning.
\end{abstract}


\section{Introduction}

\begin{figure}[h]
  \centering
    \includegraphics[width=\linewidth]{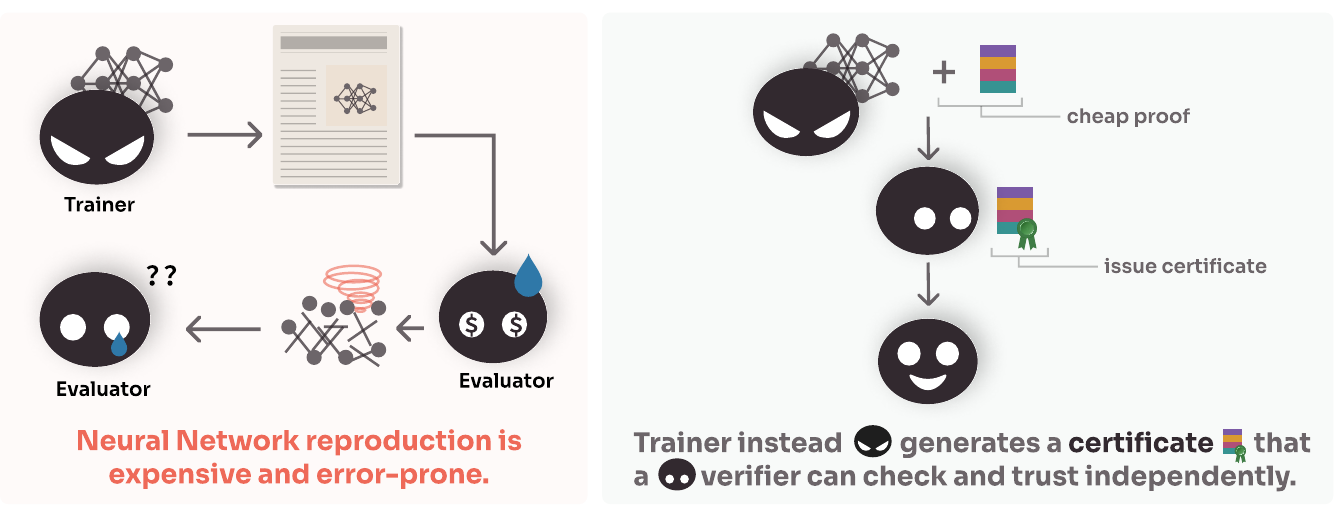}
    \vspace{0.3em}
    \caption{\looseness=-2\relax \textbf{Training Witnesses.} We present a system that creates privacy-preserving \emph{witnesses} that allows a public verifier to cheaply check and trust what information is included in the training of a model without disclosing model training details, enabling trust in the artifacts without trusting the trainer.}
    \label{fig:system-overview}
\end{figure}

An enormous number of papers \citep{su2026selfrankings} now flood machine learning conferences each year, and the number is growing exponentially, including received 60,000 submissions just at ICLR this year. How much of it has moved the field forward?

There's an emerging consensus \citep{shah2026peerreview,baumann2026stop,abid2026icmlreproductions,raff2025reproducible,schaeffer2025refutations,pmlr-v235-rogers24a} in the field that finds it hard to \textit{trust} the papers being written, \textit{reproduce} the results when trust is missing, or even \textit{verify} specific claims made in academic papers. We believe this \textit{progress crisis} \citep{hutson2018artificial} in machine learning is driven by the proliferation of varied baselines, unequal comparisons, and undeclared interventions, rendering standardized comparisons difficult \citep{sculley2025competitions} and thus actual progress less visible. We believe a mere policy change alone cannot fix this alarming crisis. Instead, solving this crisis requires a change in the technology we use to evaluate and certify contributions.

\looseness=-1 Our goal is to mitigate this crisis by \textit{building an algorithmic solution that provides assurance of model developer claims while reducing the cost of verification}. The hardest version of this problem involves certifying entire pretraining runs, which enables standardized comparisons with third-party verification of training without the need to retrain. Reproducibility certificates already exist in the systems research community \citep{usenix2026osdiartifacts}. However, unlike systems, reproduction cannot merely be the responsibility of the program committee since resources required to verify a single ML project are prohibitively higher, let alone a whole batch of projects \citep{aclrollingreview2026letter}. 

In response, \textbf{we introduce an algorithm, a code-base agnostic system, and a self-regulating leaderboard} that solves exactly this hardest setting. We let authors prove their models were trained on the data they state, and in turn enable more confidence that their claims of generalization are indeed from algorithmic contributions. Existing methods to do this move the burden of checking onto a single third-party verifier, which is intractable at current scales especially for research areas such as pretraining, novel architecture, etc. Instead, we leverage the insight that \textbf{a progress certificate enables systematic comparison of any transitions that happened during computation}, including results; our system therefore enables the model developer to publish exactly this evidence of training progress, which can then be selectively revealed to a lower-powered verifier to certify. Our proof is privacy-preserving, meaning the model training technique will not be revealed to the user; undeclared data usage can also be kept private, other than revealing that additional undeclared data was used.

Our method, \textbf{\systemname}, uses the fact that practical data attestation audits can be established cheaply at scale with a fast behavioral fingerprint, allowing secure rare replay challenges. This insight allows us to build a\textbf{ protocol and system that reduce scalable training data attestations to be cheap at scale} incurring only $\leq 1.19 \times$ speed and $\leq 1.11\times$ memory overhead for a 2.02B parameter, 2048 block size FSDP/TP-2 reference training run. We use three insights to achieve this:

\begin{enumerate}
  \setlength{\topsep}{2pt}
  \setlength{\itemsep}{0pt}
  \setlength{\parsep}{0pt}
  \setlength{\partopsep}{0pt}
  \item \textbf{For model parameter provenance}, we introduce and prove that a low-rank, full-spark, orthonormal projection of weight matrices yields a behavioral fingerprint that fixes parameter degrees of freedom. This projection is fast enough to run at every gradient update, and it forces any cheating signal to be non-sparse and consistently detectable.
  \item \textbf{For update function sealing}, we leverage the insight that both Jax and Torch's accelerator serializations are stateless \citep{jax2026export,pytorchxla2026stablehlo,openxla2026compatibility} to ensure all training steps can be exactly replayed without side effects.
  \item \textbf{For efficient implementation}, we combine Deepspeed-style optimizer state offloading \citep{ren2021zerooffload} with a white-box cryptographic sidecar \citep{chow2003whitebox} to build a signed, append-only chain, and produce a cheaply-checkable certificate of training data.
\end{enumerate}

We validate this system between 100M-2B scales, including for sharded and multi-host training, as well as test the method under adversarial optimization. We additionally release \textbf{a web system}\footnote{\url{https://www.dvbench.org/}} that developers can use to attest their training and build flexible leaderboards to display attestations. We hope these techniques and tools can improve the reproducibility crisis in machine learning.

\section{Related Work}
Exact certification of training, data and evaluation is conceptually simple, but practically intractable at scale when done naively. At a high level, there are three overarching goals that such a system has to achieve for widespread deployment: the system has to be cheap to execute, trustworthy despite the low overhead, and meaningfully shift cost away from the verifier (lest it becomes just reproduction).

A first method for certifying proof of training data (PoTD) sequentially attests every training transition by building trust using Zero-Knowledge Succinct Non-Interactive Argument of Knowledge (zk-SNARK) techniques \citep{sun2025zkdl,eisenhofer2025verifiable,garg2023experimenting}. These methods are usually too slow to be practical (with \citet{sun2025zkdl} offering ``less than a second per update for 10M parameters'' performance). Thus, they do not achieve our goal of running with low overhead.
Second, statistical characterization methods exist to remove checkpointing cost \citep{choi2023tools}, leveraging the fact that LLMs temporarily overfit to one sample of training and learn the rough training order of data \citep{kuditipudi2025blackbox,maini2021dataset,choi2023tools}. However, a valid sequence of overfitting to existing data does not preclude additional, undeclared poisoning, leading to low detection rates for data addition. This effect fails our second goal of adding trust.

Thus, more practical PoTD methods often simply write down the training with probabilistic chance. However, such systems still require \citep{madrigalcianci2026probabilistic} an expensive checkpointing and replay procedure, incurring high cost on both the trainer and verifier as shown in Figure~\ref{sketching-insight}. 

Importantly, all approaches are specialized to particular training topologies or algorithms \citep{garg2023experimenting,eisenhofer2025verifiable,sun2025zkdl,maini2021dataset,choi2023tools}, and therefore are unwieldy and do not by themselves generalize across the diverse methods used in LLM development. 


\section{Setup: Training Witnesses}

\begin{wrapfigure}{l}{0.55\textwidth}

\centering
\renewcommand{\arraystretch}{1.10}

\begin{tabular}{@{}c r ccc@{}}
\toprule
& Open: &
\tikzmarknode{openTrain}{\textbf{Training}}
&
\tikzmarknode{openWeight}{\textbf{Weight}}
&
\textbf{API} \\
\midrule

\multirow{6}{*}{%
  \rotatebox[origin=c]{90}{\footnotesize\textbf{Discloses}}%
}
& Data (training)                   & \checkmark &            &            \\
& Data (evaluation)                 & \checkmark & \checkmark &            \\
& Training procedure                & \checkmark &            &            \\
& Hyperparameters          & \checkmark &            &            \\
& Model params & \checkmark & \checkmark &            \\
& Architecture & \checkmark & \checkmark &            \\
& Inference access                  & \checkmark & \checkmark & \checkmark \\

\cmidrule{1-5}

\multirow{2}{*}{%
  \rotatebox[origin=c]{90}{\footnotesize\textbf{Check}}%
}
& $d_{\text{train}}$
    & \checkmark &            &            \\
& $d_{\text{eval}} \notin d_{\text{train}}$
    & \checkmark
    & \tikzmarknode{openWeightBottom}{\checkmark}
    &            \\
\bottomrule
\end{tabular}

\begin{tikzpicture}[remember picture,overlay]

  \coordinate (hlNW) at
    ($(openTrain.north west)+(-\tabcolsep-2pt,8pt)$);

  \coordinate (hlSE) at
    ($(openWeight.east |- openWeightBottom.south)
      +(\tabcolsep+5pt,-20pt)$);

  \coordinate (hlSW) at (hlNW |- hlSE);

  \coordinate (hlMD) at ($(hlSW)!0.5!(hlSE)$);

  \fill[
    themeGreen,
    fill opacity=0.04,
    rounded corners=7pt
  ]
  (hlNW) rectangle (hlSE);

  \node[
    text=themeGreen,
    font=\scriptsize\bfseries,
    anchor=center,
    inner sep=0pt
  ]
  at ($(hlMD)+(0pt,6pt)$)
  {\systemname (ours) can certify \certificate};

\end{tikzpicture}

\vspace{15pt}


\caption{\looseness=-2\relax
\textbf{Disclosure settings and what is certified.} Our method can protect against dataset inclusion
and exclusion attacks in the fully open setting, and data inclusion attacks
in the open-weight setting.}
\label{tab:release-settings}
\end{wrapfigure}

We want a system that can give a verifier confidence that a model was trained by some input stream with no undeclared updates. To do this, we propose a \textbf{training witness certificate}, which is a cryptographically secure, append-only records chain that establishes facts about a training run. The trainer produces a witness so that any verifier can check for any data sample's inclusion and exclusion. We consider two settings in \cref{tab:release-settings}. For \emph{open training}, the verifier has access to model and training data; for \emph{open weight}, the verifier has access to final model only.
We support two queries: 
\paragraph{Membership Queries} Data \textit{membership} queries check if a sample was used in training, defending against \emph{inclusion attacks} where a trainer uses undeclared data (e.g., test-set contamination). 

\paragraph{Exclusion Queries} Data \textit{exclusion} queries check if a sample was not used in training. These queries defend against \emph{exclusion attacks}, in which a trainer falsely claims to have withheld a sample.

\vspace{-1em}
\subsection{Defenses and Assumed Capabilities}
We set up our task as a three-party game as shown in Figure~\ref{boundaries-IP}, with the Trainer (\emph{Alice}, $A$), an untrusted training program; Prover (\emph{Bob}, $B$), an untrusted prover program collocated with the Trainer; and Verifier (\emph{Victor}, $V$), a trusted verifier with access to the full initial training state $(\theta_{0},s_{0})$. 

\paragraph{Attacker Abilities} $A$ is malicious and intends to generate a valid proof tape which contains forged information. We assume that $A$ is a computationally bounded polytime adversary which cannot find collisions in the usual computational security of hash, signature, and commitment schemes. We consider $A$ to have full user-level control over the training process, including the training script, data pipeline, memory, and storage. We assume no boundary between $A$ and $B$, allowing collusion. 

\paragraph{Algorithmic Guarantees} Our tape must be difficult to fabricate, meaning a valid tape that's been modified should become invalid. We also guarantee perfect completeness (all honest $A$ can produce a valid tape) and probabilistically-checkable soundness ($A$ can cheat with probability that's an epsilon-function of the computation that $B$ intends to spend in constructing the tape).

\paragraph{Practical Guarantees} Our formal algorithm shifts the trust of an expensive training procedure to trusting the correctness of cheap hashing steps. To instantiate this trust efficiently, we anchor a white-box crytographic attestor as a part of $B$. Specifically, we ship an obfuscated virtual-machine with our system which measures the rest of the system and derives cryptographic primitives. We make the engineering assumption that $A$ cannot forge the behavior of this VM, cannot lift the code of the VM, or modify the code of this VM without breaking the integrity of the tape. 


\begin{figure}[h]
    \centering
    \includegraphics[width=\linewidth]{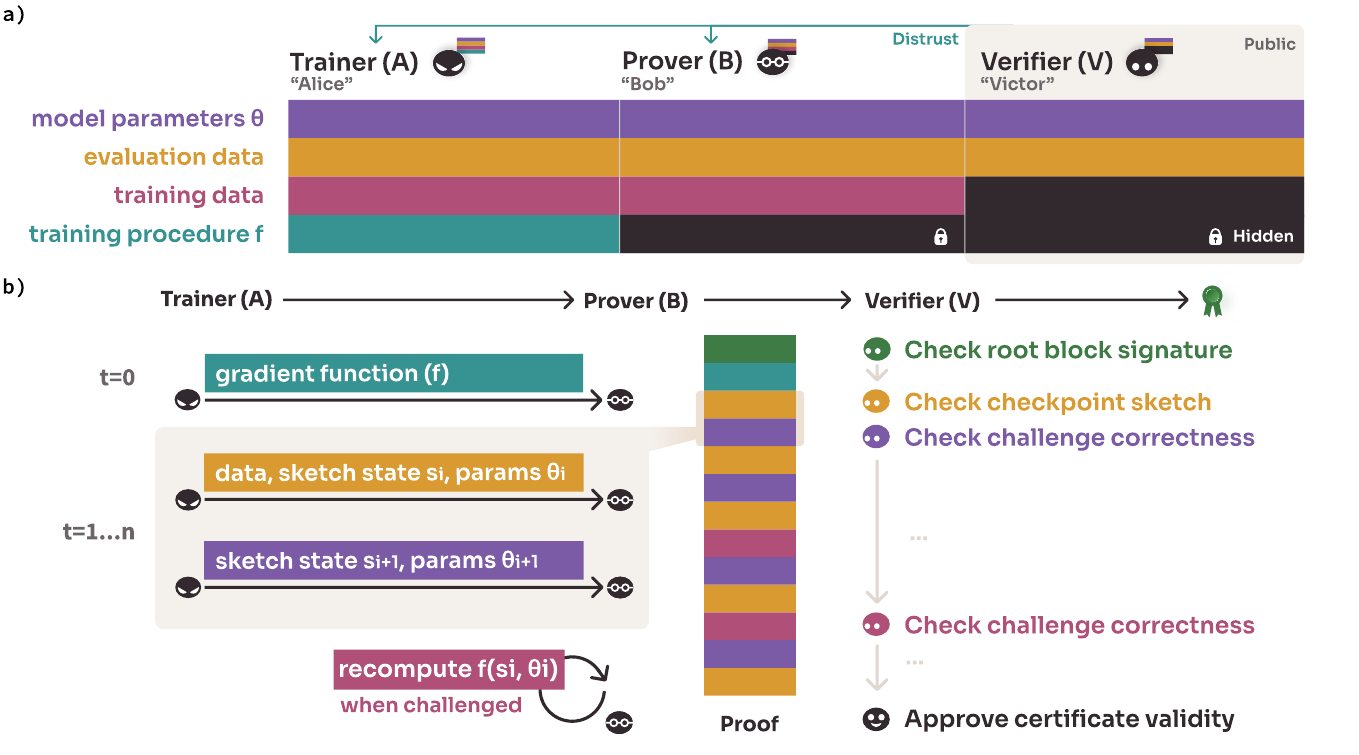}
    \caption{\looseness=-2\relax \emph{(Top)} \textbf{Privacy and trust boundaries between our three interacting parties.} The prover obtains a black-box functional training procedure, and the verifier obtains only a proof of training data with minimal privacy leakage. \emph{(Bottom)} \textbf{Interaction protocol between the parties.} The \systemname system uses a communicating algorithm between a potentially-malicious trainer and prover to generate a privacy-preserving \textbf{public training witness} which any verifier can cheaply check the validity of.}
    \label{boundaries-IP}
\end{figure}
\section{Certification Protocol}
\label{sec:algo}

\paragraph{Tape Structure} We define $\bar{\tau} = \qty(R, M, u_{0}, \dots, u_{n-1})$. The logical tape consists of a root record $R$, a measurement record $M$ ($\tau_{-1}:=\qty(R,M)$), and contains $n$ transitions $u_0 \dots u_{n-1}$ within. We describe the construction of the root record in \cref{appx:sec:root} and describe the measurement record in \cref{appx:sec:measure}. In order to ensure the integrity of the proof tape, we derive the final tape $\tau$ by a cryptographically signed and ratcheted hash-chain. Logically, each link in $\tau_{i} = f_{\text{chain}_{i}}\qty(\tau_{i-1}, \bar{\tau}_{i})$ contains information about the previous record already inserted, and a ratcheted cryptographic signature. We describe this system in further detail in \cref{appx:sec:rachet}. Applying $f_{\text{chain}}$ throughout the tape construction gives the final tape $\tau = \qty(\tau_{0}, \dots, \tau_{n-1})$.

\paragraph{Construction Responsibilities} $A$ is the untrusted trainer; $B_{\text{host}}$ is the host program of the untrusted verifier; and $B_{\text{att}}$ is the trusted attestation engine which is bootstrapped by $B_{\text{host}}$. There are two ways $B_{\text{att}}$ and $B_{\text{host}}$ can be linked. For proofs, we assume the two modules are linked through a cryptographically secure attestation method such as a standard zkVM \citep{risczero2023zkvm}; to speed up our practical implementation, we also introduce an engineering tradeoff which uses whitebox-cryptography \citep{chow2003whitebox} for attestation. Please note, importantly, these choices are consistent with our algorithmic and practical threat models, respectively.

\paragraph{Hashing} We use a series of $\mathcal{H}_{*}$ hash operations throughout which are to be understood as cryptographically ratcheted, stamped hashes. The structure of these hashes is discussed in \cref{sec:commit:hashes}.

\subsection{Update Records}
A logical ``update'' record consists of a series of records which instrument a particular declared transition. The high-level goal is to record the executed transition function $\tilde{f}\qty(\theta_{i}, s_{i}, d_{i}) = \theta_{i+1}, s_{i+1}$ over parameters $\theta_{i}$, optimizer states $s_{i}$, and input data batch $d_{i}$ and check it against a declared $f$.

Update records attest $\tilde{f}$ by committing the output of parameter, state, and data sketch functions $C_{\theta},C_s,C_d$, and check that $\tilde{f} \cong f$, the attested ``true'' update, by probabilistic challenges controlled by $p_{\text{check}}$. Specifically, each update record binds the sketch of the parameter and optimizer state before update $C_{\theta}\qty(\theta_{i}), C_{s}\qty(s_{i})$, the update data itself, $C_{d}\qty (d_{i})$, and the sketch of the parameter and optimizer state after update $C_{\theta}\qty(\theta_{i+1}), C_{s}\qty(s_{i+1})$. We then sample and replay records with probability $p_{\text{check}}$ to check that the transition follows the declared $f$. This record is described in detail in \cref{sec:appx:update}.

\label{sec:update}

\subsection{State Commitments}
\label{sec:sketch}
So far, our design entails effectively transcribing every training step, attesting the transitions cryptographically, and challenging them on occasion. A naive choice of $C_{\theta}$, $C_{s}$, and $C_{d}$ would make the transcript prohibitively long, slow to produce and check. We address this with three sketches:

First, we introduce in \cref{sec:sketch:param} a novel \textit{quasi-geometric} commitment for parameters $C_{\theta}$ which is designed both to attest the identity of the parameters between commitments and to ensure that future verifiers can easily link any checkpoint against a commuted fingerprint geometrically.

Second, we use a combination of a cryptographically secure hash and a Karp-style rolling hash for the data commitment $C_{d}$, which ensures both that downstream verifiers can check any fragment of data used in training and bind the data to the tape. We discuss this in \cref{sec:karp}.

Furthermore, we use a \textit{non-geometric} hash for the optimizer state $C_{s}$ designed to ensure the identity between two distinct commitments of $s_{i}$ (i.e., the one in $\tau_{i}$ and the one in $\tau_{i+1}$) remains the same. We discuss this standard multi-round permutation non-geometric hash in \cref{sec:sketch:optim}.

\subsubsection{Geometric Parameter Sketch}
\label{sec:sketch:param}
Let us first turn to our quasi-geometric parameter commitment $C_\theta$ using orthonormal projections.

\begin{wrapfigure}{l}{0.6\textwidth}
\vspace{-1em}
\begin{minipage}{\linewidth}

\vspace{2pt}
\noindent\rule{\linewidth}{0.8pt}

\captionof{algorithm}{\textsc{SketchCommit }$C_{\theta}(\theta_i, p_{\mathrm{sketch}})$}
\label{alg:sketch-commit}

\vspace{2pt}
\hrule
\vspace{4pt}

\begin{algorithmic}[1]
\Require Parameters
$\theta_i = (\theta_i^{(1)}, \ldots, \theta_i^{(m)})$
\Require Sketch fraction $p_{\mathrm{sketch}} \in (0,1]$
\Require Fixed orthonormal sketch matrices
    $\{Q_j\}_{j=1}^{m}$
\Ensure $C_{\theta}(\theta_i)$

\State $C^{(0)} \gets \bot$
    \Comment{Initial commitment}

\For{$j \gets 1$ \textbf{to} $m$}
    \State $d_j \gets \operatorname{dom}\qty(\theta_i^{(j)})$
    \State $l_j \gets
        \left\lceil p_{\mathrm{sketch}} \cdot d_j \right\rceil$
    \State $l_j \gets \min\{\max\{l_j,1\},d_j\}$

    \Statex
    \State \textbf{assert} $Q_j \in \mathbb{R}^{d_j \times l_j}$ \Comment{Domain is $p_{\text{sketch}}$ portion of leaf}
    \State \textbf{assert} $Q_j^{T}Q_j = I_{l_j}$\Comment{Orthonormality}

    \State $r_i^{(j)} \gets \theta_i^{(j)} Q_j$
        \Comment{Project leaf}

    \State $r_i^{(j)\prime}
        \gets
        \mathcal{H}_{\mathrm{sketch}}
        \qty(
            i,\,
            j,\,
            \operatorname{vec}\qty(r_i^{(j)})
        )$
        \Comment{Hash projection}

    \State $C^{(j)}
        \gets
        \mathcal{H}_{\mathrm{sketch}}
        \qty(
            r_i^{(j)\prime},\,
            C^{(j-1)}
        )$
        \Comment{Merkle fold}
\EndFor

\State \Return $C_{\theta}(\theta_i) \gets C^{(m)}$
\end{algorithmic}\label{sec:alg:sketchcommit}
\end{minipage}
\vspace{4pt}
\noindent\rule{\linewidth}{0.8pt}
\vspace{-2em}
\end{wrapfigure}

\paragraph{Why Just a Low-Rank Projection} We design here a very simple scheme to quickly bind model parameters to tape: just project the weights $\theta_{i}$ against fixed orthonormal vectors. Our key observation, proven in \cref{lem:psb:boundsurviv}, is that modifications to model parameters which survive a random orthonormal matrix $Q_{j}$ cannot be sparse because such random projections are almost surely full spark. Thus, an invalid update either has to touch a lot of coordinates (making it more easily detectable by the challenge in \cref{sec:update}) or has to show up on this sketch.

\paragraph{High-Level Design} The sketch (\cref{sec:alg:sketchcommit}) involves caching projections of orthonormal vectors $Q_{j}$ against each weight matrix $\theta$, thus ``fixing'' degrees of freedom each matrix can express. We then perform a Merkle fold to reduce probes to a single hash.

\subsubsection{Batch Commitments}
\label{sec:karp}

To attest the actual data being used, we also commit hashes of input data $d_i$ during each transition. We use two hashes for this: first, a cryptographically secure hash $C_{d}$ commits the data to ensure sampling is random with respect to data, and a fast Karp-style rolling hash $F_{d}$ attests to the downstream verifier what data is actually being used. Details of both hashes are discussed in \cref{appx:sec:batch-fingerprinting}.


\section{Protocol Guarantees}
\label{sec:security}

\begin{figure}[h]
    \centering
    \includegraphics[width=\linewidth]{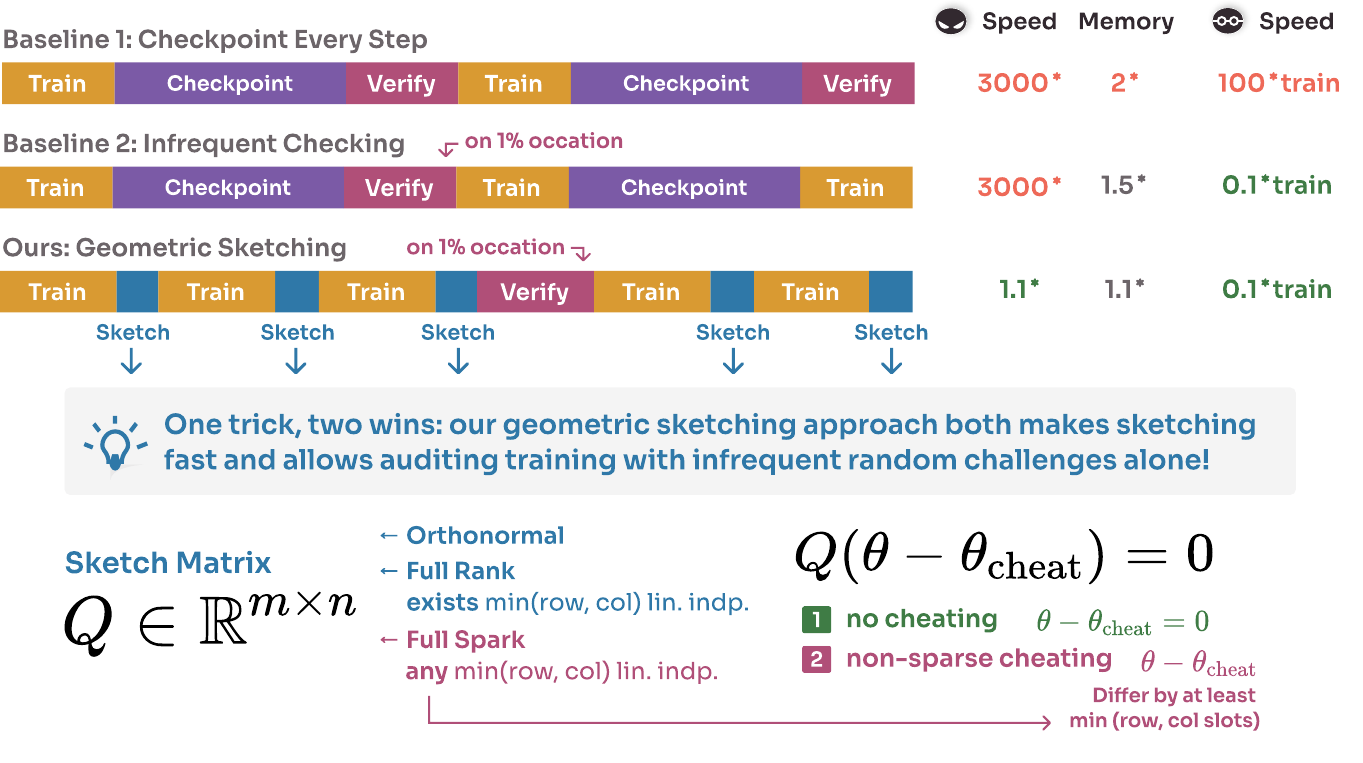}
    \caption{\looseness=-2\relax \textbf{Our protocol's key security insight}. By using a geometric sketch, we can achieve fast binds of our model parameters to a certificate tape while ensuring that cheating which breaks the continuity of the tape cannot be done sparsely. Our protocol is cheap, fast and has a low memory footprint.}
    \label{sketching-insight}
\end{figure}

Here, we sketch out the argument that the protocol given in \cref{sec:algo} admits a probabilistically checkable proof of training data usage. Below we present the proof for both the \emph{soundness} and \emph{completeness} of the protocol. A more detailed proof of soundness can be found in \cref{appx:sec:proofs}. In particular, \cref{appx:sec:threat-model-guarantees} gives the threat model axioms that we rely on. 

We start by defining $I_{f}\qty(\tau)$, the number of invalid transitions recorded in $\tau$ under $f$: $I_{f}\qty(\tau) = \left|\{i \in \{0,\dots,n-1\} \mid \qty(\theta_{i+1}, s_{i+1}) \neq f\qty(\theta_{i}, s_{i}, d_{i})\}\right|$. $\theta_{i}, \theta_{i+1}, s_{i}, s_{i+1}, d_i$ are transitions in $\tau$. We also generate a single-use nonce $\nu$ to be used throughout the proof.

\subsection{Completeness}
We want all valid training programs to produce valid tapes; i.e., we'd like our protocol to be \textit{complete}.

  \begin{theorem}[Completeness of the Protocol]
    \begin{equation}
      \Pr\qty[V\qty(\tau,c,\nu,\theta_0,s_0,\mathcal H_{\text{func}}\qty(f),\theta_f)=1\mid I_f\qty(\tau)=0,\theta_f\in\tau]=1
    \end{equation}
  \end{theorem}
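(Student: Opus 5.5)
We prove completeness by unfolding the verifier $V$ into its individual checks and showing that each one passes deterministically whenever the tape was produced honestly ($I_f(\tau)=0$) and the claimed final model $\theta_f$ is the one committed in $\tau$. The probability in the statement is taken over the verifier's challenge coins $c$ (the replay set, drawn with rate $p_{\text{check}}$) and the nonce $\nu$. It therefore suffices to show that the acceptance predicate holds for \emph{every} realization of $(c,\nu)$. We will not use any probabilistic argument, only determinism of the honest computation.

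\textbf{Step 1: chain integrity.} For an honest $A$ and $B$, each link is computed as $\tau_i=f_{\text{chain}_i}(\tau_{i-1},\bar\tau_i)$ starting from $\tau_{-1}=(R,M)$. Here $R$ is built from $(\theta_0,s_0)$ and $M$ records $\mathcal H_{\text{func}}(f)$ from the sealed, stateless serialized update. The verifier recomputes the ratcheted hashes $\mathcal H_*$ and checks the signatures. Since these are deterministic functions of the same inputs, every recomputation matches. Since signatures are perfectly correct for honestly generated keys, every signature verifies. This covers the root check against $(\theta_0,s_0)$, the measurement check against $\mathcal H_{\text{func}}(f)$, and the link check at each $i$.

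\textbf{Step 2: sketch consistency.} For each $i$, the record $u_i$ binds $C_\theta(\theta_i)$ and $C_s(s_i)$ before the update and $C_\theta(\theta_{i+1})$ and $C_s(s_{i+1})$ after it. The ``after'' sketch of $u_i$ and the ``before'' sketch of $u_{i+1}$ are computed by Algorithm~\ref{alg:sketch-commit} and by $C_s$ on the same objects with the same fixed $Q_j$. They therefore coincide exactly, so all continuity checks pass. The first record's ``before'' sketch equals the verifier's own recomputation from $(\theta_0,s_0)$. Because $\theta_f\in\tau$, meaning $\theta_f=\theta_n$, the verifier's recomputed $C_\theta(\theta_f)$ equals the final committed sketch. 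The data commitments $C_d(d_i)$ and the rolling hashes $F_d(d_i)$ are likewise recomputed from the declared stream and match.

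\textbf{Step 3: replay challenges.} This is the main obstacle. For any challenged index $i\in c$, $V$ re-executes the sealed $f$ on the revealed $(\theta_i,s_i,d_i)$ and compares $C_\theta$ and $C_s$ of the output against the committed values. Completeness requires that the output be \emph{bitwise} identical to the trainer's. Otherwise even an honest run could fail a hash comparison, since $C_\theta$ is hashed after projection and not compared up to tolerance. We will argue this from the stateless accelerator serialization property cited for update-function sealing: the exported StableHLO/JAX program is a pure function of its inputs, with any randomness derived from committed seeds. Re-execution under the same serialized program therefore reproduces $\theta_{i+1},s_{i+1}$ exactly. We will state this as the threat-model axiom from \cref{appx:sec:threat-model-guarantees}, namely deterministic replay of the measured $f$. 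Given $I_f(\tau)=0$, the recorded output then equals $f(\theta_i,s_i,d_i)$, so every challenge passes regardless of which $c$ was drawn.

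\textbf{Conclusion.} Combining the three steps, $V$ accepts for every $(c,\nu)$. The conditional probability in the statement is therefore $1$.
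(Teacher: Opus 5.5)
Your proposal is correct and follows essentially the paper's argument. The paper's steps are these: the sketches are deterministic in their inputs and in the declared randomness, meaning the same $Q_j$ and, for $C_s$, the same hidden boundary family applied to both copies of $s_i$. Induction from $(\theta_0,s_0)$ then gives matching adjacent pre/post commitments, so $V_{\text{tape}}$ and $V_{\text{bind}}$ pass, and $I_f(\tau)=0$ makes $V_{\text{replay}}$ pass. Your chain/signature step and your bitwise-replay remark only make explicit what the paper leaves implicit; note that the paper gets replay determinism from modeling $f$ as a function under exact arithmetic, not from a separate threat-model axiom, and no such axiom appears in its list.

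A few details are misread, though none affects the conclusion:
\begin{itemize}
\item $c$ is the configuration, not the challenge coins; challenges are derived via $\text{PRNG}$ from $\nu$ and the tape.
\item The replay is run by $B$, and $V$ only checks its signed record.
\item $\theta_f\in\tau$ means $\theta_f$ is bound to some checkpoint record, not that $\theta_f=\theta_n$. That record's $h_i^{\text{ckpt}}$ and $q$ sampled values should also be noted as matching by determinism.
\end{itemize}
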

\begin{proof}
  For all $\theta,s$, our sketches are all deterministic functions of their inputs, configuration $c$, and declared protocol randomness, including the shared boundary family, so $x=x' \implies C_{s}\qty(x) = C_{s}\qty(x')$ and $C_{\theta}\qty(x) = C_{\theta}\qty(x')$. The verifier-supplied $(\theta_0,s_0)$ gives the base case. Thus by induction over the tape updates, we have that $C_{s}^{\text{post}}\qty(u_{i}) = C_{s}^{\text{pre}}\qty(u_{i+1})$ and $C_{\theta}^{\text{post}}\qty(u_{i}) = C_{\theta}^{\text{pre}}\qty(u_{i+1})$ for all $i\in\qty{0,\dots,n-2}$. Thus $V_{\text{bind}}, V_{\text{tape}}$ will not reject. Furthermore, since every recorded transition agrees with $f$, $V_{\text{replay}}$ will not reject. Finally, since $\theta_{f} \in \tau$, $V_{\text{bind}}$ will not reject based on our sketch's completeness. Thus $V\qty(\tau,c,\nu,\theta_0,s_0,\mathcal H_{\text{func}}\qty(f),\theta_f)=1$.
\end{proof}

\subsection{Soundness} To ensure that we  prevent cheating, we show that the probability of a successful cheat is bounded:

  \begin{theorem}[Soundness of the Protocol]
    \begin{equation}
      \Pr\qty[V\qty(\tau,c,\nu,\theta_0,s_0,\mathcal H_{\text{func}}\qty(f),\theta_f)=1\mid I_f\qty(\tau)\geq k]\leq\varepsilon_1\qty(k,c)
    \end{equation}
    and
    \begin{equation}
      \Pr\qty[V\qty(\tau,c,\nu,\theta_0,s_0,\mathcal H_{\text{func}}\qty(f),\theta_f)=1\mid I_f\qty(\tau)=0,\theta_f\notin\tau]\leq\varepsilon_2\qty(c)
    \end{equation}
    where $\varepsilon_{1}, \varepsilon_{2}$ are defined as in \cref{prop:psb:boundsurviv-ckp,prop:ckp}, and $k$ is the count of invalid updates.
  \end{theorem}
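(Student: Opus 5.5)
The two bounds will be handled separately, each reduced to a property of one verifier component ($V_{\text{tape}}$, $V_{\text{bind}}$, $V_{\text{replay}}$) plus the threat-model axioms of \cref{appx:sec:threat-model-guarantees}. The first step is common to both. Under the assumption that $A$ is polytime and cannot find hash, signature or commitment collisions, and that $B_{\text{att}}$ is a sound attestation (zkVM in the formal model), any tape accepted by $V_{\text{tape}}$ fixes a unique sequence of committed records $u_0,\dots,u_{n-1}$. Each record carries sketches $C_\theta,C_s,C_d$ before and after the transition, and the tape is bound to the nonce $\nu$ and to $\mathcal H_{\text{func}}(f)$. Up to a negligible term $\mathrm{negl}$ for forging the ratcheted chain, the adversary's only freedom is therefore the choice of actual states $(\theta_i,s_i,d_i)$ behind the commitments. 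Since the challenge indices are derived from $\nu$ after the tape is sealed, the adversary cannot adapt the invalid steps to the challenges.

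For the first bound, condition on $I_f(\tau)\ge k$. I will split the invalid transitions into two cases.

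\textbf{Case (a): the invalidity appears in the committed sketches.} Here the committed post-state sketch disagrees with $C_\theta,C_s$ of $f(\theta_i,s_i,d_i)$. Each such step is sampled for replay independently with probability $p_{\text{check}}$, and a sampled step is caught by $V_{\text{replay}}$ because the sealed, stateless $f$ replays exactly. So all $k$ steps evade replay with probability at most $(1-p_{\text{check}})^k$.

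\textbf{Case (b): the cheat hides from the sketch.} The adversary either fabricates states consistent with the sketches or substitutes states between records. Chaining continuity means the substitution must satisfy $\theta'Q_j=\theta Q_j$, i.e., the difference $\Delta$ lies in $\ker Q_j^T$. By \cref{lem:psb:boundsurviv}, $Q_j$ is almost surely full spark, so every nonzero such $\Delta$ has at least $l_j+1$ nonzero coordinates. I will then invoke \cref{prop:psb:boundsurviv-ckp}, which turns this non-sparsity into a per-step detection probability under the challenge. Combining both cases gives $\varepsilon_1(k,c)$ as defined there: roughly $(1-p_{\text{check}})^k$ plus the non-sparse surviving-modification term plus $\mathrm{negl}$. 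A union bound over the $m$ leaves and over the collision events completes this part.

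For the second bound, condition on $I_f(\tau)=0$ and $\theta_f\notin\tau$. Every transition is valid, so the only attack left is to claim a final model $\theta_f$ that differs from the tape's terminal state $\theta_n$ while passing $V_{\text{bind}}$. That requires $C_\theta(\theta_f)=C_\theta(\theta_n)$. Either the hash $\mathcal H_{\text{sketch}}$ collides (negligible), or $(\theta_f-\theta_n)Q_j=0$ for all $j$. The latter again forces a dense difference by full spark, and \cref{prop:ckp} bounds the acceptance probability of such a $\theta_f$ under the verifier's checkpoint or binding check by $\varepsilon_2(c)$.

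The main obstacle is case (b) of the first bound. The difficulty is turning ``full spark implies dense'' into an actual probability bound, because a dense modification that is consistent with the sketches is not automatically caught by replay. If that step cannot be established directly, I would rely on \cref{prop:psb:boundsurviv-ckp} as stated and argue only that every cheating strategy falls into either case (a) or case (b). I would also make explicit that $Q_j$ is sampled or fixed independently of $A$'s choices, so that the almost-sure full-spark property applies to adversarially chosen $\Delta$.
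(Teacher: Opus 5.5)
\textbf{There is a genuine gap in both bounds.} At the two nontrivial steps you defer to \cref{prop:psb:boundsurviv-ckp} and \cref{prop:ckp}. Those \emph{are} the two inequalities of the statement (the main-text theorem only names them), so the argument is circular exactly where work is needed.

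For the first bound, the missing mechanism is the uniformly sampled parameter slot $x$ that is replayed and attested in $u_i^{(4)}$ on every challenged update. Applying \cref{lem:psb:bound} row by row gives $\norm{\Delta}_0\geq G(\Delta)$ for any sketch-invisible $\Delta\neq 0$. (Full spark is \cref{lem:psb:spark}; \cref{lem:psb:boundsurviv}, which you cited for full spark, is precisely the step you say you cannot establish.) A challenged step therefore catches such a $\Delta$ with probability at least $G(\Delta)/|\Theta|$, so per-step survival is at most $1-p_{\text{check}}G(\Delta)/|\Theta|+\operatorname{negl}(\lambda)$. This removes your ``main obstacle'': a dense, sketch-consistent change \emph{is} caught by replay, with probability proportional to its density.

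Your combination step also fails to produce an amplified bound. Writing ``$(1-p_{\text{check}})^k$ plus a non-sparse term plus $\operatorname{negl}$'' and closing with a union bound leaves a term that does not decay in $k$ when all invalid steps are of your type (b). It also does not handle a mix of (a) and (b) steps. The paper instead bounds every invalid step, whatever its kind, by the worst case $\bar\epsilon(c)$, using the casework of \cref{lem:bound} on $\theta'\neq\tilde\theta$ versus $s'\neq\tilde s$. The optimizer-state case gives $\epsilon_s=1-p_{\text{check}}$ because the full $s'_{i+1}$ is compared on a challenge. It then multiplies the conditional survival probabilities by the chain rule, using \cref{lem:chain} and challenge freshness, to get $\varepsilon_1(k,c)=\bar\epsilon(c)^k$.

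For the second bound, sketch agreement plus the density argument would only give the non-negligible $(1-G(\Delta)/|\Theta|)^q$ of \cref{coro:psb:boundsurviv-ckp}. The ingredient you are missing is the full cryptographic checkpoint hash $h_i^{\mathrm{ckpt}}$, which $V_{\text{bind}}$ recomputes from $\theta_f$. If that hash is absent from $\tau$, rejection is certain. If it is present while $\theta_f\notin\tau$, then some $\theta_f'\in\tau$ collides with $\theta_f$ under $\mathcal H_{\text{ckpt}}$. Collision resistance alone then gives $\varepsilon_2(c)=\operatorname{negl}(\lambda)$, with no geometric or sparsity argument needed. Note also that $\theta_f\in\tau$ means $\theta_f$ is bound to \emph{some} checkpoint release record, not specifically to the terminal state $\theta_n$.
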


We give formal proofs of this statement in \cref{prop:psb:boundsurviv-ckp,prop:ckp}. At a high level, we demonstrate this by casework. Specifically, in \cref{lem:bound}, we show that there are only three possible ways of sneaking in invalid updates, each of which has bounded probability of success: by swapping out the optimizer state, by swapping out the sketch, or by swapping out both. 

\section{Efficient Implementation}
\begin{table}[h]
\centering
\scriptsize
\setlength{\tabcolsep}{3pt}
\renewcommand{\arraystretch}{0.78}

\begin{tabularx}{\linewidth}{@{}
  >{\raggedleft\arraybackslash}p{0.065\linewidth}
  Y Y Y
@{}}

\toprule

\textbf{Time}
&
\multicolumn{1}{c}{\textbf{Driver (D)}}
&
\multicolumn{1}{c}{\textbf{Worker (W)}}
&
\multicolumn{1}{c}{\textbf{Broker (B)}}
\\

\midrule


\multicolumn{4}{c}{%
  \textit{flowcontrol ready to admit update $i$}%
}
\\

\midrule


$t_1$
&
\eventcell{evOpen}{%
  Open transaction $i$ (\route{evOpen}{D}{B})%
}
&
\eventbridge{evOpen}
&
\eventcell{evOpen}{%
  Commit data $i$, challenge $i-1$%
}
\\

\addlinespace[0pt]


$t_2$
&
\eventcell{evSnapshot}{%
  Request snapshot (\route{evSnapshot}{D}{W})%
}
&
\eventcell{evSnapshot}{%
  Copy $\theta_{i}, s_{i}$
  (\routeiii{evSnapshot}{W}{A}{W})%
}
&
\tikzmark{packedTop}
\\

\addlinespace[0pt]


$t_3$
&
\eventcell{evTraining}{%
  Request training (\route{evTraining}{D}{W})%
}
&
\eventcell{evTraining}{%
  Compute $\tilde{f}, C_{s}, C_{\theta}$
  (\route{evTraining}{W}{A})%
}
&
\\

&
&
\eventcell{evTraining}{%
  Wait for $\tilde{f}$, $C_{s}$, $C_{\theta}$, donate $\theta_{i}$%
}
&
\\

$t_4$
&
\eventcell{evTraining}{%
  Receive results (\route{evTraining}{W}{D})%
}
&
\eventcell{evTraining}{%
  Copy $\theta_{i+1}, s_{i+1}$
  (\route{evTraining}{A}{D})%
}
&
\tikzmark{packedBottom}
\\

\addlinespace[0pt]


$t_5$
&
\eventcell{evSubmit}{%
  Submit results (\route{evSubmit}{D}{B})%
}
&
\eventbridge{evSubmit}
&
\eventcell{evSubmit}{%
  Sign and write to tape%
}
\\

$t_6$
&
\eventcell{evOpen}{%
  Open transaction $i+1$ (\route{evOpen}{D}{B})%
}
&
\eventbridge{evOpen}
&
\eventcell{evOpen}{%
  Commit data $i+1$, challenge $i$%
}
\\


\midrule

\multicolumn{4}{c}{%
  \textit{if unsampled, asynchronously}%
}
\\

\midrule

$t_7$
&
\eventcell{evFree}{%
  Request memory free (\route{evFree}{D}{W})%
}
&
\eventcell{evFree}{%
  Free $s_{i}$ (\route{evFree}{W}{A})%
}
&
Finalize transaction $i$
\\


\midrule

\multicolumn{4}{c}{%
  \textit{else, asynchronously}%
}
\\

\midrule

$t_7$
&
\eventcell{evVerify}{%
  Request verification (\route{evVerify}{D}{W})%
}
&
\eventcell{evVerify}{%
  Compute $\theta_{i+1}-\delta_{i}$
  (\route{evVerify}{D}{A})%
}
&
\eventcell{evVerify}{%
  Publish challenged index
  (\route{evVerify}{B}{W})%
}
\\

$t_8$
&
&
\eventcell{evVerify}{%
  Wait for verified $\theta_{i+1}^{(j)}$%
}
&
\\

$t_9$
&
\eventcell{evVerify}{%
  Submit checked results (\route{evVerify}{D}{B})%
}
&
\eventcell{evVerify}{%
  Copy $\theta_{i+1}^{(j)}$
  (\route{evVerify}{A}{D})%
}
&
\eventcell{evVerify}{%
  Check, sign, and write to cert.%
}
\\

\addlinespace[0pt]

$t_{10}$
&
\eventcell{evFree}{%
  Request memory free (\route{evFree}{D}{W})%
}
&
\eventcell{evFree}{%
  Free $s_{i}$ (\route{evFree}{W}{A})%
}
&
Finalize transaction $i$
\\

\bottomrule

\end{tabularx}

\begin{tikzpicture}[remember picture,overlay]
  \draw[
    black!40,
    decorate,
    decoration={
      brace,
      amplitude=4pt
    },
    thick
  ]
  ([xshift=-2pt,yshift=1.5ex]pic cs:packedTop)
  --
  ([xshift=-2pt,yshift=-1.5ex]pic cs:packedBottom)
  node[
    midway,
    right=7pt,
    align=left,
    font=\scriptsize\itshape,
    text=black!60
  ] {fully packed,\\pipelined};
\end{tikzpicture}

\vspace{2pt}

\begin{adjustbox}{max width=\linewidth}
\begin{tikzpicture}[
    x=1cm,
    y=1cm,
    lane/.style={
        anchor=east,
        text=themeInk,
        font=\footnotesize
    },
    update/.style={
        anchor=south west,
        text=themeInk,
        font=\scriptsize,
        inner sep=0pt
    },
    tick/.style={
        text=themeGrayThree,
        font=\scriptsize
    },
    block/.style={
        rounded corners=0.7mm
    },
]


\def\timeStep{3.0}       
\def\laneGap{0.5}       
\def\railHeight{0.14}    
\def\smallHeight{0.12}   

\def\labelGap{0.55}      
\def\updateLift{0.06}    
\def\rulerDrop{0.48}     


\coordinate (t0)   at (0,0);
\coordinate (t100) at ($(t0)+(\timeStep,0)$);
\coordinate (t200) at ($(t100)+(\timeStep,0)$);
\coordinate (t300) at ($(t200)+(\timeStep,0)$);
\coordinate (t400) at ($(t300)+(\timeStep,0)$);


\coordinate (broker)      at (t0);
\coordinate (accelerator) at ($(broker)+(0,\laneGap)$);
\coordinate (worker)      at ($(accelerator)+(0,\laneGap)$);
\coordinate (driver)      at ($(worker)+(0,\laneGap)$);


\node[lane] at ($(driver)+(-\labelGap,0)$)
    {Driver};

\node[lane] at ($(worker)+(-\labelGap,0)$)
    {Worker};

\node[lane] at ($(accelerator)+(-\labelGap,0)$)
    {Accelerator};

\node[lane] at ($(broker)+(-\labelGap,0)$)
    {Broker};

%

\coordinate (d0) at ($(driver)+(0.15,0)$);
\coordinate (d1) at ($(driver |- t100)+(0.45,0)$);
\coordinate (d2) at ($(driver |- t200)+(0.45,0)$);

\coordinate (d1) at ($(t100)+(0.45,\laneGap*3)$);
\coordinate (d2) at ($(t200)+(0.45,\laneGap*3)$);

\foreach \p/\lab in {
    d0/$i$,
    d1/$i+1$,
    d2/$i+2$
} {
    \begin{scope}
        \clip[block]
            ($(\p)+(-0.00,-\smallHeight/2)$)
            rectangle
            ($(\p)+(1.10,\smallHeight/2)$);

        \fill[evOpen]
            ($(\p)+(0,-\smallHeight/2)$)
            rectangle
            ($(\p)+(0.55,\smallHeight/2)$);

        \fill[evSnapshot]
            ($(\p)+(0.55,-\smallHeight/2)$)
            rectangle
            ($(\p)+(1.10,\smallHeight/2)$);
    \end{scope}

    \node[update]
        at ($(\p)+(0.15,\smallHeight/2+\updateLift)$)
        {\lab};
}


\coordinate (w0) at (worker);
\coordinate (w1) at ($(t100)+(0.10,\laneGap*2)$);
\coordinate (w2) at ($(t200)+(-0.00,\laneGap*2)$);
\coordinate (w3) at ($(t300)+(0.50,\laneGap*2)$);
\coordinate (w4) at ($(t400)+(-0.10,\laneGap*2)$);

\begin{scope}
    \clip[block]
        ($(w0)+(0,-\railHeight/2)$)
        rectangle
        ($(w4)+(0,\railHeight/2)$);

    \fill[evSnapshot]
        ($(w0)+(0,-\railHeight/2)$)
        rectangle
        ($(w1)+(0,\railHeight/2)$);

    \fill[evSnapshot]
        ($(w1)+(0,-\railHeight/2)$)
        rectangle
        ($(w2)+(0,\railHeight/2)$);

    \fill[evSnapshot]
        ($(w2)+(0,-\railHeight/2)$)
        rectangle
        ($(w3)+(0,\railHeight/2)$);

    \fill[evSnapshot]
        ($(w3)+(0,-\railHeight/2)$)
        rectangle
        ($(w4)+(0,\railHeight/2)$);
\end{scope}

\foreach \p in {w1,w2,w3} {
    \draw[
        themePaper,
        line width=0.8pt
    ]
        ($(\p)+(0,-\railHeight/2)$)
        --
        ($(\p)+(0,\railHeight/2)$);
}

\node[update]
    at ($(w0)+(0.15,\railHeight/2+\updateLift)$)
    {$i$};

\node[update]
    at ($(w1)+(0.15,\railHeight/2+\updateLift)$)
    {$i+1$};

\node[update]
    at ($(w2)+(0.15,\railHeight/2+\updateLift)$)
    {$i+2$};


\coordinate (a0) at (accelerator);
\coordinate (a1) at ($(t100)+(0.80,\laneGap)$);
\coordinate (a2) at ($(t200)+(1.10,\laneGap)$);
\coordinate (a3) at ($(t300)+(1.80,\laneGap)$);
\coordinate (a4) at ($(t400)+(-0.10,\laneGap)$);

\begin{scope}
    \clip[block]
        ($(a0)+(0,-\railHeight/2)$)
        rectangle
        ($(a4)+(0,\railHeight/2)$);

    \fill[evTraining]
        ($(a0)+(0,-\railHeight/2)$)
        rectangle
        ($(a1)+(0,\railHeight/2)$);

    \fill[evTraining]
        ($(a1)+(0,-\railHeight/2)$)
        rectangle
        ($(a2)+(0,\railHeight/2)$);

    \fill[evTraining]
        ($(a2)+(0,-\railHeight/2)$)
        rectangle
        ($(a3)+(0,\railHeight/2)$);

    \fill[evTraining]
        ($(a3)+(0,-\railHeight/2)$)
        rectangle
        ($(a4)+(0,\railHeight/2)$);
\end{scope}

\foreach \p in {a1,a2,a3} {
    \draw[
        themePaper,
        line width=0.8pt
    ]
        ($(\p)+(0,-\railHeight/2)$)
        --
        ($(\p)+(0,\railHeight/2)$);
}

\node[update]
    at ($(a0)+(0.15,\railHeight/2+\updateLift)$)
    {$i$};

\node[update]
    at ($(a1)+(0.15,\railHeight/2+\updateLift)$)
    {$i+1$};

\node[update]
    at ($(a2)+(0.15,\railHeight/2+\updateLift)$)
    {$i+2$};


\coordinate (b0) at ($(t100)+(1.00,0)$);
\coordinate (b1) at ($(t200)+(1.10,0)$);
\coordinate (b2) at ($(t300)+(1.90,0)$);

\foreach \p/\w/\lab in {
    b0/1.10/$i$,
    b1/1.10/$i+1$,
    b2/0.90/$i+2$
} {
    \fill[
        evSubmit,
        block
    ]
        ($(\p)+(0,-\smallHeight/2)$)
        rectangle
        ($(\p)+(\w,\smallHeight/2)$);

    \node[update]
        at ($(\p)+(0.15,\smallHeight/2+\updateLift)$)
        {\lab};
}


\coordinate (ruler0)
    at ($(broker)+(0,-\rulerDrop)$);

\coordinate (ruler100)
    at ($(ruler0)+(\timeStep,0)$);

\coordinate (ruler200)
    at ($(ruler100)+(\timeStep,0)$);

\coordinate (ruler300)
    at ($(ruler200)+(\timeStep,0)$);

\coordinate (ruler400)
    at ($(ruler300)+(\timeStep,0)$);

\draw[
    themeInk!45,
    line width=0.5pt
]
    (ruler0) -- (ruler400);

\foreach \p/\t in {
    ruler0/0,
    ruler100/$\sim\!100$,
    ruler200/$\sim\!200$,
    ruler300/$\sim\!300$,
    ruler400/$\sim\!400$\,ms
} {
    \draw[
        themeInk!45,
        line width=0.5pt
    ]
        ($(\p)+(0,-0.035)$)
        --
        ($(\p)+(0,0.035)$);

    \node[
        tick,
        anchor=north
    ]
        at ($(\p)+(0,-0.07)$)
        {\t};
}

\end{tikzpicture}
\end{adjustbox}

\caption{\looseness=-2\relax Execution timelines of our system; colors represent single logical actions. Python driver (D) coordinates the training process, Rust-side worker (W) dispatches and coordinates training, Rust broker (B) coordinates the transitions and records the output certificate, and Accelerator (A) chip.}

\label{tab:system-timeline}

\end{table}

In addition to the algorithm, we also contribute a low-overhead implementation of the protocol by addressing several challenges of a naive implementation.

\paragraph{Sealing Computation} To guarantee that the attested transition function $f$ is of bounded size, we leverage the insight that modern accelerator compilation already requires computation to be stateless and extracts a closed update program. We thus just extract and replay the lowered accelerator compilation result by either the Jax or Torch libraries. We describe this in detail at \cref{sec:systems:seralize}.

\paragraph{Sealing $B_{\text{att}}$} Other than the whitebox attested binary, many Rust worker operations require dropping back to the Python global interpreter lock to perform an operation (e.g., traverse a PyTree). However, shared Python interpreter state may be mutable by future operations, tampering with the integrity of the attested checker program. To address this, we develop a best-effort sealing mechanism for Python global state through AST monitoring, which we expand on in \cref{sec:systems:invoke}.

\paragraph{Worker Concurrency} Our Python API runs inline to user code, dispatching steps for training and generating artifacts as required by \cref{sec:update}. A Rust worker then separately coordinates the actual cryptography and waits for accelerator work to finish before writing the certificate. To ensure Python and Rust concurrent state transfers are coherent, we design a selective state offloading scheme discussed in \cref{sec:systems:offloading}; we further discuss our broker synchronization logic in \cref{sec:systems:flow-control}.

\section{Experiments} \label{sec:expr}


\paragraph{Certificate Integrity} Under a 100M parameter model, we sweep IID random perturbations to the model parameters, and measure the resulting detection rates. 

\paragraph{Training Data Corruption} We corrupt our training sequences with IID random perturbations across different rates, and measure the probability of detection over multiple independent samples. We compare the emperical rates against our theory, statistical methods \citep{choi2023tools}, and a naive full offload-and-challenge baseline \citep{madrigalcianci2026probabilistic}. We aim to establish that we can catch any effectful sample corruption \citep{souly2025poisoning} at robust rates with strong performance.

\paragraph{Sketch Adversarial Optimization} Finally, since the sketch challenge vector is visible in the certificate, a malicious attacker may produce an entirely honest training and then adversarially optimize against the sketch. To address this, we take a 100M parameter model $\theta$ and choose two sets of 64-token sequences $D^{\text{train}}$ and $D^{\text{mal}}$. We use AdamW to minimize:
\begin{equation} \label{eqn:adv}
  \mathcal{L}(\theta;\lambda)
  =
  D_{\mathrm{KL}}^{\mathrm{train}}(\theta_0\Vert\theta)
  -\lambda D_{\mathrm{KL}}^{\mathrm{mal}}(\theta_0\Vert\theta)
  +
  \operatorname{RMS}\!\left(
  \frac{S(\theta)-S(\theta_0)}
       {a_{\mathrm{tol}}+r_{\mathrm{tol}}|S(\theta_0)|}
  \right),
\end{equation} 
where \(S\) is the committed parameter sketch, $\lambda$ is a Pareto tradeoff parameter, $a_{\text{tol}}=1\times 10^{-8}$ and $r_{\text{tol}}=1\times 10^{-5}$ are calibration values for \texttt{float32} precision. In addition to optimizing this loss, we additionally perform the optimization in a subspace orthogonal to the sketch. That is, for $Q$ being the orthonormal basis of the sketch, we update parameters as $\Delta_{\text{attack}} = \Delta_{\text{naive}_{\mathcal{L}}}\qty(I - QQ^{T})$. 

\paragraph{Performance Evaluations} Our primary performance evaluations involve pretraining a billion-param scale dense GPT model with a weight-tied embedding of 100,288 tokens, a significant increase over published baselines. We profile four regimes: 1) 1.03B parameter GPT model under data-parallel training (DDP) across 2 GPUs; 2) 2.02B parameter GPT model under two-way tensor-parallel training (TP-2) across 2 GPUs; 3) roofline analysis of our system under one GPU, training the same 1.03B parameter GPT model as in the DDP case; 4) two-GPU TP-2 scaling evaluation at various scales ranging from 100M parameters to 2B parameters.

\section{Results}\label{sec:res}

\begin{figure}[!htb]
  \centering
  \includegraphics[width=0.83\linewidth,trim=6 7 4 5,clip]{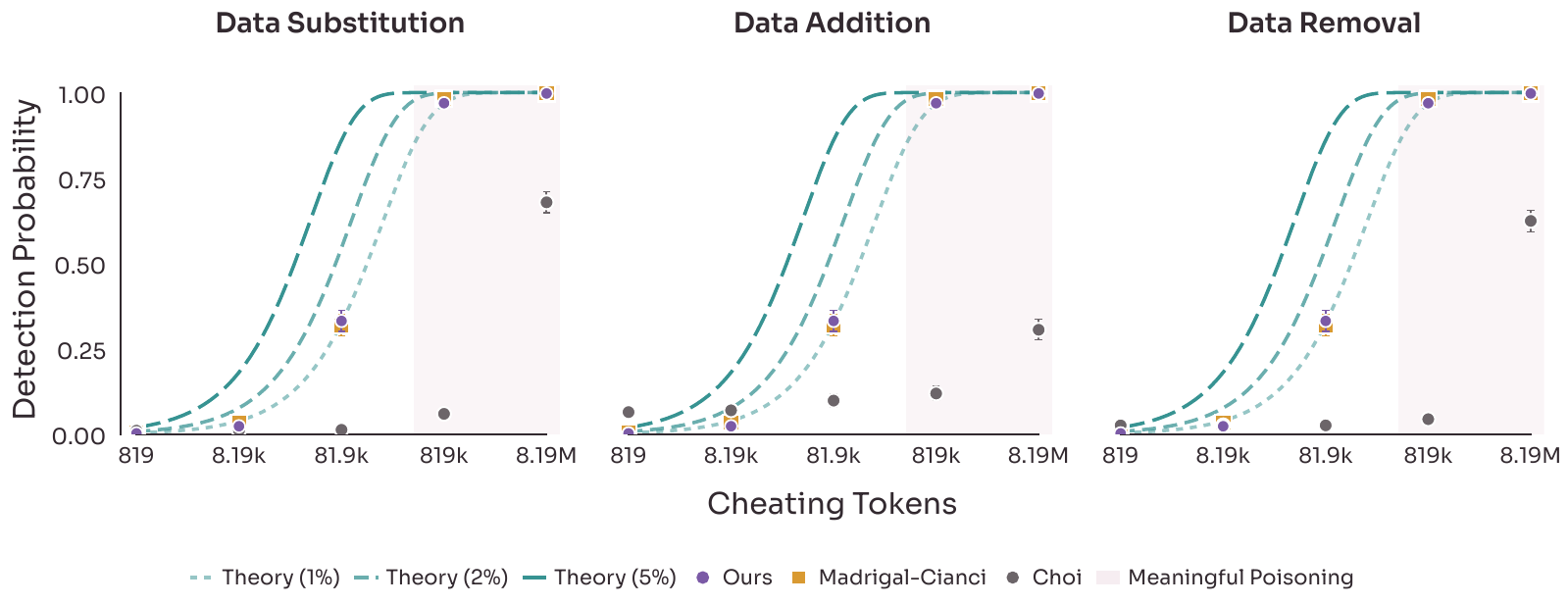}
\caption{\looseness=-2\relax \textbf{Our model achieves near-certain detection rates of data inclusion, exclusion, and substitution attacks at any amounts that would constitute meaningful poisoning}. End-to-end data uinsertion attacks, sweeping across number of tokens inserted. Empirical checks performed during $1\%$ of updates, in line with all performance evaluations. \emph{(Left)}: attack by substituting declared and undeclared sequences. \emph{(Middle)}: attack by adding sequences. \emph{(Right)}: attack by subtracting sequences. Attacks distributed uniformly. Shaded region represents literature-established rates of effective poisoning attacks; error bars are 10-90 Beta posterior credible intervals.}
\label{fig:sec:data-sub}
\end{figure}


\begin{figure}[!htb]
  \centering
  \begin{minipage}{0.48\linewidth}
    \centering
    \includegraphics[width=0.83\linewidth,trim=12 0 7 13,clip]{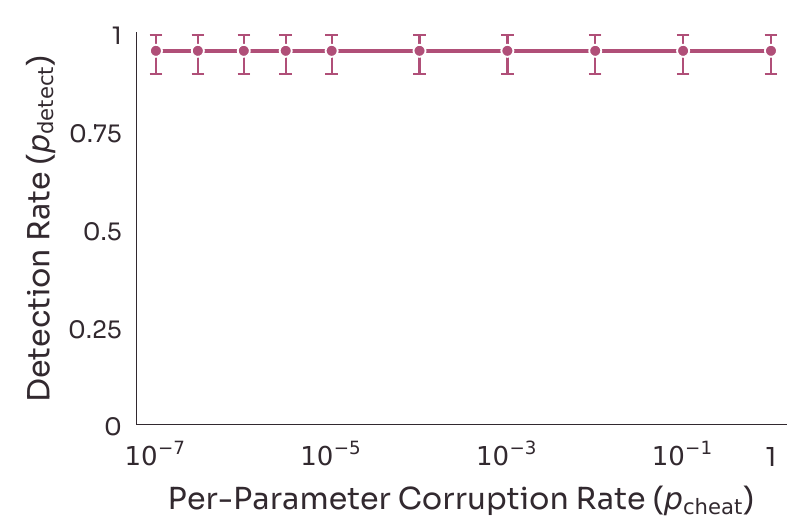}
    \caption{\looseness=-2\relax \textbf{Swapping out parameters uniformly is detected with near certainty}. Certificate rejection rate as a function of per-parameter invalid transition probability at 100M params. Error bars are 10-90 Beta posterior credible intervals.}
\label{fig:sec:parameter-corruption}
  \end{minipage}
  \hfill
  \begin{minipage}{0.48\linewidth}
    \centering
    \includegraphics[width=0.83\linewidth,trim=5 3 2 9,clip]{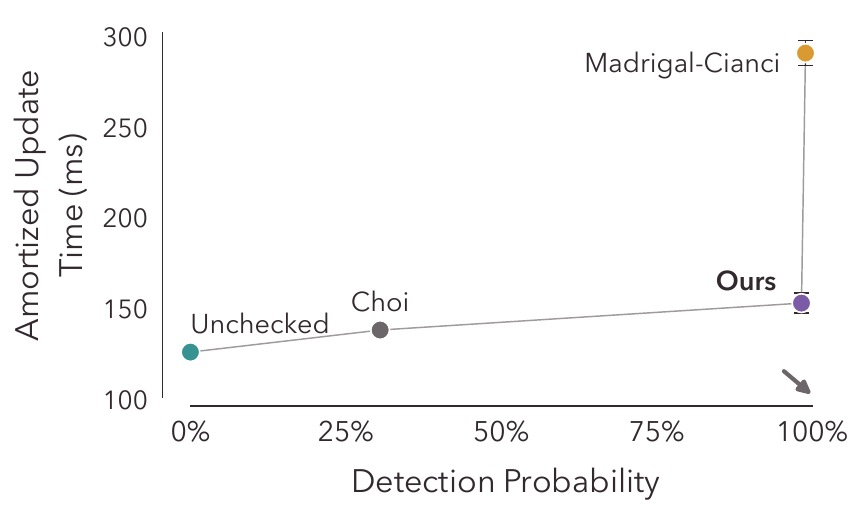}
    \caption{\looseness=-2\relax \textbf{Our method achieves low overheads versus baselines}. Per-step single GPU training overheads versus detection rates on 100M dense GPT-style model on single-GPU with sizes scaled for roofline saturation. Gray arrow shows direction of improvement. NVIDIA A100 PCIe.}
\label{fig:sec:baseline-costs}
  \end{minipage}
\end{figure}

\begin{figure}[!htb]
  \centering
  \makebox[\linewidth][c]{%
    \includegraphics[width=0.405\linewidth,trim=12 11 14 9,clip]{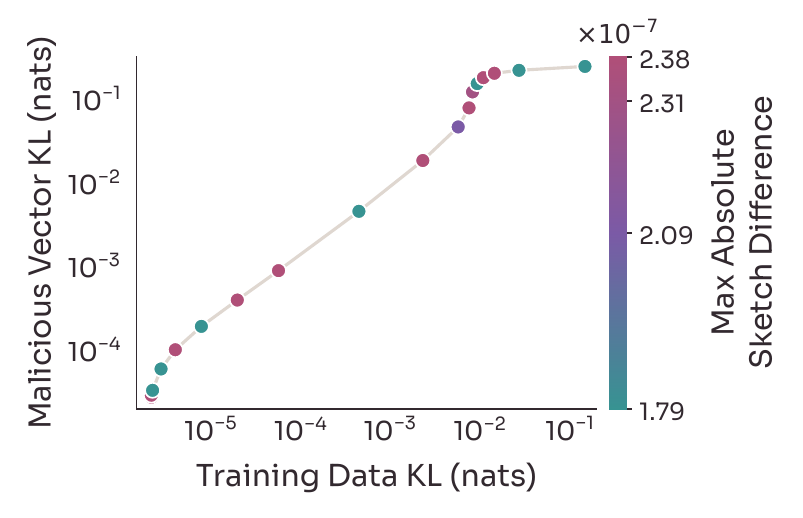}%
    \hspace{0.04\linewidth}%
    \includegraphics[width=0.405\linewidth,trim=12 11 14 9,clip]{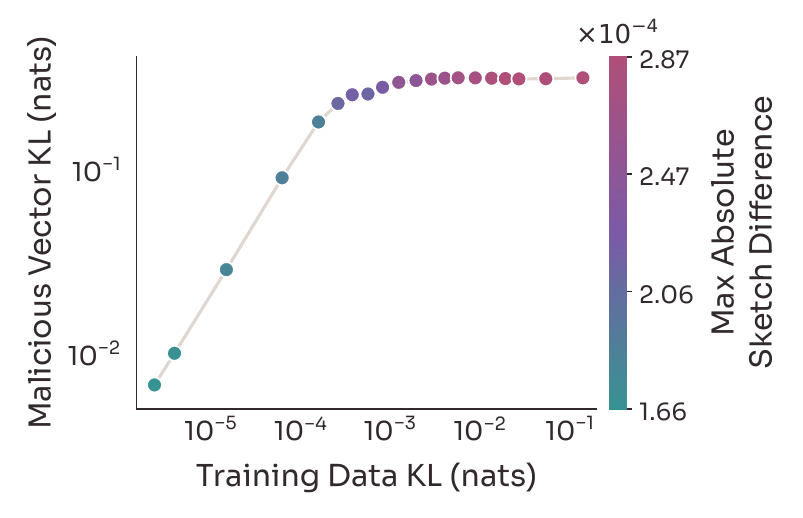}%
  }
  \caption{\looseness=-2\relax \textbf{Even active optimization against our sketch still cannot evade it without meaningfully shifting the model towards training distribution collapse.} We sweep across a Pareto tradeoff parameter $\lambda$ to minimize \cref{eqn:adv}. \textbf{Left}: directly minimizing \cref{eqn:adv} with sketch-nullspace projection. \textbf{Right}: attempting to modify output on malicious KL without sketch error minimization or null space projection. All plotted points detected by sketch at floating point precision $r_{\text{tol}}=1\times 10^{-5}$, $a_{\text{tol}} = 1\times 10^{-8}$.}
\label{fig:sec:optimized-attack}
\end{figure}

\paragraph{Our system achieves strong performance with theoretically-sound detection rates.} In \cref{fig:sec:data-sub}, we show that our end-to-end system can detect all poisoning-regime data substitution attacks with near-certainty, even while checking $1\%$ of transitions. Our system outperforms \citet{choi2023tools} in detection rates, while performing dramatically faster (\cref{fig:sec:baseline-costs}) than offload-and-challenge \citet{madrigalcianci2026probabilistic}. Our detection rates match theoretically estimated detection almost exactly.

\paragraph{Unlike prior work, our system reliably detects both inclusion and exclusion attacks.} Although \citet{choi2023tools} is reasonably fast, a provenance-style statistical check cannot reliably detect data addition attacks \cref{fig:sec:data-sub} (middle). Our system reliably detects both inclusion and exclusion attacks.

\paragraph{Parameter corruption is detected with certainty.} As seen in \cref{fig:sec:parameter-corruption}, our sketch is able to detect in practice any dependently distributed random parameter corruption with almost certainty. In fact, structured random parameter corruption in select subspaces is also reliably detected \cref{fig:sec:structured-attack}.

\paragraph{Strong detection rate persists even through adversarial optimization.} Even despite performing adversarial optimization against our publicly-disclosed sketch, \cref{fig:sec:optimized-attack} highlights that our system is still able to detect any meaningful perturbation to the model outputs. Minimizing the sketch error meaningfully reduces the ability to achieve high KL divergence on malicious samples while maintaining low KL divergence on training samples. Our checkpoint provenance scheme, even despite whitebox access to the sketch, is therefore still secure against trivial offline nullspace attacks.

\begin{figure}[!t]
  \centering
  \makebox[\linewidth][c]{%
    \includegraphics[width=0.435\linewidth,trim=5 4 2 5,clip]{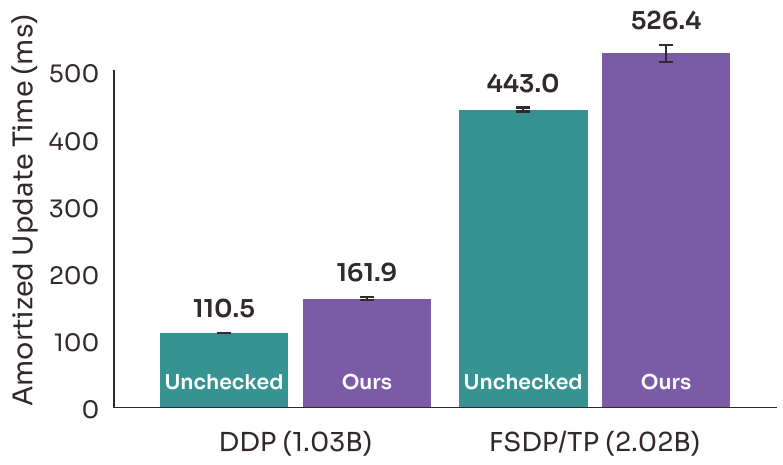}%
    \hspace{0.04\linewidth}%
    \includegraphics[width=0.435\linewidth,trim=5 4 2 5,clip]{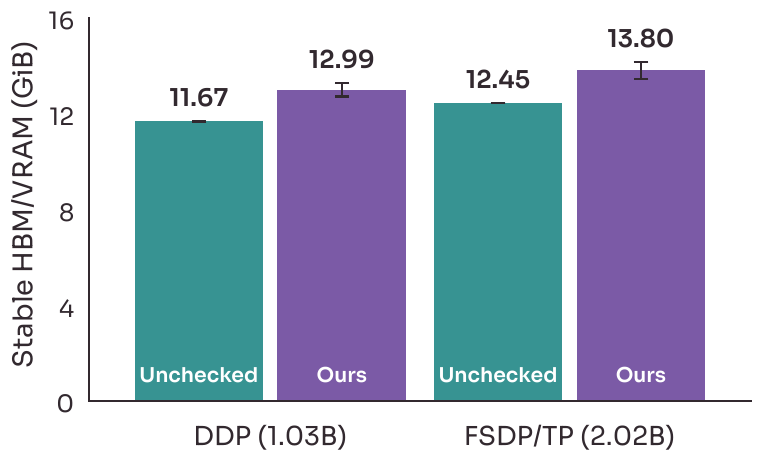}%
  }
\caption{\looseness=-2\relax \textbf{Our method achieves remarkably low overhead at sharded training settings, where it is communication-bound}. Communication-bound overheads. \textbf{Left}: amortized per-step training time; \textbf{Right}: mean HBM resident size (bytes in use) per GPU. Error bars from Student's t-test confidence intervals over 100M tokens of training at $1\%$ check probability. DDP and FSDP sharding conducted over 2 NVIDIA H100 NVL72.}
\label{fig:sec:performance}
\end{figure}

\begin{figure}[!t]
  \centering
  \makebox[\linewidth][c]{%
    \includegraphics[width=0.435\linewidth,trim=11 0 10 8,clip]{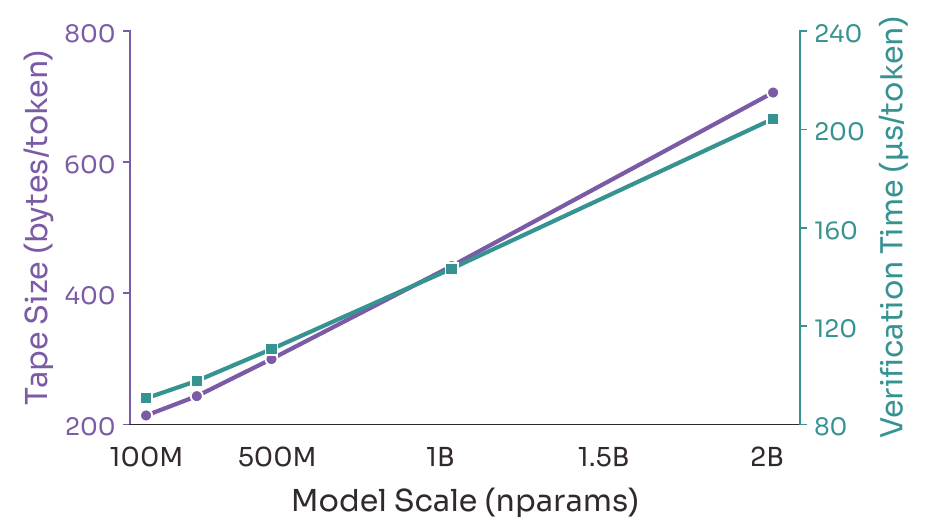}%
    \hspace{0.04\linewidth}%
    \includegraphics[width=0.435\linewidth,trim=20 0 10 11,clip]{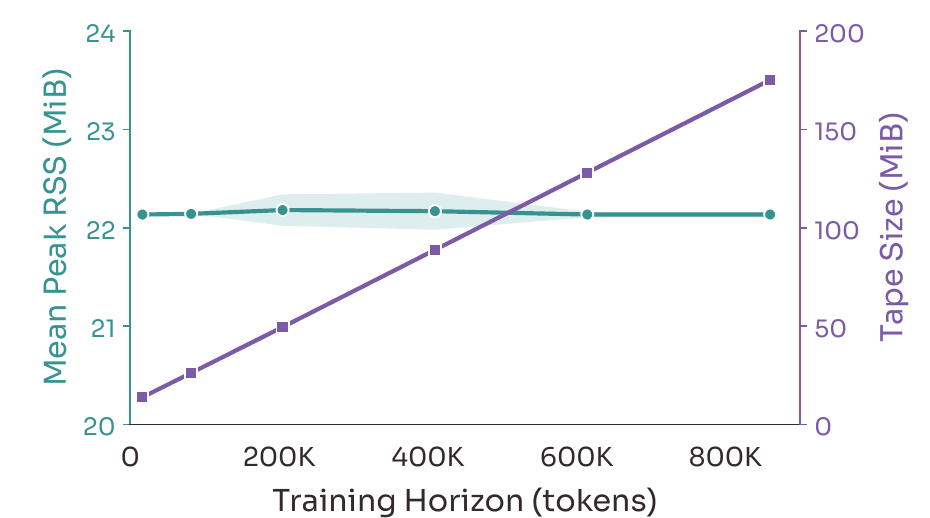}%
  }
  \caption{\looseness=-2\relax \textbf{Our verification system maintains constant memory (RSS) and linear tape size to model size, meaning it can be effectively streamed}. \textbf{Left}: Certificate length and verification time per token across a variety of scales. \textbf{Right}: verification procedure Resident Set Size (RSS) as a function of cert. length. Verification conducted using \texttt{wait4} syscall under a single pinned physical core of an AMD EPYC 9554 processor. Error bars from Student's t-test confidence intervals.}
  \label{fig:sec:verification}
\end{figure}

\paragraph{Our system performs most strongly in sharded, communication-bound settings.} Our headline performance results, $\leq 1.19\times$ speed overhead, are achieved in \cref{fig:sec:performance}, where we are able to elide copying by overlapping them with communication. Saturation experiments in \cref{fig:sec:single-gpu} show that we achieve this without sacrificing the MFU or throughput of underlying models.

\paragraph{Verification is streamable with constant working set.} One tradeoff of our system is that the certificate grows log-log linearly with model parameters. However, \cref{fig:sec:verification} shows that our verification procedure only needs to keep a constant working set of the certificate in memory. A large certificate can thus be effectively verified online in a streaming fashion without ever storing the whole certificate.

\section{Discussion}
In this work, we introduced \systemname, a practical system for attesting training progress of large language models which detects data attacks across 100M-2B parameter GPT-style models trained with both DDP and FSDP. In practice, \systemname is the first attestation system of model training progress demonstrated at billion-scale LLM training with practical overheads, making it immediately applicable at academic and open-weight scales.

Although our current solution extends solely to the process of training, it is general over any closed update function of parameters, and fast enough to deploy at scale. Future work can extend this to support additional forms of (arguably less latency sensitive) workflows. Thus, we believe \systemname opens an exciting new field of work that provides technical governance and integrity to scientific results previously not possible at modern ML-scales. To encourage the adoption, we release \systemname as a leaderboard, described in \cref{sec:leaderboard}, allowing submissions of training runs to be attested and verified by the community. We hope that these tools will be able to meaningfully make model training more reproducible, leading to consistent progress in our exponentially growing field.

\subsection*{Acknowledgments}
We thank our friends and colleagues at Microsoft Research, NYU, and Stanford for their support during this project. Specifically, we thank John Langford, Eshaan Nichani, and Nived Rajaraman at Microsoft for their feedback throughout development. We appreciate the interesting conversations, testing, and feedback from Shikhar Murty, Kyunghyun Cho, Hyojin Bahng, Xi Wang, and Kaden Zheng. Lastly, we thank Chris Manning, Diyi Yang, Dawson Engler, Joseph Shetaye, Amelia Hardy, Tianle Yu, Huxley Marvit, Albert Huang, and Zachary Sayyah for their feedback on drafts of this work. Houjun completed this work as an intern at Microsoft Research, and Pratyusha was a Senior Researcher at Microsoft.



\bibliography{citations}
\bibliographystyle{iclr2027_conference}

\clearpage
\appendix
\let\paragraph\originalparagraph
\startcontents[appendices]
\begingroup
\hypersetup{hidelinks}
{\Large\bfseries Appendix\par}
\vspace{1.5em}
{\large\bfseries Table of Contents\par}
\vspace{0.5em}
\titlecontents{section}
  [2em]
  {\small\bfseries}
  {\contentslabel{2em}}
  {}
  {\hfill\contentspage}
\titlecontents{subsection}
  [4em]
  {\small}
  {\contentslabel{3em}}
  {}
  {\titlerule*[0.5pc]{.}\contentspage}
\printcontents[appendices]{}{1}{\setcounter{tocdepth}{2}}
\endgroup
\clearpage
\section{Related Work}
\label{appx:sec:related-work}

\paragraph{Statistical Training Data Provenance} Our work traces its lineage to systems that check what data is being used during training. Usually, these systems can only establish positive inclusion evidence (``is data X used?''). Methods include using memorization as an inclusion heuristic \citep{choi2023tools}, post-hoc checkpoint evidence \citep{maini2021dataset}, using training order and forgetting dynamics \citep{kuditipudi2025blackbox,zhu2025independence}. Our work follows the same goal as proving training data usage, but also introduces the ability to prove negative evidence (``is data X not used?'') and provides formally guaranteed properties.

\paragraph{Cryptographic Proof of Training} Cryptographic proof of training methods take the opposite tradeoff as statistical ones; instead of checking for evidence of inclusion, these methods certify the process of training using expensive formal proofs. Methods include training computation certificates restricted to tiny architectures \citep{garg2023experimenting,eisenhofer2025verifiable,sun2025zkdl}, zero-knowledge proof systems for fully dense networks \citep{abbaszadeh2024zero}, and formal sketches of training certification with dedicated hardware \citep{peigne2026zero}. We use a probabilistically sampled method, most similar to \citet{madrigalcianci2026probabilistic} but with a fast provenance sketch that dramatically reduces the cost of verification while maintaining amplifiable soundness.

\paragraph{Fast Fingerprints of Training State} A key part of our method involves fast per-step checkpoint evidence, which conventionally performs slowly and is the key drawback of methods like \citet{jia2021proof}. Faster sketch methods include classical rolling hashes \citep{karp1987efficient}, randomized, feature-preserving dimensionality reduction \citep{weinberger2009feature}, and newer decision boundary characterizations such as \citet{yang2026fingerprinting} or data deletion sketches such as \citet{gunn2026sketch}. Our method uses low-rank orthonormal state projections to cheaply constrain degrees of freedom, leading the sketch to be performant enough to run during every update.

\paragraph{Sampled Probabilistic Replay Algorithms} Our method derives its amplifier assurance using a conventional assurance technique of commit and random-check. This technique traces its lineage to classical random verification literature \citep{freivalds1979fast}, which evolved into online partial checking systems for coherence such as \citep{blum1991checking}. \citet{jia2021proof} has also recently used this technique to verify machine learning training, but its reliance on fully streaming frequent chunks of parameter updates makes it impractical at scale.

\paragraph{Fast Parameter Offloading} Finally, to make our system practically feasible, we borrow from the literature of fast parameter streaming. This line of work is most notably shown by the DeepSpeed project in the ZeRO algorithm \citep{rajbhandari2020zero}. Specific methods include eliding GPU memory movement during all gathers \citep{ren2021zerooffload}, offloading only learned sparse projectors \citep{chen2025offloading}, and prioritizing asynchrony using gradient information \citep{lan2025zenflow}. Our work also uses elided offloading, but makes the realization that waiting for offloading to finish is only important during sampled replay steps and thus makes no attempts at intermediate coherence.


\section{The \systemname Leaderboard System}\label{sec:leaderboard}

\subsection{Overall Design}
\label{appx:sec:leaderboard-design}
Our leaderboard system allows users to stream their training runs to a publicly trusted verifier website, which cheaply verifies the training run and queries any data hashes in the streamed certificate against a centralized repository of corpora. If a given sample is found, the website additionally displays the provenance of the sample, percentages of identified text and the corresponding corpus name.

Measurement results from evaluation can be attested in a similar manner, where inputs and outputs of the evaluation are serialized and attested as declared ``update functions'', but verification of the scoring procedure is not performed by the leaderboard. This design allows for arbitrary evaluation processes, including ones that require an environment, to be used.

\begin{figure}
  \centering
  \includegraphics[width=\linewidth]{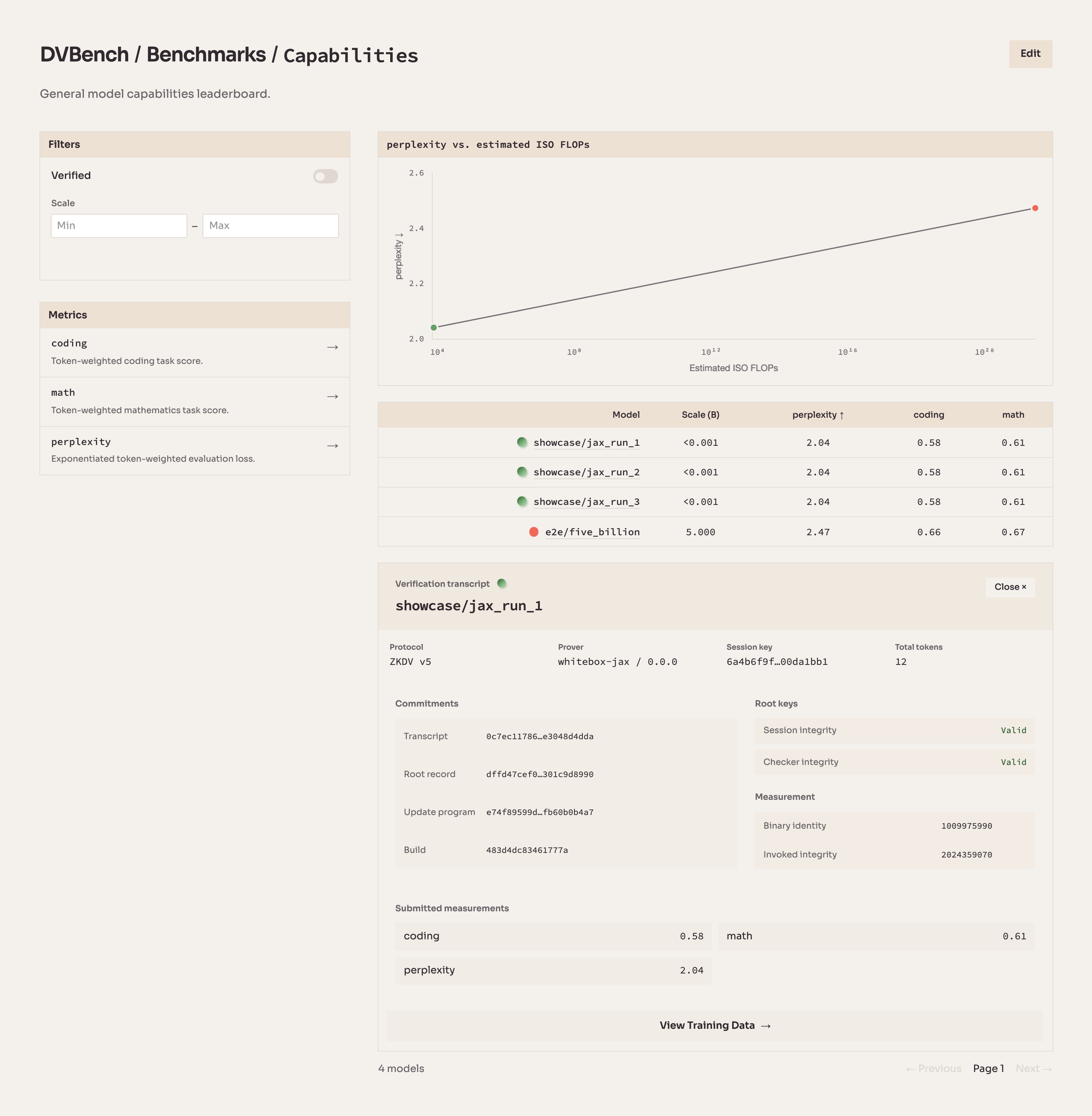}
  \caption{Verification transcript of a training run showing the metadata available to the users. }
  \label{fig:leaderboard-1}
\end{figure}

\begin{figure}
  \centering
  \includegraphics[width=\linewidth]{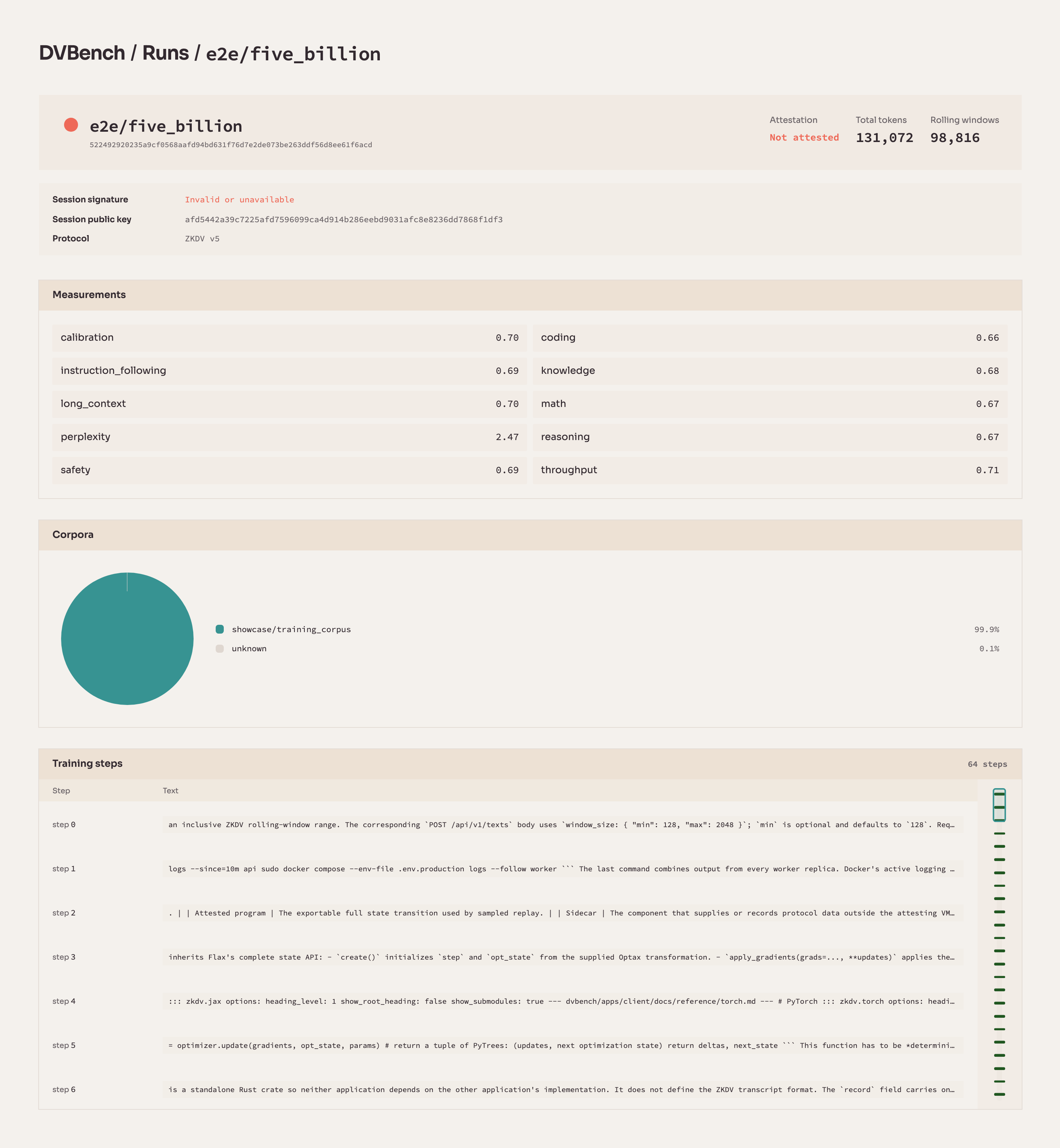}
  \caption{Detail screen of a run with training data attested and revealed; unverified runs can also be listed on our leaderboard relying on the traditional method of trusting the trainer.}
  \label{fig:leaderboard-2}
\end{figure}

\subsection{Implementation Details}
\label{appx:sec:leaderboard-implementation}
We leverage a three-layer architecture for our leaderboard system.

\paragraph{Client Streaming} Client transcript information (each link in the hash chain) is streamed to the verifier using WebSockets, allowing for low-latency streaming of the certificate. The API layer receiving this stream is stateless, and will immediately inject the stream into an ephemeral NATs JetStream \footnote{https://nats.io/} queue for processing.

\paragraph{Distributed Workers} Each worker holds a temporary lease on a particular transcript, instantiating a stateful verifier that pulls off of the JetStream queue and verifies the transcript. Due to our verification algorithm's small work set requirement \cref{fig:sec:verification}, we can run many verifiers in parallel and quickly serialize the verification state should trainers disconnect or workers crash.

\paragraph{Long-Term Database} Upon verification completion, the verified state is signed and stored in a long-term database. To facilitate quickly querying dataset inclusion, we fold all of the training data hashes into a bloom filter that is stored alongside the verified transcript. The web interface then queries this database for displaying the provenance of the training run.

\subsection{User Experience}
\label{appx:sec:leaderboard-user-experience}
Despite the relative complexity of our system, \systemname does not actually require that many user touchpoints to use. Specifically, our algorithm only requires that the users annotate their attested function which we serialize, and wrap their training step so we can pipeline it into the attestation system. To this end, we leverage Maturin \footnote{https://github.com/pyo3/maturin} to design a simple Python facade that allows easy integration of \systemname into existing training harnesses. Below we will outline the application of this system in user code for the Torch variant of our implementation.

\paragraph{Connect Streaming Client} Due to the relatively larger size of the proof certificate, we never store the certificate on training work and instead stream to the verifier directly. This design can also be modified to store the certificate on a local disk without changing the verification procedure. Thus, users first connect to a streaming client:

\begin{lstlisting}[language=python]
from dvbench import Client
client = Client(ENDPOINT)
experiment = client.experiment("owner/model")
proof = experiment.proof(backend="torch")
proof.start(rolling_window=32)
\end{lstlisting}

\paragraph{Wrap Model Prototypes} We apply our facade to \texttt{nn.Module} prototypes by wrapping the model and optimizer with our attestation system.

\begin{lstlisting}[language=python]
model, optimizer = proof.prepare(model, optimizer)
\end{lstlisting}

\paragraph{Attest Training Step} We then wrap the training step with our facade to serialize it:

\begin{lstlisting}[language=python]
@proof.attest
def attested_step(model, optimizer, batch):
    inputs, targets = batch
    logits = proof.forward(model, inputs)
    loss = F.cross_entropy(logits, targets)
    proof.backward(loss)
    optimizer.step()
    optimizer.zero_grad()
\end{lstlisting}

\paragraph{Training} Finally, users can train normally, either by invoking the attested step or using an equivalent fast path that produces numerically identical results:

\begin{lstlisting}[language=python]
for inputs, targets in training_batches:
    batch = (inputs, targets)
    with proof.transaction(batch=batch):
        logits = model(inputs)
        loss = F.cross_entropy(logits, targets)
        proof.backward(loss)
        optimizer.step()
        optimizer.zero_grad()
\end{lstlisting}

\section{Protocol Design Details}
\label{appx:sec:protocol-design}
\subsection{Commitment Hashes} \label{sec:commit:hashes}
Throughout the prose, in order to ensure protection from adaptive sampling attacks, we will use a series of $\mathcal{H}_{*}$ hash operations. Unless otherwise specified, $\mathcal{H}_{*}$ is meant to be understood as a cryptographically secure hash against $\mathcal{A}$ (in practice, BLAKE3), which contains a session-dependent stamp and secret material computed by $B_{\text{att}}$ not revealed to $\mathcal{A}$. Inputs use an injective, domain-separated serialization $\operatorname{ser}$. This per-hash stamping derives its security from the per-update ratcheted secret key in \cref{appx:sec:rachet}; after each commitment is stamped, the key ratchet state is advanced and cannot be re-used to evaluate another commitment.

\subsection{Batch Fingerprinting}
\label{appx:sec:batch-fingerprinting}

We fix a choice of $L$, the minimum window size for which a sequence of tokens is considered as a ``sample.'' Using this window size, we then perform a polynomial rolling hash:

Let $d_i$ be encoded as a sequence of tokens $d_i = \qty(d_i^{(1)} \dots d_i^{(m)})$. We compute two values: a secure commitment $C_{d}$ and a matching hash $F_{d}$:

\begin{equation}
C_d\qty(d_i)=\mathcal H_d\qty(\operatorname{ser}\qty(d_i)),\qquad
F_d\qty(d_i)=\qty(H_{\text{poly}}\qty(d_i^{(1)}),\dots,H_{\text{poly}}\qty(d_i^{(m-L+1)})).
\end{equation} 

Where each sub-hash is:
\begin{equation}
H_{\text{poly}}\qty(d_i^{(r)}) = d_i^{(r)} a^{L-1} + \dots + d_i^{(r+L-1)} a^{0} \ \text{mod}\ p.
\end{equation} 
for large prime $p$ and base $a$. Committing $C_d$ allows us to ensure future sampling is a secure function of the training data, while $F_d$ allows matching rolling hashes.

\subsection{Non-Geometric Optimizer State Hash}
\label{sec:sketch:optim}
Let's again partition the optimizer state as a series of leaves: $s_{i} := \mqty(s_{i}^{(1)}, \dots, s_{i}^{(m)})$.

We define an injective view $W: \mathbb{F}_{\mathrm{adm}} \to \qty{0,\dots,2^{31}-1}$ which re-represents each floating-point value as an integer. To see that the representation is unique, we restrict $\text{dom}\ W$ to be standard hardware-resident finite-precision floats. Then, following a $T$-round BLAKE-like construction \citep{aumasson2010blake}, for each boundary $s_i$, leaf $j$, round $t$, and coordinate $r$, $B_{\mathrm{att}}$ samples independent hidden $a_{j,t,r}\sim\operatorname{Unif}\qty(\mathbb Z_{2^{32}}^\times)$ and $b_{j,t,r}\sim\operatorname{Unif}\qty(\mathbb Z_{2^{32}})$. That is, we fix odd $a$ and not-necessarily-odd $b$ codewords as a random function of the boundary state. The same family hashes the two copies of $s_i$, remains unsampled until both are fixed, and is then discarded. We compute:
\begin{equation}
h_{j,t}\qty(s_i) = \sum_{r=1}^{|s_{i}^{(j)}|} a_{j,t,r} \qty [W\qty (s_{i}^{(j)})_{r} \oplus \qty(r a_{j,t,r} + b_{j,t,r})]\ \qty(\text{mod}\ 2^{32}),
\end{equation} 

where $\oplus$ is the bitwise XOR operation. Finally, we define:

\begin{equation}
C_{s}\qty(s_i) = \mathcal{H}_{\text{state}}\qty(h_{1,1}\qty(s_i) \dots h_{m, T}\qty(s_i))\ \qty(\text{mod}\ 2^{256})
\end{equation} 

where $\mathcal{H}_{\text{state}}$ is any standard cryptographic Merkle hash chain. 
\subsection{Record Signature Ratcheting}
\label{appx:sec:rachet}

The tape being produced establishes a signed hash chain of such record payloads, specifically that:

\begin{equation}
\tau_{i} = f_{\text{chain}_{i}}\qty(\tau_{i-1}, \bar{\tau}_{i}) = \qty(\bar{\tau}_{i}, \mathcal{H}_{\text{record}}\qty(\tau_{i-1}), \mathrm{vk}_{i+1}, \sigma_{i})
\end{equation} 

where $\bar\tau_{i}$ is the current payload of the record (one of $R, M, u_i$); $\mathcal{H}_{\text{record}}$ is a cryptographic hash of records. Finally, $\sigma_{i}$ is the signature of the current record, which is given as:

\begin{equation}
  \sigma_{i} = \operatorname{Sign}_{\mathrm{sk}_{i}}\qty(\mathcal{H}_{\text{sig}}\qty(\operatorname{ser}\qty(\bar\tau_i,\mathcal{H}_{\text{record}}\qty(\tau_{i-1}),\mathrm{vk}_{i+1})))
\end{equation} 

where $\operatorname{Sign}_{\mathrm{sk}_i}$ is a cryptographic signature using secret signing key $\mathrm{sk}_{i}$, verified by public key $\mathrm{vk}_i$. We expire each secret key $\mathrm{sk}_{i}$ after signing only one step.
\begin{equation}
  \qty(\mathrm{sk}_{i+1},\mathrm{vk}_{i+1})=\operatorname{KeyGen}\qty(\mathcal H_{\mathrm{ratchet}}\qty(\mathrm{sk}_i)).
\end{equation}

\subsection{Root Record Structure}
\label{appx:sec:root}
The root record $R$ is responsible for establishing the cryptographic primitives necessary for the resulting signing and attestation. It contains a session-specific nonce, timestamp, prover build information, and protocol configuration. In order to bootstrap trust in the system, the session public key is bound by an attestation under a stable attester key from which trust is derived. Each subsequent tape record will be signed by a series of keys from the ratcheted session signing state. Logically, let:

\begin{equation}
R_0=\qty(M,\nu,c,\mathcal H_{\mathrm{init}}\qty(\operatorname{ser}\qty(\theta_0,s_0)),\mathcal H_{\mathrm{func}}\qty(f),\mathrm{vk}_0),\qquad
R=\qty(R_0,\operatorname{Sign}_{\mathrm{sk}_{\mathrm{root}}}\qty(\mathcal H_{\mathrm{sig}}\qty(\operatorname{ser}\qty(R_0)))).
\end{equation} 

where $\operatorname{Sign}_{\mathrm{sk}_{\mathrm{root}}}$ is a signature made by a stable root-of-trust key, and $M$ is session-specific metadata.

\subsection{Measurement Record Structure}
\label{appx:sec:measure}
In order to ensure trust in the authenticity of the recording binary $B$, we additionally add trusted self-measurements as a record in the binary. This involves two measurement words $m_1, m_2$ and a cryptographic hash $M=\qty(\mathcal{H}_{\text{measure}}\qty(m_{1}), \mathcal{H}_{\text{measure}}\qty(m_{2}))$, where $m_{1}$ attests the binary portion of the system and $m_{2}$ attests any running code by both script analysis and live interpreter freezing. Details on the second step are given in \cref{sec:systems:invoke}.

\subsection{Update Record Structure} \label{sec:appx:update}
Let $C_{\theta}: \mathbb{R}^{|\theta|} \to \mathbb{R}, C_{s}: \mathbb{R}^{|s|} \to \mathbb{R}$ be the parameter and state sketching functions described in \cref{sec:sketch}, and let $C_{d}$ be the data commitment described in \cref{sec:karp}. Furthermore, let $\text{PRNG}: \{0,1\}^{*} \to [0, 1]$ be a uniform probabilistic random number generator.

The logical ``update record'' consists of five sub-records $u_{i}^{(1)} \dots u_{i}^{(5)}$, constructed chronologically.

\paragraph{Pre-State Commitment} Given $\theta_{i}, s_{i}$ and the user's declared batch $d_{i}$, we compute and record parameter, state, and data sketches from before the update occurs: $u_{i}^{(1)} = \qty(C_{\theta}\qty(\theta_{i}), C_{s}\qty(s_{i}), C_{d}\qty(d_{i}),F_d\qty(d_i))$.

\paragraph{Post-State Commitment} $A$ computes the training algorithm following $\tilde{f}\qty(\theta_{i}, s_{i}, d_{i})$, yielding $\theta_{i+1}, s_{i+1}$. We record these outputs, as declared by $A$, as well: $u_{i}^{(2)} = \qty(C_{\theta}\qty(\theta_{i+1}), C_{s}\qty(s_{i+1}))$.

\paragraph{Deterministic Sampling} Let $\mathcal{H}_{\text{record}}$ be a cryptographically secure hash; we follow a known $\text{PRNG}$ function and compute $u_{i}^{(3)} = p_{\text{challenge}} = \text{PRNG}\qty[\mathcal{H}_{\text{record}}\qty(\nu,\tau_{i-1}, u_{i}^{(1)}, u_{i}^{(2)})]$. This is our probabilistic sampling decision: \textit{we only challenge $f \cong^{?} \tilde{f}$ when $p_{\text{challenge}} \leq p_{\text{check}}$}. This ordering provides the unpredictability property of challenges: $A$ cannot cheat on only what will not be challenged, and $\mathcal{H}_{\text{record}}$ is ratcheted upon computation to prevent continuous search.

\paragraph{Challenge Replay} If our records are challenged in $u_{i}^{(3)}$, $B$ restores the committed function $f$ and computes $f\qty(\theta_{i}, s_{i}, d_{i}) = \theta_{i+1}', s_{i+1}'$. We compare $C_\theta\qty(\theta_{i+1}')$ and the full $s_{i+1}'$ against committed values, then challenge one slot $x$ of $\theta_{i+1}'$ and commit $u_{i}^{(4)} = \qty(\theta_{i+1}^{(x)}, \theta_{i+1}^{(x)'})$. Note that while the state buffer and parameter sketch are locally \textit{checked}, only one parameter is \textit{attested} on tape.

\paragraph{Checkpoint Commitment} If $A$ wants to emit a checkpoint, we transfer it to host, compute $h_i^{\text{ckpt}}=\mathcal{H}_{\text{ckpt}}\qty(\operatorname{ser}\qty(\nu,\tau_{i-1},u_i^{(1)},\dots,u_i^{(4)},\theta_{i+1}))$, and commit it. We additionally sample $q$ positions in the model weight and commit them. That is, we sample $x_{1} \dots x_{q} = \text{PRNG}\qty[\mathcal{H}_{\text{record}}\qty(h_i^{\text{ckpt}})]$ and commit $u_{i}^{(5)} = \qty(h_i^{\text{ckpt}},\theta_{i+1}^{(x_{1})} \dots \theta_{i+1}^{(x_{q})})$.



\section{Verification Procedure}\label{sec:param:bind}
\label{appx:sec:verification-procedure}

Our verification algorithm checks a given transcript $\tau$, initial state $(\theta_0,s_0)$, final checkpoint $\theta_{f}$, predeclared training algorithm $\mathcal{H}_{\text{func}}\qty(f)$, and nonce $\nu$ for their declared structure, and ensures that every update is structurally sound.

\paragraph{Attestation Checking $V_{\text{sig}}$} We examine the binding of $B_{\text{att}}$ to $B_{\text{host}}$. We check that the root session key derivation in the root record $R$ is valid from the published $B_{\text{att}}$ program. We \textbf{reject} if initial attestations are not signed from known builds, or if (in the zkVM case) initial session materials are not constructed from a known algorithm. Otherwise, we \textbf{continue}.

\paragraph{Parameter Binding $V_{\text{bind}}$} We then check if the checkpoint we want to attest $\theta_{f}$ actually belongs to the tape $\tau$. To do this, we extract $Q_1 \dots Q_{m} \in \tau$, the parameter sketch described in \cref{sec:sketch:param}, and apply them again to $\theta_{f}$. After hashing, this produces a single digest $C_{\theta}\qty(\theta_{f})$ which we search for in the tape. We also check that the checkpoint's full commitment $h_i^{\text{ckpt}}$ recomputes from $\theta_f$, that it's in the tape, and that the sampled values match. \textbf{Reject} if any check fails; \textbf{continue} otherwise.

\paragraph{Replay Binding $V_{\text{func}}$} We then check that the checked transition was consistent with the declared transition. It is not important that this transition is disclosed to the verifier, as long as it is consistent. We first \textbf{reject} if $\mathcal{H}_{\text{func}}\qty(f) \not\in \tau$, and then check the attestation bound to $\mathcal{H}_{\text{func}}\qty(f)$ in the tape; we \textbf{reject} if the attestation is invalid (e.g., it's not derived from the session key, or---in the non-whitebox binding setting---the zkVM execution tape is invalid). Otherwise, we \textbf{continue}.

\paragraph{Tape Integrity Checking $V_{\text{tape}}$} We next check the integrity of the tape by monitoring each of its transitions. Specifically, we follow the ratcheting design of \cref{appx:sec:rachet} to ensure the integrity of each chain link, bind the first record to $(\nu,\theta_0,s_0)$, check adjacent pre/post parameter and optimizer-state commitments, and check the declared tape contents' hash against the signed hash material. We \textbf{reject} if any check fails, and \textbf{continue} otherwise.

\paragraph{Sampled Replay Checking $V_{\text{replay}}$} Lastly, we check that the challenged value and PRNG sampling were done correctly for the challenged updates. Specifically, we verify the challenge derivation, replayed parameter sketch, full optimizer state, and sampled parameter value, and verify that the application of $f$ matches the same hash as the one bound in $\mathcal H_{\text{func}}$. \textbf{Reject} if replays are incorrect or sampling is not followed; \textbf{accept} otherwise.

\section{Complete Proof}
\label{appx:sec:proofs}
We now give the full proof of the soundness statement made in \cref{sec:security}. 
\subsection{Formalism}
\label{sec:formal}

For an $n$-step training run, $A$ produces parameter states $\theta_{0} \dots \theta_{n}$ and optimizer states (and metadata, such as LR schedule) $s_{0} \dots s_{n}$ using data samples $d_{0} \dots d_{n-1}$. $s_i$ or $d_i$ will also include computed or supplied random seeds, respectively. $A$ thus follows a deterministic procedure $\tilde{f}: \qty(\Theta, S, D) \to \qty(\Theta, S)$ from a combination of parameters $\Theta$, state $S$, and data $D$ to the corresponding next parameter and state.

Confirmation $c \in C$ includes $p_{\text{check}},p_{\text{sketch}},T,q$ (check probability, sketch size, hashing rounds, and number of checkpoint sketch vectors) and the cryptographic parameters. Our goal is to develop $\mathcal{P}: \qty(\Theta, \Theta, S, S, D, C) \to T$ such that $\mathcal{P}\qty(\theta_{i}, \theta_{i+1}, s_{i}, s_{i+1}, d_{i}, c) = \tau_{i}$ which maps each of the transitions above into linkages in a tape format $\tau_{i} \in T$. 

Define now also an honest training procedure: $f: \qty(\Theta, S, D) \to \qty(\Theta, S)$. We ask that $f$ has bounded description length $K\qty(f) \leq B$, where $K$ is Kolmogorov complexity, such that any information used in training in practice comes from the data source itself. In practice, we can instantiate this as source-code length as lowered into an intermediate format; see \cref{sec:systems:seralize}.

Let $\mathcal{H}_{\text{func}}\qty(f)$ be a one-way commitment of $f$ (e.g., hashing the code of $f$). We define a verifier $V\qty(\tau, c, \nu, \theta_{0}, s_{0}, \mathcal{H}_{\text{func}}\qty(f), \theta_f) \in \qty {0,1}$ which should accept with the following properties. Here, $\theta_f \in \tau$ means that checkpoint $\theta_f$ is bound to some checkpoint release record in $\tau$.

\subsection{Threat Model Guarantees}
\label{appx:sec:threat-model-guarantees}
Let's start by implementing our threat model as a series of axioms which we appeal to. We first operationalize that $A$ is a bounded polytime adversary for which our hashes are sound:

\label{ax:hashes}
\begin{axiom}[cryptographic primitives]
  Every $\mathcal{H}_{*}$ used in \cref{sec:algo} admits a cryptographic hash, so for all machines $\mathcal A$ that are under our scope:

  \paragraph{Hashes are Secure} \begin{equation}
\Pr\qty[\mathcal A\text{ finds $x,y$ with $\operatorname{poly}\qty(\lambda)$ executions s.t. }\mathcal H_{*}\qty(x)=\mathcal H_{*}\qty(y),x\neq y]\leq\operatorname{negl}\qty(\lambda).
  \end{equation} 

\paragraph{Bindings are Secure}
  $\operatorname{Sign}$ is EUF-CMA secure, commitments are binding, and $B_{\mathrm{att}}$ is sound under the same bound.
\end{axiom}

where $\lambda$ is a standard security parameter, $\operatorname{poly}\qty(\lambda)$ is any polynomial function of $\lambda$, and $\operatorname{negl}\qty(\lambda)$ is a negligibly small function of $\lambda$.

Next, we describe how we ensure the freshness of each run:

\label{ax:challenge}
\begin{axiom}[challenge freshness]
  Conditioned on each position in the tape, sampling a coordinate position to check under $p_{\mathrm{check}}$ is uniform and independent. $V$ registers one attempt for each nonce $\nu$; aborting a run results in rejection.
\end{axiom}

This nonce setup means that an attacker cannot grind our randomly sampled checks end-to-end until a single run results in no cheating samples being detected. In practice, this risk is prevented by a random whitebox-generated nonce, and the practical fact that $p_{\mathrm{check}}=0.01$ means it takes in expectation $\frac{1}{0.01} = 100$ tries of full training to get one valid sample, which is impractical.

We now turn to a fact that allows us to maintain buffer identity between updates:

\label{ax:param-continuity}
\begin{axiom}[attested parameter continuity]
  $B_{\mathrm{att}}$ enforces post/pre parameter-buffer identity across adjacent updates, and $V$ validates its attestation.
\end{axiom}

We also need to formalize that $A$ cannot grind through every possible training trajectory using unbounded time to find a cheating trajectory, which supports our bounded-computation threat model:

\begin{axiom}[prover is limited]
  All machines $\mathcal A$ under our scope are probabilistic polynomial-time in $\lambda$ and their input length.
\end{axiom}
where $\lambda$ is a security parameter.

\subsection{Optimizer State Binding}
\label{appx:sec:optimizer-state-binding}

Let's begin now by discussing some statements about $C_{s}$ and show that this simple reduction admits an amplifiable, non-geometric fingerprint. We will restrict our statement to one leaf of the hash in \cref{sec:sketch:optim}, and appeal to \cref{ax:hashes} for the soundness of the final reduction.

First, we will use the following lemma about the structure of the fold:

\begin{lemma}[optimizer binding fold collision] \label{lem:osb:fold}
  For distinct leaf values $x,y\in\mathbb F_{\mathrm{adm}}^N$, one round of the fold in \cref{sec:sketch:optim} satisfies:
  \begin{equation}
    \Pr\qty[h_{j,t}\qty(x)=h_{j,t}\qty(y)]\leq\frac12.
\end{equation} 
\end{lemma}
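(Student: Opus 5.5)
The plan is to fix a coordinate $r^\ast$ where the integer views differ, $u := W(x)_{r^\ast} \neq v := W(y)_{r^\ast}$. Such a coordinate exists because $W$ is injective on $\mathbb F_{\mathrm{adm}}$ and $x\neq y$. Next, condition on all hidden randomness except the pair $(a,b) := (a_{j,t,r^\ast}, b_{j,t,r^\ast})$. The contributions of every other coordinate $r\neq r^\ast$ to $h_{j,t}(x)-h_{j,t}(y)$ then become a fixed constant $\Delta\in\mathbb Z_{2^{32}}$. The collision event reduces to
\begin{equation}
a\big[(u\oplus m)-(v\oplus m)\big]\equiv -\Delta \pmod{2^{32}},\qquad m := r^\ast a + b \bmod 2^{32}.
\end{equation}
Since $b$ is uniform on $\mathbb Z_{2^{32}}$ and independent of $a$, the mask $m$ is uniform on $\mathbb Z_{2^{32}}$ for every fixed $a$, and it is independent of $a$. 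So I will treat $(a,m)$ as independent, with $a$ uniform over the odd residues and $m$ uniform.

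The core step is bounding, for fixed odd $a$ and fixed $\Delta$, the probability over uniform $m$ that $a\,g(m)\equiv -\Delta$, where $g(m) := (u\oplus m)-(v\oplus m)$. Because $a$ is a unit, this is equivalent to $g(m)\equiv c$ with $c := -a^{-1}\Delta$, so it suffices to show that no single value $c$ has preimage under $g$ of mass more than $1/2$.

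To do this, let $k$ be the highest bit position where $u$ and $v$ differ. Flipping bit $k$ of $m$ swaps which of $u\oplus m$ and $v\oplus m$ has a $1$ in position $k$, and leaves all other bits of both words unchanged. The difference therefore changes from $g(m)$ to $g(m) \pm 2\cdot 2^{k}$, more precisely to $g(m)$ shifted by $\pm 2^{k+1}$ relative to its sign structure. In fact the pairing gives $g(m\oplus 2^k) = -g(m)$ exactly. Since the lower differing bits contribute strictly less than $2^k$ in magnitude, the term in bit $k$ dominates and $g(m)\neq 0$, so the two values are distinct modulo $2^{32}$ unless $g(m)\equiv 2^{31}$.

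This pairing, an involution on $m$, shows that at most one element of each pair $\{m, m\oplus 2^k\}$ can hit a given $c$. Hence $\Pr_m[g(m)=c]\le 1/2$. Averaging over $a$ and over the conditioned randomness preserves the bound $1/2$.

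The main obstacle is the wraparound case $2g(m)\equiv 0 \pmod{2^{32}}$, where $g(m)$ and $-g(m)$ coincide. This can occur only when $k=31$ and the lower bits contribute nothing, i.e.\ $u\oplus v = 2^{31}$. Because $W$ maps into $\{0,\dots,2^{31}-1\}$, bit $31$ is always zero and $k\le 30$. I will argue that then $|g(m)|\in[1,2^{31})$ and $g(m)\not\equiv -g(m)$, which closes the argument. If that fails, the fallback is the cruder observation that $g(m)\in\{\pm\}$-paired values always suffices for the weak constant $1/2$, which is all the lemma claims.
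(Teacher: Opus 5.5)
Your argument is correct in outline but takes a different route from the paper's. One identity in it is also false as stated, though it is easily repaired.

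\textbf{Comparison with the paper.} The two proofs differ in which piece of hidden randomness does the work.
\begin{itemize}
\item The paper conditions on the mask $z=ra+b$ and uses the odd multiplier. XOR by a common $z$ preserves the position $\ell\le 30$ of the lowest differing bit, so $d=(W(x_r)\oplus z)-(W(y_r)\oplus z)$ has $2$-adic valuation exactly $\ell$. Multiplying by a uniform odd $a$, independent of $z$, makes $ad$ uniform over the $2^{31-\ell}$ residues of that valuation, giving $2^{-(31-\ell)}\le 1/2$.
\item You instead condition on $a$, using only that it is a unit, and let the uniform mask $m$ do the work through a bit-flip pairing.
\end{itemize}
Each buys something different. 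The paper's bound sharpens as the lowest differing bit drops. Yours needs no uniformity of $a$. If you exploit every differing bit rather than one, yours sharpens with the Hamming distance instead: $g(m)=\sum_{i\in S}\pm 2^i$ with independent uniform signs over the differing positions $S\subseteq\{0,\dots,30\}$. These $2^{|S|}$ values are distinct mod $2^{32}$, so the bound becomes $2^{-|S|}$. Both arguments deliver the lemma's $1/2$.

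\textbf{The false step.} The claim ``$g(m\oplus 2^k)=-g(m)$ exactly'' does not hold in general. Flipping bit $k$ of the mask swaps only the bit-$k$ contributions of $u\oplus m$ and $v\oplus m$. Hence $g(m\oplus 2^k)=g(m)\mp 2^{k+1}$, which is what you wrote one sentence earlier. For instance, $u=3$, $v=0$, $k=1$ gives $g(0)=3$ but $g(2)=-1$. Exact negation requires the pairing $m\mapsto m\oplus(u\oplus v)$, which swaps $u\oplus m$ and $v\oplus m$.

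Either repair closes the proof:
\begin{itemize}
\item \emph{Keep your pairing and use the shift identity.} The two values in each pair differ by $2^{k+1}$, which is nonzero mod $2^{32}$ because $k\le 30$. Your ``wraparound obstacle'' then disappears entirely, and any differing bit works, not just the highest.
\item \emph{Use the $u\oplus v$ pairing.} You then need $2g(m)\not\equiv 0\pmod{2^{32}}$. This follows from $1\le|g(m)|\le 2^{31}-1$, which holds because $u\oplus m$ and $v\oplus m$ agree on bit $31$.
\end{itemize}
The closing ``fallback'' sentence is not a meaningful argument and should be dropped.
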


\begin{proof} \label{appx:sec:osb:osf}
  We desire that one round collides with probability at most $1/2$.

  For simplicity, choose a coordinate $r$ such that $W\qty(x_r)\neq W\qty(y_r)$, and let $v$ be the location of their smallest differing bit. Since $W\qty(x_r),W\qty(y_r)<2^{31}$, we have $v\leq30$.

  Consider any $z,x,y\in\mathbb Z_{2^{32}}$, and the bitwise XOR operation $\oplus$. The following statement is true:
  \begin{equation}\label{eqn:proof:osb:plus}
    v_2\qty(\qty[x\oplus z]-\qty[y\oplus z])=v_2\qty(x-y).
  \end{equation} 
  This is because XOR by the same $z$ preserves the smallest differing bit.

  Using this fact, condition on all other coordinates and let $z=ra_{j,t,r}+b_{j,t,r}$. Since $b_{j,t,r}$ is uniform, $z$ is uniform and independent of the odd $a_{j,t,r}$. Call
  \begin{equation}
    d=\qty[W\qty(x_r)\oplus z]-\qty[W\qty(y_r)\oplus z].
  \end{equation} 
  Applying \cref{eqn:proof:osb:plus}, $v_2\qty(d)=v$. Multiplication by uniform odd $a_{j,t,r}$ is therefore uniform over the $2^{31-v}$ residues of valuation $v$.

  Finally, the conditioned sum can equal zero for at most one such residue, so:
  \begin{equation}
    \Pr\qty[h_{j,t}\qty(x)=h_{j,t}\qty(y)]\leq2^{-(31-v)}\leq\frac12.
  \end{equation} 

  In particular, this gives the desired per-round collision bound.
\end{proof}

The proof idea here is that finite computation precision (e.g., mod $2^{32}$) preserves the least significant bit of difference between two inputs, while the random odd multiplier shuffles the resulting residue class uniformly. We give the full proof in \cref{appx:sec:osb:osf}. Using this fact, we can state an important result about our binding.

That is, each round of our binding admits a ``uniform shuffling'' within a residue class keyed by $a,b$. We will now use this fact to finish our claim about the security of our binding:

\begin{mdframed}
\begin{lemma}[Security of $C_{s}$]\label{lem:osb}
  For distinct $x,y\in\mathbb F_{\mathrm{adm}}^N$,
  \begin{equation}
\Pr\qty [C_{s}\qty(x) = C_{s}\qty(y)] \leq 2^{-T} + \operatorname{negl}\qty(\lambda).
  \end{equation}
\end{lemma}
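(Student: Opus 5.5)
The argument splits on whether the collision happens before or after the final cryptographic hash. Since $C_s(x)=\mathcal H_{\text{state}}(h_{1,1}(x),\dots,h_{m,T}(x))$, the event $C_s(x)=C_s(y)$ is contained in the union of two events:
\begin{equation}
E_1=\qty{h_{j,t}(x)=h_{j,t}(y)\ \forall j,t},\qquad E_2=\qty{\mathcal H_{\text{state}}\text{ collides on distinct inputs}}.
\end{equation}
A union bound gives $\Pr[C_s(x)=C_s(y)]\le\Pr[E_1]+\Pr[E_2]$. By \cref{ax:hashes}, $\Pr[E_2]\le\operatorname{negl}(\lambda)$. This holds because the inputs to $\mathcal H_{\text{state}}$ are fixed before the hidden keys are revealed, so any collision would yield a collision finder for $\mathcal H_{\text{state}}$.

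To bound $E_1$, fix a leaf $j$ on which $x$ and $y$ differ; such a leaf exists because $x\neq y$ and $W$ is injective. Then $E_1\subseteq\bigcap_{t=1}^T\qty{h_{j,t}(x)=h_{j,t}(y)}$. The round-$t$ collision depends only on the keys $(a_{j,t,r},b_{j,t,r})_r$. These are sampled independently across rounds, and they are sampled only after both copies of $s_i$ are fixed, as the construction in \cref{sec:sketch:optim} requires. So $x$ and $y$ are independent of the keys, and the $T$ round events are mutually independent. Applying \cref{lem:osb:fold} to each round gives a factor of at most $\tfrac12$ per round, so $\Pr[E_1]\le 2^{-T}$. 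Adding the two bounds gives the claim.

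The main obstacle is justifying independence: that the adversary's choice of $x,y$ cannot correlate with the hidden $a,b$. To handle this, I would invoke the sampling order explicitly: the keys are sampled after both copies are fixed and are hidden inside $B_{\mathrm{att}}$. Then I condition on $(x,y)$ and apply \cref{lem:osb:fold} round by round, using the product measure over rounds. If the adversary could adapt $x$ or $y$ after seeing the keys, the argument would need an additional reduction, whose cost can be absorbed into the $\operatorname{negl}(\lambda)$ term using the binding part of \cref{ax:hashes}.
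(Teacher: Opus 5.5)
Your proof is correct and is essentially the paper's argument: fix a differing leaf, apply \cref{lem:osb:fold} to each of the $T$ independently keyed rounds to get $2^{-T}$, and charge the final $\mathcal H_{\text{state}}$ fold to the cryptographic-primitives axiom; your union bound over $E_1$ and $E_2$ and your explicit conditioning on $(x,y)$ spell out what the paper leaves implicit. Your closing remark about an adversary who picks $x,y$ after seeing the keys is not needed, because the construction samples $a,b$ only after both copies are fixed, and it would not hold as stated: the inner layer's guarantee is purely statistical over hidden keys, so binding of $\mathcal H_{\text{state}}$ cannot restore it.
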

\end{mdframed}

\begin{proof}
  Choose a differing leaf $j$. From \cref{lem:osb:fold}, each round of $h_j$ collides with probability at most $1/2$. $T$ such rounds, recall, with fresh $a,b$ commitments, collide with probability at most $2^{-T}$. Finally, we appeal to \cref{ax:hashes}, which guarantees final fold security up to a negligible bound $\operatorname{negl}\qty(\lambda)$ for a security parameter $\lambda$, to obtain the final collision bound.
\end{proof}

In practice, any precision-preserving collision usually degrades model output quality.

We have now shown that $C_{s}$ is a sound commitment scheme for optimizer state parameters. We will move on to showing the same for $C_{\theta}$ next, with the additional constraint that the initial sketching component must also be \textit{geometric}.

\subsection{Parameter Sketching}
\label{appx:sec:parameter-sketching}

Let's now prove a similar bound for the fast geometric sketch described in \cref{sec:sketch:param}. Unlike the non-geometric hash of the optimizer state, this sketch must be computable after the fact by the verifier independently, so it cannot derive its power by a series of constantly refreshed, state-dependent shuffled $a,b$, and must survive floating-point precision differences between hardware. However, we prove the security of our system below using exact arithmetic, and leave to empirical work in \cref{sec:expr} how this sketch performs in practice.

Recall that we sketch by a series of projections using orthonormal matrices $Q_j \in \mathbb{R}^{n_j \times l_j}$. As before, let's consider only a single leaf and call it $Q \in \mathbb{R}^{n \times l}$ for some parameter-matrix domain $n$ and $l=\left\lceil p_{\mathrm{sketch}}n\right\rceil$, and look to \cref{ax:hashes} for the security of the final fold.

\begin{lemma}[$Q^{T}$ matrices are full spark]\label{lem:psb:spark}
  Every set of $l$ columns in $Q^{T}$ is linearly independent. That is, $\text{spark}\qty(Q^{T}) = l+1$ almost surely.
\end{lemma}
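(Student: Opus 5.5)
The plan is to make the randomness of $Q$ explicit and then reduce full spark to the non-vanishing of finitely many polynomial determinants. The statement is false for an arbitrary fixed orthonormal $Q$: a coordinate-aligned $Q$ has zero rows. So I will take ``almost surely'' to refer to the sampling distribution of the sketch matrices. Concretely, I assume $Q$ is Haar-distributed on the Stiefel manifold $\{Q\in\mathbb R^{n\times l}: Q^TQ=I_l\}$, or more generally has a law absolutely continuous with respect to it. The canonical realization is to draw $G\in\mathbb R^{n\times l}$ with i.i.d.\ standard Gaussian entries and set $Q=G(G^TG)^{-1/2}$, or equivalently $Q=GR^{-1}$ from a thin QR factorization $G=QR$. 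Here $R$ is invertible almost surely, because $G$ has full column rank almost surely.

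The main step is as follows. The columns of $Q^T$ are the rows of $Q$, so I must show that for every index set $S\subseteq\{1,\dots,n\}$ with $|S|=l$, the $l\times l$ submatrix $Q_S$ is nonsingular almost surely. Since $Q=GR^{-1}$, we have $Q_S=G_SR^{-1}$, and hence $\det Q_S=\det G_S/\det R$. The function $\det G_S$ is a nonzero polynomial in the $l^2$ Gaussian entries of $G_S$; it equals $1$ at the identity. Its zero set is therefore a proper algebraic variety of Lebesgue measure zero. Because the Gaussian law is absolutely continuous, $\Pr[\det G_S=0]=0$. A union bound over the finitely many, $\binom{n}{l}$, choices of $S$, together with the null event $\{\det R=0\}$, shows that all $l$-subsets of columns of $Q^T$ are simultaneously independent almost surely. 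If one prefers to avoid the Gaussian representation, the same conclusion follows from the fact that $\det Q_S$ is a real-analytic function on the connected Stiefel manifold that is not identically zero. A witness is the $Q$ whose rows indexed by $S$ form $I_l$ and whose other rows are zero. A nonvanishing analytic function has a null zero set under the Haar measure.

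The matching bound $\operatorname{spark}(Q^T)\le l+1$ is immediate. Any $l+1$ columns of $Q^T$ lie in $\mathbb R^l$ and are therefore linearly dependent. This requires $n\ge l+1$, which holds whenever $p_{\mathrm{sketch}}<1$ in the sense that $\lceil p_{\mathrm{sketch}}n\rceil<n$. In the degenerate case $l=n$, $Q$ is square orthogonal and the spark is $+\infty$ by convention; I will note this as an edge case rather than treat it as part of the claim. Combining the two bounds gives $\operatorname{spark}(Q^T)=l+1$ almost surely.

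I expect the main obstacle to be the modeling step rather than the algebra: the lemma needs a precise statement of how the $Q_j$ are generated. I will add that hypothesis explicitly. I will also remark that, under the exact-arithmetic convention adopted in this section, ``almost surely'' means probability one over the draw of $Q$. The null set is not robust to floating point, so no quantitative conditioning bound on the minors is claimed here.
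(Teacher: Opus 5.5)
Your proposal is correct and follows the same route as the paper: Gaussian sampling makes every $l\times l$ row-minor nonsingular almost surely, and orthogonalization does not destroy this. You make explicit what the paper's one-line ``orthogonalization preserves rank'' leaves implicit, namely $Q_S = G_S R^{-1}$ and hence $\det Q_S = \det G_S/\det R$. You also supply two things the paper omits: the matching upper bound $\text{spark}(Q^T)\le l+1$, and the hypothesis $n\ge l+1$ it needs (equivalently, excluding the degenerate case $l=n$).
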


\begin{proof}
  $Q$ is generated from sampled Gaussian values; thus every $l \times l$ minor is nonsingular almost surely. Thus each choice of $l$ columns of $Q^T$ is linearly independent, and thus you must choose $l+1$ columns for linear dependence. Orthogonalization preserves rank. Thus the spark of $Q^{T}$ is $l+1$ almost surely.
\end{proof}

Intuitively, \cref{lem:psb:spark} can help us use the linearly independent projections to ``pin down'' degrees of freedom from the weight matrices we are attesting through multiplication. We can formalize this by observing the behavior of $Q$ applied to two differing matrices.

\begin{lemma}[amount of cheating is lower-bounded]\label{lem:psb:bound}
  Let $\theta \neq \tilde{\theta}$ be rows of a single weight matrix, and call $\delta = \tilde{\theta} - \theta$. $\delta Q = 0, \delta\neq 0$ implies $\norm{\delta}_{0} \geq l +1$.
\end{lemma}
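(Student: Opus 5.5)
The plan is to argue by contraposition, using \cref{lem:psb:spark} directly. Fix $\delta\in\mathbb{R}^{1\times n}$ with $\delta\neq 0$ and $\delta Q=0$. Transposing gives $Q^{T}\delta^{T}=0$. Expanding this product column by column,
\begin{equation}
  Q^{T}\delta^{T}=\sum_{r\in\operatorname{supp}(\delta)}\delta_r\, (Q^{T})_{:,r}=0.
\end{equation}
Each coefficient $\delta_r$ in this sum is nonzero. So the columns of $Q^{T}$ indexed by $\operatorname{supp}(\delta)$ admit a nontrivial vanishing linear combination, and therefore form a linearly dependent set of size $\norm{\delta}_0$.

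Next I invoke \cref{lem:psb:spark}. It says that almost surely every set of at most $l$ columns of $Q^{T}$ is linearly independent, i.e.\ $\operatorname{spark}(Q^{T})=l+1$. Hence any linearly dependent set of columns has cardinality at least $l+1$, and this gives $\norm{\delta}_0\geq l+1$.

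Two routine side points need care. First, the lemma as stated covers only sets of exactly $l$ columns. To extend it to sets of size $s<l$, I note that a subset of a linearly independent set is independent, which applies whenever $n\geq l$; this holds since $l=\lceil p_{\mathrm{sketch}}n\rceil\leq n$. Second, I must state explicitly that the conclusion inherits the ``almost surely'' qualifier over the draw of $Q$. For a matrix $\theta$ with several rows, the statement is applied row by row: each nonzero row of the difference must have support at least $l+1$.

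I do not expect a real obstacle, because the argument is essentially the definition of spark. The only delicate point is the independence claim inside \cref{lem:psb:spark}: Gram--Schmidt orthonormalisation $Q=GR^{-1}$ acts on the right, so it preserves the row minors of the $n\times l$ Gaussian matrix up to the factor $\det R^{-1}\neq 0$. I would cite that lemma rather than reprove it. The result is exact-arithmetic only; floating-point behaviour is deferred to the experiments, as the paper already does.
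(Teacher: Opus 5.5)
Your proposal is correct and follows essentially the paper's argument. Both note that $\delta Q=0$ with $\delta\neq 0$ yields a nontrivial vanishing combination of the rows of $Q$ (the columns of $Q^{T}$) indexed by $\operatorname{supp}(\delta)$, and then invoke the full-spark property of \cref{lem:psb:spark}; your version is a direct contrapositive rather than a contradiction and adds correct housekeeping about subsets of size below $l$, the almost-sure qualifier over $Q$, and why right-multiplication by $R^{-1}$ preserves the row minors.
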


\begin{proof}
  We have $\delta\neq 0, \delta Q = 0$. Suppose for the sake of contradiction that $\exists \delta$ such that $\norm{\delta}_{0} < l+1$. Then this implies that there exists a linearly dependent set of rows in $Q$ of size $\leq l$. But $Q$ is full spark, reaching a contradiction.
\end{proof}

Now, fix $p_{\text{check}}$ as the probability that an update is challenged, followed by one uniformly sampled parameter, as in \cref{sec:update}. Let $\Theta \neq \tilde{\Theta}$ be two distinct parameter trees, and let $\Delta = \tilde{\Theta} - \Theta$.
Further define:
\begin{equation}
  G\qty(\Delta) = \min\qty(|\Theta|,\sum_{j}\sum_{r}\bold{1}_{\Delta_{r}^{(j)} \neq 0}\qty(l_j+1)),
\end{equation}
where $r$ indexes matrix rows. Using this $G$ construction, we are ready to claim the size of cheating we can prevent with our sketch:

\begin{mdframed}
\begin{lemma}[Security of $C_{\theta}$]\label{lem:psb:boundsurviv}
  The chance that $\Theta \neq \tilde{\Theta}$ survives unnoticed is upper-bounded:
  \begin{equation}
    \Pr\qty [\text{undetected}] \leq 1- p_{\text{check}} \qty(\frac{G\qty(\Delta)}{|\Theta|}) + \operatorname{negl}\qty(\lambda).
  \end{equation}
\end{lemma}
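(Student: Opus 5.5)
Since the fold in Algorithm~\ref{alg:sketch-commit} is a Merkle chain of cryptographic hashes, I will split the undetected event into two disjoint cases, according to whether the projected sketches of $\Theta$ and $\tilde\Theta$ coincide exactly.

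\textbf{Case 1: some leaf has a visible change.} Suppose there is a leaf $j$ with $\Delta^{(j)}Q_j\neq 0$. Then $\operatorname{vec}(\Theta^{(j)}Q_j)\neq\operatorname{vec}(\tilde\Theta^{(j)}Q_j)$. With the injective serialization of \cref{sec:commit:hashes}, the inputs to $\mathcal H_{\mathrm{sketch}}$ differ, and every later Merkle-fold input differs too. Equality of the final digests $C_\theta(\Theta)=C_\theta(\tilde\Theta)$ would therefore give a collision in some $\mathcal H_{\mathrm{sketch}}$ call. By \cref{ax:hashes} this has probability at most $\operatorname{negl}(\lambda)$. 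The verifier compares the committed digest, via $V_{\text{tape}}$ and $V_{\text{bind}}$ together with \cref{ax:param-continuity}, so this case contributes only the negligible term.

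\textbf{Case 2: every leaf is sketch-invisible.} Now assume $\Delta^{(j)}Q_j=0$ for every leaf $j$. Here I apply \cref{lem:psb:bound} row by row. For each row $r$ with $\Delta^{(j)}_r\neq 0$, we have $\Delta^{(j)}_rQ_j=0$, so by full spark (\cref{lem:psb:spark}) $\|\Delta^{(j)}_r\|_0\geq l_j+1$. Rows are disjoint sets of coordinates, so summing over rows and leaves gives
\begin{equation}
\|\Delta\|_0\;\geq\;\sum_j\sum_r \mathbf 1_{\Delta^{(j)}_r\neq 0}(l_j+1)\;\geq\;G(\Delta),
\end{equation}
where the $\min$ with $|\Theta|$ in $G$ only weakens the bound and stays valid since $\|\Delta\|_0\le|\Theta|$. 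Under the challenge rule of \cref{sec:update}, with probability $p_{\text{check}}$ the update is replayed and one coordinate $x$ is sampled, uniformly and independently of $\Delta$ by \cref{ax:challenge}. The replay produces $\theta'_{i+1}$ equal to the honest value, and the attested coordinate is compared with it. Detection follows whenever $x\in\operatorname{supp}(\Delta)$, which has probability $\|\Delta\|_0/|\Theta|\ge G(\Delta)/|\Theta|$. Hence $\Pr[\text{detected}]\ge p_{\text{check}}G(\Delta)/|\Theta|$, which gives the stated complement bound. Any forgery of the attested pair $u_i^{(4)}$ is a signature or commitment break, again absorbed into $\operatorname{negl}(\lambda)$ by \cref{ax:hashes}.

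\textbf{Combining the cases.} Case 1 is bounded by $\operatorname{negl}(\lambda)$, and Case 2 is bounded by $1-p_{\text{check}}G(\Delta)/|\Theta|+\operatorname{negl}(\lambda)$. The maximum of the two is at most the claimed expression.

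\textbf{Main obstacle.} The hard step is justifying, in Case 2, that the sampled coordinate is independent of the adversary's choice of $\Delta$. This requires that $\Delta$ be fixed before $\mathcal H_{\text{record}}$ reveals the challenge. That ordering comes from $u_i^{(3)}$ being derived from the already-committed $u_i^{(1)},u_i^{(2)}$, and ratcheting prevents regrinding (\cref{ax:challenge}). A secondary subtlety is that \cref{lem:psb:spark} holds only almost surely over $Q$, and in exact arithmetic. I will state the bound conditional on the full-spark event, which has probability one, and defer floating-point effects to the empirical evaluation in \cref{sec:expr}, as the paper does.
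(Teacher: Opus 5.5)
Your decomposition matches the paper's: sketch-visible versus sketch-invisible deviation, the row-by-row full-spark bound $\|\Delta\|_0\ge G(\Delta)$, and a uniformly sampled challenged coordinate. Your Case 2 reproduces the paper's core computation. The gap is in Case 1.

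In the setting where this lemma is used, $\Theta$ is the honest output $f(\theta_i,s_i,d_i)$ and $\tilde\Theta$ is what $A$ actually produced. $B$ sketches the buffer it is handed, so the tape records $C_\theta(\tilde\Theta)$ as the post-state commitment. The next update's pre-state commitment is also $C_\theta(\tilde\Theta)$. Consequently $V_{\text{tape}}$'s adjacency check and the parameter-continuity axiom both pass. $V_{\text{bind}}$ also passes, since it only ties $\theta_f$ to whatever was committed. None of these mechanisms ever computes $C_\theta(\Theta)$; the honest sketch is produced only by the challenge replay, which happens with probability $p_{\text{check}}$.

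So a sketch-visible deviation survives every unchallenged update. Case 1 is therefore bounded by $1-p_{\text{check}}+\operatorname{negl}(\lambda)$, not by $\operatorname{negl}(\lambda)$. Your Case 2 already uses exactly this replay model ("the replay produces $\theta'_{i+1}$ equal to the honest value"), so your two cases rest on inconsistent accounts of what the verifier can compare. This is why the paper's bound begins with an unconditional $(1-p_{\text{check}})$ term for unchallenged updates. It separates $E_n$ from its complement only inside the challenged branch, where a visible sketch change is caught up to a hash collision.

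The repair is short. On a challenged update, a sketch-visible deviation is detected unless $\mathcal H_{\text{sketch}}$ collides, so Case 1 gives $\Pr[\text{undetected}]\le 1-p_{\text{check}}+\operatorname{negl}(\lambda)$. Since $G(\Delta)\le|\Theta|$ by the $\min$ in its definition, $1-p_{\text{check}}\le 1-p_{\text{check}}G(\Delta)/|\Theta|$. Your max-combination of the two cases therefore still yields the stated bound.
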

\end{mdframed}

\begin{proof}[Proof of \cref{lem:psb:boundsurviv}] \label{appx:sec:psb:boundsurviv}
  First, observe that the sketch not detecting a difference implies either no cheating or
  \begin{equation}
    \norm{\Delta}_{0} \geq G\qty(\Delta)
  \end{equation} 
  by considering each row of $\Delta$ and applying \cref{lem:psb:bound}. Consider now the single-parameter challenge scheme described in \cref{sec:update}. For every ``checked'' parameter, we obtain exact assurance for the computation of the parameter. Thus, on a challenged update, a cheating parameter going unnoticed has to survive two events: $E_{n}$ that it has to lie in the nullspace of the sketch, and $E_{c}$, that it goes unchecked. Specifically:
  \begin{equation}
 \Pr\qty[\text{undetected}]\leq\qty(1-p_{\text{check}})+p_{\text{check}}\Pr\qty(E_c\mid E_n)\Pr\qty(E_n)+\operatorname{negl}\qty(\lambda)
  \end{equation} 
  with security parameter of the final hash $\lambda$.

  Since we are working against an adaptive $\mathcal{A}$, who may explicitly optimize for $\Pr\qty(E_{n})$ (e.g., all her parameters lie in the sketch null space), let's set the worst case:
    \begin{equation}
\Pr\qty[\text{undetected}]\leq\qty(1-p_{\text{check}})+p_{\text{check}}\Pr\qty(E_c\mid E_n)+\operatorname{negl}\qty(\lambda).
  \end{equation} 

  Now begin by observing that:
  \begin{equation}
\Pr\qty(E_{c}\mid E_n)=1-\frac{\norm{\Delta}_{0}}{|\Theta|}
  \end{equation} 
  that is, a uniformly sampled check will only catch a parameter if they differ in the delta, and we are sampling the checked parameter uniformly; note that we only challenge an update with chance $p_{\text{check}}$.
  Now apply the fact that we are conditioned on $E_{n}$, that is, we know from above that $\norm{\Delta}_{0} \geq G\qty(\Delta)$ if the parameters lie in the sketch null space; we can write:
    \begin{equation}
\Pr\qty(E_{c}\mid E_n) \leq 1-\frac{G\qty(\Delta)}{|\Theta|}.
  \end{equation} 

  Finally, replacing the LHS and carrying the inequality through:
  \begin{equation}
    \Pr\qty[\text{undetected}] \leq 1-p_{\text{check}}\qty(\frac{G\qty(\Delta)}{|\Theta|}) + \operatorname{negl}\qty(\lambda).
  \end{equation} 

\end{proof}

The proof idea here is to apply \cref{lem:psb:bound} to every row of each parameter matrix, and to observe that either the sketch catches a cheat or the cheat lies in the sketch null space, meaning the sampling has more chances to catch the cheat.  A full proof of this is given in \cref{appx:sec:psb:boundsurviv}.

\subsection{Tape Integrity}
\label{appx:sec:tape-integrity}

Now that we have proofs for each of our basic objects, we will now complete the proof by making some statements about the proof tape appealing to usual cryptographic primitives, before moving on to our final theorems.

Let's first make the statement which makes explicit the ``chained'' structure of our tape:

\begin{lemma}[state continuity]\label{lem:chain}
  Verifier \textbf{accept} means that for every update $u_i$, $i\in\{0,\dots,n-2\}$ for $n$ training steps, we have:
  \begin{align}
&\theta^{\text{post}}\qty(u_i)=\theta^{\text{pre}}\qty(u_{i+1}) \\
&C_{\theta}^{\text{post}}\qty(u_{i}) = C_{\theta}^{\text{pre}}\qty(u_{i+1}) \\
&C_{s}^{\text{post}}\qty(u_{i}) = C_{s}^{\text{pre}}\qty(u_{i+1}) 
  \end{align} 
  that is, each update uses parameter material that the previous update generated, and optimizer-state material up to hash collision.
\end{lemma}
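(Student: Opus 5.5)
The plan is to read all three equalities off what the verifier explicitly checks. The main tool is \cref{ax:hashes}, which lets us turn hash and signature checks into equalities of the underlying material; \cref{ax:param-continuity} supplies the part the hashes cannot give.

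\textbf{Step 1: the verifier sees the tape the prover produced.} Acceptance means $V_{\text{sig}}$ and $V_{\text{tape}}$ both passed. $V_{\text{sig}}$ validates the root record $R$ against a known build of $B_{\text{att}}$. $V_{\text{tape}}$ then walks the ratcheted chain of \cref{appx:sec:rachet}. For each link, $\sigma_i$ verifies under $\mathrm{vk}_i$, and $\mathrm{vk}_i$ was itself committed inside the signed $\tau_{i-1}$. Each link also carries $\mathcal H_{\text{record}}(\tau_{i-1})$, so the links are bound in order. By EUF-CMA security and collision resistance in \cref{ax:hashes}, a polytime $A$ can splice, reorder, or rewrite a payload $\bar\tau_i$ only with probability $\operatorname{negl}(\lambda)$. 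I would argue this by induction on $i$, starting from the root-of-trust signature on $R_0$. The conclusion is that the accepted sequence $u_0,\dots,u_{n-1}$ is, up to negligible probability, exactly the sequence of records signed by $B_{\text{att}}$, in order.

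\textbf{Step 2: the two commitment equalities.} These follow directly from $V_{\text{tape}}$, which explicitly compares the post-state commitments in $u_i^{(2)}$ with the pre-state commitments in $u_{i+1}^{(1)}$. Acceptance therefore gives $C_\theta^{\text{post}}(u_i)=C_\theta^{\text{pre}}(u_{i+1})$ and $C_s^{\text{post}}(u_i)=C_s^{\text{pre}}(u_{i+1})$ as equalities of recorded digests. The equality of underlying optimizer states, rather than of digests, holds only ``up to hash collision'', as the statement says. Here I would invoke \cref{lem:osb}: for $s\neq s'$ the collision probability is at most $2^{-T}+\operatorname{negl}(\lambda)$. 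That bound applies because the hidden family $(a,b)$ used for the two copies of $s_i$ is sampled only after both copies are fixed.

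\textbf{Step 3: the parameter buffers themselves, the main obstacle.} The first equality, $\theta^{\text{post}}(u_i)=\theta^{\text{pre}}(u_{i+1})$, is about the actual buffers, not sketches. Here the geometric sketch alone is insufficient: $C_\theta$ is a public projection, so by \cref{lem:psb:bound} it has a nullspace of non-sparse differences, and an equal digest does not force equal parameters. I would not try to derive this from the sketch. Instead I would rest it on \cref{ax:param-continuity}: $B_{\text{att}}$ enforces post/pre buffer identity, and $V$ validates that attestation. By Step 1 the validated attestation cannot be forged except with negligible probability, since $B_{\text{att}}$ is sound by \cref{ax:hashes}. So acceptance implies buffer identity. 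The delicate point is making sure the attestation is bound to the same chain link, so that it cannot be replayed from another session. I would handle this through the nonce $\nu$ and \cref{ax:challenge}, which registers one attempt per nonce. Taking a union bound over the $n-1$ boundaries, each failure event has probability $\operatorname{negl}(\lambda)$, so the total failure probability is $(n-1)\operatorname{negl}(\lambda)$. This remains negligible because $n$ is polynomial by the prover-limited axiom.
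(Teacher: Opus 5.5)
Your proposal is correct and follows the same route as the paper's two-line proof. $V_{\text{tape}}$ gives the two commitment equalities, \cref{lem:osb} upgrades optimizer-state digest equality to state equality up to $2^{-T}+\operatorname{negl}(\lambda)$, and parameter-buffer identity rests on \cref{ax:param-continuity} as validated by the verifier. Your chain-integrity induction, session-binding remark, and union bound over the $n-1$ boundaries add rigor the paper leaves implicit; they do not change the approach.
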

\begin{proof}
  $V_{\text{sig}}$ validates \cref{ax:param-continuity}, and $V_{\text{tape}}$ as described in \cref{sec:param:bind} explicitly checks the commitments.
  By \cref{lem:osb}, optimizer-state substitution succeeds with probability at most $2^{-T}+\operatorname{negl}\qty(\lambda)$.
\end{proof}

At this point, we can formalize that every form of cheating within a transition is caught by at least one previously-bounded mechanism. Let $f$ be the attested transition and $\tilde{f}$ be a potentially-malicious transition that is executed, and let $\theta', s' = f\qty(\theta, s,d)$, and $\tilde{\theta}, \tilde{s} = \tilde{f}\qty(\theta, s,d)$. Define $\Delta = \theta' - \tilde{\theta}$, $g_{\min}=\min_j\qty(l_j+1)$, and
\begin{align}
\epsilon_{\theta}\qty(\Delta)&=\min\qty(1,1-p_{\text{check}}\frac{G\qty(\Delta)}{|\Theta|}+\operatorname{negl}\qty(\lambda)),\\
\epsilon_s&=1-p_{\text{check}},\\
\bar\epsilon\qty(c)&=\max\qty(\min\qty(1,1-p_{\text{check}}\frac{\min\qty(g_{\min},|\Theta|)}{|\Theta|}+\operatorname{negl}\qty(\lambda)),1-p_{\text{check}}).
\end{align}

\begin{lemma}[all cheats are bounded]\label{lem:bound}
  \begin{equation}
    \Pr\qty[V\text{ accepts update }i\mid\theta'\neq\tilde\theta\vee s'\neq\tilde s]\leq\bar\epsilon\qty(c).
  \end{equation} 
\end{lemma}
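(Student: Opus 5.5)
We argue by casework on which part of the post-state the executed transition $\tilde f$ corrupts, following the three routes named in \cref{sec:security}: only the optimizer state differs ($\theta'=\tilde\theta$, $s'\neq\tilde s$), only the parameters differ ($\theta'\neq\tilde\theta$, $s'=\tilde s$), or both differ. The event $\theta'\neq\tilde\theta\vee s'\neq\tilde s$ is the disjoint union of these three cases. So it suffices to bound the acceptance probability of update $i$ in each case by one of the two terms inside the $\max$ defining $\bar\epsilon(c)$. Throughout, we first use \cref{lem:chain}, together with \cref{ax:param-continuity} and \cref{ax:hashes}, to pin down what the adversary cannot change. The pre-state commitments of $u_i$ must equal the post-state commitments of $u_{i-1}$, so $(\theta,s,d)$ fed to the replay is the committed one. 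The commitments $C_\theta(\tilde\theta)$ and $C_s(\tilde s)$ are fixed on tape before $p_{\text{challenge}}$ is derived. By \cref{ax:challenge}, the challenge event therefore occurs with probability $p_{\text{check}}$ independently of the cheat.

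\emph{Case $s'\neq\tilde s$ (covering both the state-only and the combined case).} On a challenged update, $V_{\text{replay}}$ recomputes $f(\theta,s,d)$ and compares the full $s'$ against the committed $C_s(\tilde s)$. This comparison passes only if $C_s(s')=C_s(\tilde s)$, which by \cref{lem:osb} happens with probability at most $2^{-T}+\operatorname{negl}(\lambda)$. The adversary can therefore only hope the update is unchallenged, giving acceptance probability at most $(1-p_{\text{check}})+p_{\text{check}}(2^{-T}+\operatorname{negl}(\lambda))$. We will absorb the $2^{-T}$ term into the negligible slack by treating $T$ as part of the security configuration $c$, yielding $\epsilon_s=1-p_{\text{check}}$ up to negligible terms. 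If this absorption looks illegitimate, the fallback is to carry $p_{\text{check}}2^{-T}$ explicitly into $\bar\epsilon$. The combined case is dominated by this case, since the state check alone already rejects.

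\emph{Case $\theta'\neq\tilde\theta$, $s'=\tilde s$.} Here we invoke \cref{lem:psb:boundsurviv} with $\Delta=\theta'-\tilde\theta$. On a challenged update the replayed sketch $C_\theta(\theta')$ is compared with the committed $C_\theta(\tilde\theta)$. By \cref{ax:hashes} these agree (up to $\operatorname{negl}(\lambda)$) only if $\Delta Q_j=0$ on every leaf. In that event \cref{lem:psb:bound} forces each nonzero row to have at least $l_j+1$ nonzero entries, and the uniformly sampled slot then catches the cheat with probability at least $G(\Delta)/|\Theta|$. This gives acceptance probability at most $\epsilon_\theta(\Delta)$. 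To remove the dependence on $\Delta$, note that $\Delta\neq 0$ has at least one nonzero row, so $G(\Delta)\geq\min(g_{\min},|\Theta|)$. Hence $\epsilon_\theta(\Delta)$ is at most the first term of $\bar\epsilon(c)$. Taking the maximum over the cases finishes the proof.

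The main obstacle is the step that makes the challenge independent of the cheat: the adversary could in principle choose $\tilde f$ adaptively after seeing whether update $i$ is challenged. We handle this by stressing the ordering in \cref{sec:appx:update}. Here $u_i^{(3)}$ is a hash of $u_i^{(1)}$ and $u_i^{(2)}$, the ratchet forbids regrinding, and \cref{ax:challenge} then applies. Any successful adaptive strategy would yield a hash collision or signature forgery, which is charged to $\operatorname{negl}(\lambda)$.
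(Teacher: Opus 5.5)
Your decomposition is the paper's: the same three cases, the parameter case closed by \cref{lem:psb:boundsurviv}, and a maximum over cases. Your explicit remark that any $\Delta\neq0$ has a nonzero row, so $G(\Delta)\geq\min(g_{\min},|\Theta|)$ and hence $\epsilon_\theta(\Delta)\leq\bar\epsilon(c)$, fills in a step the paper only asserts. Bounding the combined case by the state case alone, instead of the paper's $\min(\epsilon_s,\epsilon_\theta(\Delta))$, is equally valid.

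The step that fails as written is the state case. You compare the replayed state to the tape only through the commitment, $C_s(s')$ versus $C_s(\tilde s)$. This brings in the $2^{-T}$ collision term of \cref{lem:osb}, which you then propose to absorb into $\operatorname{negl}(\lambda)$. That is not allowed: $T$ is a fixed entry of the configuration $c$, not a function of $\lambda$, so for any fixed $T$ the term is a constant (e.g.\ $1/2$ at $T=1$). Your fallback of carrying $p_{\text{check}}2^{-T}$ explicitly proves a different bound from the one stated.

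The missing observation is that the challenged replay is exact on the optimizer state. The prover recomputes $s'=f(\theta,s,d)$ and compares the full recomputed state against the executed one, not just its hash: the state buffer is ``locally checked'' in the Challenge Replay record, and $V_{\text{replay}}$ verifies the ``full optimizer state''. That is why the paper explicitly allows the sub-case $C_s(\tilde s)=C_s(s')$ and still concludes detection. So any $s'\neq\tilde s$ is rejected with certainty on a challenged update. Combined with your correct challenge-independence argument, this gives exactly $\epsilon_s=1-p_{\text{check}}$ with no collision term.

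If you wanted to keep the hash-level comparison, the only honest repair would be the slack in the maximum. Here $\bar\epsilon(c)\geq1-p_{\text{check}}\min(g_{\min},|\Theta|)/|\Theta|$, which dominates your bound (up to negligible terms) exactly when $2^{-T}\leq1-\min(g_{\min},|\Theta|)/|\Theta|$. That is an extra hypothesis the lemma does not make.

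For the same reason, you should not lean on \cref{lem:chain} inside this lemma. Here $(\theta',s')$ and $(\tilde\theta,\tilde s)$ are defined from the same input $(\theta,s,d)$, so pre-state continuity is not needed, and invoking it would import its own $2^{-T}$ slack.
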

\begin{proof}
  We consider casework. For the case where $\theta' \neq \tilde{\theta} \vee s' \neq \tilde{s}$ (i.e., there was a functional cheat), only one of three things may occur:
  \paragraph{$\theta' \neq \tilde{\theta}, s' = \tilde{s}$} In this case, our parameter update can fall only into one of two mutually exclusive cases: $C_{\theta}(\tilde{\theta}) \neq C_{\theta}(\theta')$ or $C_{\theta}(\tilde{\theta}) = C_{\theta}(\theta')$. In either case, we directly appeal to \cref{lem:psb:boundsurviv} to see that the chance of undetection is at most $\epsilon_{\theta}\qty(\Delta)\leq\bar\epsilon\qty(c)$. That is, $\Pr\qty(\text{accept}\mid\theta'\neq\tilde\theta,s'=\tilde s)\leq\epsilon_\theta\qty(\Delta)$.

  \paragraph{$\theta' = \tilde{\theta}, s' \neq \tilde{s}$} Similar to before, our state update can fall into one of two mutually exclusive cases: $C_{s}(\tilde{s}) \neq C_{s}(s')$ or $C_{s}(\tilde{s}) = C_{s}(s')$. In either case, exact challenged replay gives a chance of undetection at most $\epsilon_s\leq\bar\epsilon\qty(c)$. That is, $\Pr\qty(\text{accept}\mid\theta'=\tilde\theta,s'\neq\tilde s)\leq\epsilon_s$.

  \paragraph{$\theta' \neq \tilde{\theta}, s' \neq \tilde{s}$} The probability of acceptance for cheating in both is at most the minimum of the two as the first-detected failure would result in rejection. That is, $\Pr\qty(\text{accept}\mid\theta'\neq\tilde\theta,s'\neq\tilde s)\leq\min\qty(\epsilon_s,\epsilon_\theta\qty(\Delta))$.

  Thus the disjunction of all three cases is upper-bounded by $\max\qty(\epsilon_{\theta}\qty(\Delta),\epsilon_s)\leq\bar\epsilon\qty(c)$.
\end{proof}

With this lemma, we are now ready to claim our desired statement about tape soundness.
\begin{mdframed}
\begin{theorem}[Tape update checking is sound] \label{prop:psb:boundsurviv-ckp}
  \begin{equation}
\Pr\qty[V\qty(\tau,c,\nu,\theta_0,s_0,\mathcal H_{\text{func}}\qty(f),\theta_f)=1\mid I_f\qty(\tau)\geq k]\leq\varepsilon_1\qty(k,c).
  \end{equation} 
\end{theorem}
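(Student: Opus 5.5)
The plan is to amplify the per-update bound of \cref{lem:bound} across the $k$ invalid transitions. I expect the natural choice to be $\varepsilon_1\qty(k,c)=\bar\epsilon\qty(c)^k+\operatorname{negl}\qty(\lambda)$, with an additional $2^{-T}$ slack per step from \cref{lem:osb} if needed. First, I condition on the tape being structurally accepted by $V_{\text{sig}}$, $V_{\text{func}}$ and $V_{\text{tape}}$. Otherwise $V$ already rejects, except on forgery events (signature forgery, hash collision, or a broken $B_{\text{att}}$), which \cref{ax:hashes} bounds by $\operatorname{negl}\qty(\lambda)$. On this event, \cref{lem:chain} gives state continuity: each update's pre-state equals the previous update's post-state. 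So the $k$ invalid indices $i_1<\dots<i_k$ in $I_f\qty(\tau)$ are genuine functional cheats at those positions, each with $\theta'\neq\tilde\theta\vee s'\neq\tilde s$ relative to the committed pre-state. The adversary cannot instead ``splice'' a cheat between records.

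Next, I apply \cref{lem:bound} at each invalid index. Acceptance of update $i_j$ happens with probability at most $\bar\epsilon\qty(c)$, conditioned on everything recorded before it. The independence needed here comes from the deterministic sampling step in \cref{sec:appx:update} together with \cref{ax:challenge}. The challenge bit $p_{\text{challenge}}$ and the challenged coordinate are derived from $\mathcal H_{\text{record}}\qty(\nu,\tau_{i-1},u_i^{(1)},u_i^{(2)})$ only after the post-state commitments are fixed, and the ratcheted key prevents re-evaluation. So, conditioned on the prefix $\tau_{i_j-1}$ and the committed sketches, each challenge is a fresh uniform draw, independent of earlier challenges. Also, each nonce $\nu$ permits a single attempt, so the adversary cannot grind whole runs. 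Chaining conditional probabilities then gives
\begin{equation}
\Pr\qty[V=1\mid I_f\qty(\tau)\geq k]\leq\prod_{j=1}^{k}\bar\epsilon\qty(c)+\operatorname{negl}\qty(\lambda)=\bar\epsilon\qty(c)^k+\operatorname{negl}\qty(\lambda),
\end{equation}
where a union bound over the polynomially many records keeps the total cryptographic slack negligible, using that the prover is limited.

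The main obstacle is justifying this product step against an adaptive adversary. The adversary chooses the cheat at step $i_{j+1}$ (including $\Delta$, and hence $G\qty(\Delta)$) after seeing whether step $i_j$ was challenged. \cref{lem:bound} must therefore be used as a bound that holds uniformly over all cheat choices. This is why I use the worst-case $\bar\epsilon\qty(c)$, which depends only on $g_{\min}$ and $p_{\text{check}}$, rather than $\epsilon_\theta\qty(\Delta)$. With that choice, the conditional bound holds for every history and the product follows by a martingale-style induction on $j$. I would also verify that the adversary cannot bias the PRNG input by choosing among many candidate $u_i^{(2)}$. Such grinding is excluded by the single-use ratchet of \cref{appx:sec:rachet} and by \cref{ax:challenge}, and any residual grinding advantage gets absorbed into $\operatorname{negl}\qty(\lambda)$.
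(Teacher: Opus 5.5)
\textbf{Overall route.} Your argument follows the paper's proof. The paper also invokes \cref{lem:chain}, writes survival of the invalid updates $J$ as $\prod_{i\in J}\Pr[E_i\mid E_{<i}]$, and bounds each factor by $\bar\epsilon(c)$ via \cref{ax:challenge} and \cref{lem:bound}. Your explicit $\operatorname{negl}(\lambda)$ for forged structure is a refinement the paper omits.

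\textbf{The gap.} It lies in the step you yourself flag as the main obstacle. Against an adaptive prover, a martingale induction over the stopping times of the first $k$ invalid updates gives only the \emph{joint} bound $\Pr[V=1\wedge I_f(\tau)\geq k]\leq\bar\epsilon(c)^k+\operatorname{negl}(\lambda)$. Your display asserts the \emph{conditional} bound. Passing from one to the other means dividing by $\Pr[I_f(\tau)\geq k]$, and the prover controls that quantity.

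Since $A$ and $B$ collude and the challenge decisions $u_i^{(3)}$ are written on the trainer's side, $A$ knows after each step whether it was challenged. Let $A$ make optimizer-state cheats (which survive whenever unchallenged) at $k-1$ steps. Let it make a $k$-th invalid update only if none of those steps was challenged. Then $\Pr[I_f(\tau)\geq k]=(1-p_{\text{check}})^{k-1}$, while $\Pr[V=1\mid I_f(\tau)\geq k]\geq1-p_{\text{check}}$ for every $k$. Since $\bar\epsilon(c)<1$, this exceeds $\bar\epsilon(c)^k$ once $k$ is large, so the conditional claim fails against exactly the adversary you describe.

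Using the worst-case $\bar\epsilon(c)$ makes each factor uniform over \emph{which} cheat is played, not over \emph{whether} later cheats are played, so it cannot repair this. The paper's proof has the same hole, hidden by treating $J$ as fixed. What your argument actually proves is the joint bound. That bound gives the conditional one only when $\Pr[I_f(\tau)\geq k]=1$ (e.g.\ the invalid steps are fixed before any challenge is revealed). You should state the result in one of those two forms.

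\textbf{Secondary point.} Do not carry the $2^{-T}$ from \cref{lem:osb} as additive per-step slack. $T$ is a protocol parameter, not the security parameter, so your union bound does not absorb $n\,2^{-T}$ into $\operatorname{negl}(\lambda)$. A state substitution across a record boundary passes the continuity check only with probability $2^{-T}+\operatorname{negl}(\lambda)$, far below $\bar\epsilon(c)\geq1-p_{\text{check}}$. So if it is counted as an invalid update, it simply contributes one more factor of at most $\bar\epsilon(c)$ to the product.
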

\end{mdframed}
\begin{proof}
  Notice that given \cref{lem:chain}, a cheating tape is recognized at least as much as each of its individual cheats are recognized. Let $J=\qty{i:\text{update }i\text{ is invalid}}$, and let $E_i$ be the event that cheating survives step $i$. By the chain rule, we have that:
  \begin{equation}
    \Pr\qty[\bigcap_{i\in J}E_i]=\prod_{i\in J}\Pr\qty[E_i\mid E_{<i}].
  \end{equation} 
  Applying \cref{ax:challenge,lem:bound}, we have:
  \begin{equation}
    \Pr\qty[\bigcap_{i\in J}E_i]\leq\prod_{i\in J}\bar\epsilon\qty(c)=\bar\epsilon\qty(c)^{|J|}.
  \end{equation} 
  Finally, our verifier will reject all faulty updates it finds, so it will not reject only when updates slip past unnoticed, so:
  \begin{equation}
    \Pr\qty[\text{accept}]\leq\bar\epsilon\qty(c)^{|J|}\leq\bar\epsilon\qty(c)^k.
  \end{equation} 
  Thus, define $\varepsilon_1\qty(k,c)=\bar\epsilon\qty(c)^k$, which gives our desired cheating rate.
\end{proof}

\subsection{Checkpoint Integrity}
\label{appx:sec:checkpoint-integrity}

We can now prove the other side of the soundness claim by essentially repeating the argument given in \cref{lem:psb:boundsurviv}. Specifically, we have:

\begin{corollary}[Security of final checkpoint $C_{\theta}\qty(\theta_{f})$]\label{coro:psb:boundsurviv-ckp}
  For $\Delta=\tilde\Theta-\Theta$ fixed by the full checkpoint commitment, the chance that $\Theta \neq \tilde{\Theta}$ survives unnoticed with $q$ committed value samples is upper-bounded:
  \begin{equation}
    \Pr\qty[C_{\theta}\qty(\Theta)=C_{\theta}\qty(\tilde{\Theta})\wedge\text{samples agree}]\leq\qty(1-\frac{G\qty(\Delta)}{|\Theta|})^q+\operatorname{negl}\qty(\lambda).
  \end{equation}
  This follows from exactly the argument given in \cref{appx:sec:psb:boundsurviv} for \cref{lem:psb:boundsurviv}, except there are now $q$ independently sampled checked locations after the full commitment. Thus, the joint probability of all $q$ challenges missing is the product of the probabilities of each challenge missing.
\end{corollary}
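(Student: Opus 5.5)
The plan is to reuse the decomposition from the proof of \cref{lem:psb:boundsurviv}, with two changes: the single challenged coordinate becomes $q$ challenged coordinates, and the $p_{\text{check}}$ gate disappears because a checkpoint record is always sampled. Throughout, $\Delta=\tilde\Theta-\Theta$ is fixed before sampling. The reason is that $h_i^{\text{ckpt}}=\mathcal{H}_{\text{ckpt}}(\operatorname{ser}(\nu,\tau_{i-1},u_i^{(1)},\dots,u_i^{(4)},\theta_{i+1}))$ is committed first, and only afterwards are the positions $x_1,\dots,x_q$ derived from $\mathcal{H}_{\text{record}}(h_i^{\text{ckpt}})$. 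By \cref{ax:hashes}, an adversary who changes $\tilde\Theta$ after seeing the positions must either find a collision in $\mathcal{H}_{\text{ckpt}}$ or forge the signed chain of \cref{appx:sec:rachet}. Both events cost at most $\operatorname{negl}(\lambda)$, and I will fold that into the additive term.

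First I will split on the sketch. If $C_\theta(\Theta)\neq C_\theta(\tilde\Theta)$, then $V_{\text{bind}}$ rejects, up to a collision in the Merkle fold of \cref{sec:alg:sketchcommit}, which again costs $\operatorname{negl}(\lambda)$. Otherwise the pre-hash projections agree, so $\Delta^{(j)}Q_j=0$ for every leaf $j$. Applying \cref{lem:psb:bound} row by row, using \cref{lem:psb:spark}, shows that every nonzero row $r$ of $\Delta^{(j)}$ has at least $l_j+1$ nonzero entries. Summing over rows and capping at $|\Theta|$ gives $\norm{\Delta}_0\geq G(\Delta)$, which is exactly the first step of the proof in \cref{appx:sec:psb:boundsurviv}.

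Next I will bound the probability that the samples agree. Conditioned on the fixed $\Delta$, \cref{ax:challenge} (applied through the random-oracle behaviour of $\mathcal{H}_{\text{record}}$) makes each $x_t$ uniform over the $|\Theta|$ coordinates and independent of the others. A single sample agrees only if it lands off the support of $\Delta$, which happens with probability $1-\norm{\Delta}_0/|\Theta|\leq 1-G(\Delta)/|\Theta|$. By independence, all $q$ samples agree with probability at most $(1-G(\Delta)/|\Theta|)^q$. A union bound over the negligible failure events then gives the stated inequality.

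The main obstacle is justifying independence and uniformity of the $q$ positions when they all come from one PRNG seed. \cref{ax:challenge} speaks of a single position per update, and the $q$ positions are also not obviously drawn with replacement; sampling without replacement would only tighten the bound. My first attempt would be to model $\text{PRNG}\circ\mathcal{H}_{\text{record}}$ as a random oracle, which gives $q$ i.i.d. uniform draws, and to reduce any distinguishing advantage to a break of $\mathcal{H}_{\text{record}}$ under \cref{ax:hashes}, absorbing it into $\operatorname{negl}(\lambda)$.
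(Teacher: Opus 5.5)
Your proposal is correct and follows the same route as the paper. You reuse the sketch/null-space step of \cref{lem:psb:boundsurviv} to get $\norm{\Delta}_0\geq G(\Delta)$ whenever the sketches agree, drop the $p_{\text{check}}$ factor because checkpoint samples are always taken, and multiply the per-sample miss probability $1-G(\Delta)/|\Theta|$ over $q$ independent positions, absorbing the commit-before-sample binding and hash collisions into $\operatorname{negl}(\lambda)$. One caveat: the collision resistance in \cref{ax:hashes} does not by itself give uniform, independent positions, so your proposed reduction to a hash break does not go through. Independence of the $q$ samples has to be assumed, either via a random oracle or by extending \cref{ax:challenge} to $q$ positions, which is exactly what the paper does implicitly.
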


From this, we can then give the second part of our soundness result:

\begin{mdframed}
\begin{theorem}[Checkpoint commitment is sound] \label{prop:ckp}
  \begin{equation}
    \Pr\qty[V\qty(\tau,c,\nu,\theta_0,s_0,\mathcal H_{\text{func}}\qty(f),\theta_f)=1\mid I_f\qty(\tau)=0,\theta_f\notin\tau]\leq\varepsilon_2\qty(c).
  \end{equation} 
\end{theorem}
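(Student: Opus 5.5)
The plan is to reduce acceptance of a forged checkpoint to the event bounded in \cref{coro:psb:boundsurviv-ckp}, and then collect the cryptographic failure modes into $\operatorname{negl}(\lambda)$. Condition on $I_f(\tau)=0$ and $\theta_f\notin\tau$. Acceptance requires $V_{\text{bind}}$ to pass. That means (i) $C_\theta(\theta_f)$ appears in the tape, (ii) some checkpoint record $u_i^{(5)}=(h_i^{\text{ckpt}},\theta_{i+1}^{(x_1)},\dots,\theta_{i+1}^{(x_q)})$ has $h_i^{\text{ckpt}}$ recomputing from $\theta_f$, and (iii) the $q$ sampled values agree with $\theta_f$ at $x_1,\dots,x_q$. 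By \cref{lem:chain} and $I_f(\tau)=0$, the committed parameters $\Theta:=\theta_{i+1}$ at that record are the honest ones, and $\theta_f\notin\tau$ means $\tilde\Theta:=\theta_f\neq\Theta$.

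First I would dispose of the cryptographic routes. If (ii) holds with $\theta_f\neq\theta_{i+1}$, then $\mathcal{H}_{\text{ckpt}}$ has a collision, because $\operatorname{ser}$ is injective and includes $\theta_{i+1}$. Alternatively the adversary forged a signed record or an attestation of $B_{\text{att}}$. By \cref{ax:hashes} each of these happens with probability $\operatorname{negl}(\lambda)$, and by a union bound over the polynomially many records the total is still negligible. Here I would note carefully what ``$\theta_f\notin\tau$'' should mean for this to be nontrivial. The forged $\tilde\Theta$ is fixed by the adversary before the challenge positions are revealed, since $x_1,\dots,x_q=\text{PRNG}[\mathcal{H}_{\text{record}}(h_i^{\text{ckpt}})]$ is derived after the full commitment. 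This is exactly the hypothesis ``$\Delta$ fixed by the full checkpoint commitment'' in \cref{coro:psb:boundsurviv-ckp}.

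Next, on the complement of the cryptographic failure events, I would treat the $x_r$ as uniform and independent given $\Delta$. This uses \cref{ax:challenge} together with the ratcheted, secret-stamped $\mathcal{H}_{\text{record}}$, which prevents the adversary from grinding $h^{\text{ckpt}}$, and the single-attempt-per-nonce rule. I would then apply \cref{coro:psb:boundsurviv-ckp}. If the sketch does not already separate $\Theta$ and $\tilde\Theta$, then \cref{lem:psb:bound} applied row by row gives $\|\Delta\|_0\ge G(\Delta)\ge g_{\min}':=\min(g_{\min},|\Theta|)$, since $\Delta\neq0$ has at least one nonzero row. Each sample therefore misses with probability at most $1-g_{\min}'/|\Theta|$. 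Independence of the $q$ samples then gives the bound $\bigl(1-\min(g_{\min},|\Theta|)/|\Theta|\bigr)^q$.

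Finally, I would define $\varepsilon_2(c)=\bigl(1-\min(g_{\min},|\Theta|)/|\Theta|\bigr)^q+\operatorname{negl}(\lambda)$, which depends only on the configuration $c$ (through $p_{\text{sketch}}$ and $q$). I expect the main obstacle to be the independence and uniformity step. The positions are derived deterministically from a hash the adversary influences, so justifying that they act as fresh uniform samples relative to an adversarially chosen $\Delta$ requires arguing that the hash is unpredictable to $A$, because of the secret ratcheted stamp. It also requires that rerolling costs a full restart, which is rejected by the nonce rule. I would handle this by a hybrid argument: replace the PRNG output with true randomness and absorb the distinguishing advantage into $\operatorname{negl}(\lambda)$.
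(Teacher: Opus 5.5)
Your first cryptographic step is the paper's whole proof, and on its own it already finishes the job. Since $\theta_f\notin\tau$ means $\theta_f\neq\theta_{i+1}$ for every checkpoint record, condition (ii) can hold only if the adversary exhibits a collision in $\mathcal H_{\text{ckpt}}$ (with $\operatorname{ser}$ injective). Off that negligible event, and off the signature/attestation forgeries you already union-bound, $V_{\text{bind}}$ rejects with probability $1$. Hence $\varepsilon_2(c)=\operatorname{negl}(\lambda)$, which is exactly how the paper defines it; the paper never invokes \cref{coro:psb:boundsurviv-ckp} here. On the complement of your cryptographic failure events no acceptance probability is left, so the sampling half of your proposal has nothing to bound and should be removed.

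It should be removed rather than kept as harmless slack, because its premise is wrong and it weakens the result. You claim the forged $\tilde\Theta=\theta_f$ is fixed before the positions $x_1,\dots,x_q$ are revealed. What the commitment fixes is the honest $\Theta=\theta_{i+1}$. The condition $\theta_f\notin\tau$ says precisely that $\theta_f$ was never committed, and it is handed to the verifier after the tape is public. To run $V_{\text{bind}}$ the verifier must know $x_1,\dots,x_q$ and reads $Q_1,\dots,Q_m$ off the tape. So an adversary can pick a nonzero $\Delta$ supported on a row containing no sampled position and lying in that row's sketch nullspace, which passes checks (i) and (iii) with probability $1$. The problem is the order of quantifiers, not the quality of the PRNG, so the hybrid argument you flag as the main obstacle cannot fix it; only the full-hash recomputation (ii) stops such a forgery. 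Moreover, $\bigl(1-\min(g_{\min},|\Theta|)/|\Theta|\bigr)^q$ is essentially $1$ unless $q$ is of order $|\Theta|/g_{\min}$. Your stated $\varepsilon_2(c)$ would therefore make the theorem nearly vacuous, whereas your own first step gives a negligible bound.
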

\end{mdframed}

\begin{proof}
  Write $h_i^{\mathrm{ckpt}}\qty(\theta)$ for the checkpoint hash in \cref{sec:update} with the same release prefix. For $\theta_{f}\not\in\tau$, there are two cases. Either $h_i^{\mathrm{ckpt}}\qty(\theta_f)\notin\tau$, in which case $V_{\text{bind}}$ rejects with probability 1, or $h_i^{\mathrm{ckpt}}\qty(\theta_f)\in\tau$ yet $\theta_f$ is not part of training. This implies that $\exists\theta_f'\in\tau$ such that $h_i^{\mathrm{ckpt}}\qty(\theta_f')=h_i^{\mathrm{ckpt}}\qty(\theta_f)$. If so, \cref{ax:hashes} gives directly a negligible bound. Define
  \begin{equation}
    \varepsilon_2\qty(c)=\operatorname{negl}\qty(\lambda).
  \end{equation} 
  This gives our desired cheating rate.
\end{proof}

\subsection{Additional Properties}
\label{appx:sec:additional-properties}
We state, but do not show, four additional claims about our tape. The first two are informally true by inspection, and the first three are evaluated empirically in \cref{sec:expr}.

\paragraph{Streamable Tape} We make no attempts at specifying the structure of the data, so we simply ask that our tape is linearly sized $|\tau|  \leq \kappa\qty(c) n$, where $\kappa\qty(c) > 0$ and that verification happens in one pass with a working set that is $O\qty(1)$ to training updates $n$.

\paragraph{Easily Checkable Tape} We require that verification costs substantially less work than fully reproducing the whole training run. We leave this statement to be less formal than others to enable independence between different platforms and lowering, and with the expectation that ``substantially less'' is nevertheless still an asymptotically linear function of the original training length.

\label{appx:sec:logit-checkpointing}
\paragraph{Logit Checkpointing} During our update function, the proof tape can optionally store the full top-k logit values for each input in the batch. This allows for additional security because of the results shown in \cref{fig:sec:structured-attack} and \cref{fig:sec:optimized-attack}, where we show that even if an attacker is able to find a parameter update that evades the sketch, it is still unlikely to do so without meaningfully changing training distribution performance.

\paragraph{Initial-State Attacks} To enable support for tasks like fine-tuning, we do not make any claims about the initial state parameters other than fine-tuning, and assume that $A$ cannot sneak unbounded undeclared information through their initial state that's not visible by inspection since the verifier also has access to $\theta_{0}, s_{0}$.

Since we want our proofs to be general over the model architecture itself, we do not assume that the model has any specific behavior with respect to its inputs, and thus leave the theoretic bounds of these statements unqualified.

\section{Performant Systems Design}
\label{appx:sec:systems-design}
We now discuss some aspects of the system design that are critical for both the security and the practical performance results given in \cref{sec:res}.

\subsection{Update Program Serialization}
\label{sec:systems:seralize}

Soundness of the algorithm depends on attesting and running a pure update function $f\qty(\theta, s, d) \to  \qty(\theta', s')$. Enforcing this over typical ML code is non-trivial because there's no obvious decomposition of arbitrary user code without explicit side effects.

We therefore must design a stable, serializable closure over any user's choice of $f$. This closure is important for two reasons: first, we need to make sure that side-channel information cannot be passed through a side-channel into $f$, thus defeating the purpose of $d$ being the only source of data; second, we need to bound the size of $|f|$ by explicit measurement (i.e., an $f$ that closes over all pretraining data does not need $d$ to add information to the model).

To achieve this, we use the insight that \textit{accelerator program serialization already provides a source of functional environment closure} which we can serialize and reuse. Our Python facade observes any user code which ran during one attested training step, and then calls the standard serialization pathway on the resulting graph which the user returns to us. This graph serialization is already available as \texttt{jax.export.export} for Jax and \texttt{torch.aot\_compile.export} for Torch.

This setup requires only the user to disclose what tensor corresponds to which logical elements (e.g., tensor, optimizer state, etc.), and assumes all other inputs and outputs are static information to be closed on. The resulting compiled binary is then measured for size, signed, and identically rehydrated for use whenever computation on $f$ is required.

\subsection{Interleaved State Offloading and Recovery}
\label{sec:systems:offloading}

In order to minimize memory usage for any given flow control configuration, we make two interleaved offloading decisions that minimize and hide the overhead of memory copying and loading.

For any given update $u_{i}$, there are five data structures worth considering: $\theta_{i}, \theta_{i+1}, \Delta_{i}, s_{i}, s_{i+1}$. In naive training, we keep three buffers \texttt{T}, \texttt{D} and \texttt{S}, whereby we compute logically \texttt{*T, *S = update(*T, *S)} and use \texttt{D} to store $\Delta_{i}$ intermediates. This setup ``donates'' buffers for $\theta_{i},s_{i}$ (\texttt{T} and \texttt{S} respectively) for use by $\theta_{i+1}, s_{i+1}$ respectively.

However, naively implementing our algorithm requires \textit{all five} buffers to be kept in memory, in case an update is challenged and thus we must run the committed update function on the attested path using the original inputs.

In order to prevent this, our pipelined update program eagerly dispatches asynchronous copies of the optimizer state and allows usual buffer donation logic to free the optimizer state buffers after copying. Since this copying is interleaved with the forward pass, it usually produces little to no bubbles under the same HBM operation. If an update is challenged, the already-donated $\theta_{i}$ parameters are backed out via $\theta_{i}=\theta_{i+1}-\Delta_{i}$ while the cached state is loaded from host memory. We illustrate the design of this offloading scheme in the pseudocode presented in \cref{fig:offload}.

\begin{figure}[h]
\begin{lstlisting}[language=c]
  void update(buf *T, buf *S, scratch_buf *D) {
    host_buf *S_host = async_copy(S); // eager copying

    *D = compute_deltas(T, S)
    *S = compute_next_state(T, S)
    *T = apply_update(T, D);

    int sketch_host = sketch(T,S,D);

    if (challenged_p(T,S)) { // non hot-path
      S_host -> await();
      *S = copy(S_host);
      S -> await() // expensive
      *T = *T - *D; 

      assert sketch_host  == check(T,S,D);
    } 
  }
\end{lstlisting}
\caption{A logical illustration of offloading design. We don't wait for state offload to finish if an update turns out to be unchallenged. This is sound since we always wait when we need the value in the host buffer; host-side memory can otherwise be without coherence without affecting correctness.}
\label{fig:offload}
\end{figure}

\subsection{Eliding Offload Computation}
\label{appx:sec:eliding-offload}
One other technique that can be used to trade off throughput and memory involves not offloading the optimizer state during every single step, and instead doing so at a fixed interval. Following \citet{choi2023tools}, if a step that is not offloaded eventually becomes challenged, we can simply replay training from the last offloaded step until the challenged step for evaluation. The pace for this offload can be tuned optimally based on empirical measurements of the interconnect speed and compute throughput (it's better to offload more frequently if the interconnect is fast).

Specifically, let $p=p_{\text{check}}$ be the probability we sample an update, $N$ be the number of updates between offloads, and $r=\frac{1}{N}$. Let offloading cost $\gamma$ and replay cost $\epsilon$ per transition. The amortized per-update cost is:

\begin{equation}
g\qty(r) = \gamma r + \frac{1}{2} \epsilon p\qty(\frac{1}{r}-1)
\end{equation} 

This expression is strictly convex and has the solution:

\begin{equation}
r^{*} = \sqrt{\frac{\epsilon p}{2\gamma + \epsilon p}}
\end{equation} 

We can then use this rate to tune the offload frequency to minimize the amortized cost of offloading and replaying. 

\subsection{Invoked Python Sealing}
\label{sec:systems:invoke}

Our attested worker must on occasion call Python APIs to manipulate opaque tensors obtained by Python. In order to prevent memory movement or re-encoding of these objects, we share the same interpreter as the user program. This design introduces a meaningful new security surface: whereby user code runs interleaved with the attested Python calls, thus surfacing potential vulnerabilities where users modify or patch runtime state of attested programs. We provide a best-effort closure which prevents easy exploitation of this vulnerability. Note that a non-whitebox system would use proof of computation techniques to replace these (light) functions.

In order to prevent state modifications, we protect six types of Python state:

\paragraph{Fingerprint} The identity of each attested function is copied and hashed by observing its bytecode. During each call, we repeat this procedure to check that the fingerprints did not change.

\paragraph{Environment} For every attested function, its object, \texttt{\_\_globals\_\_}, value defaults, keyword defaults, any closed state, and any modules are copied using \texttt{deepcopy}. We also check during execution that none of its arguments has a mutable input type.

\paragraph{Code constraints} We inspect each attested function's Python code at each runtime to identify global reads. Explicit reads to dictionaries, lists, sets, byte arrays are rejected at runtime. Specific reflexive operations are explicitly rejected during attested runtime: \texttt{eval}, \texttt{exec}, \texttt{globals}, \texttt{getattr}, \texttt{setattr}, \texttt{open}, \texttt{\_\_import\_\_}.

\paragraph{Bytecode constraints} We inspect each attested function's bytecode to prevent global reads, rejecting functions that invoke \texttt{STORE\_GLOBAL}, \texttt{STORE\_ATTR}, \texttt{STORE\_SUBSCR}, \texttt{DELETE\_*}, \texttt{IMPORT\_*}.

\paragraph{Modules} Each imported function from third-party modules is also sealed recursively but without AST walking. To ensure this is sound, we ensure that our system is the first import before imported modules (we enforce ordering by poisoning imported module context), since we control the correct implementations of each attested function.

\paragraph{Torch/Jax JIT} Just-in-time compiled functions are sealed prior to compilation, and we build a new JIT wrapper and request compilation ourselves. The resulting compiled binary is sealed and attested similar to \cref{sec:systems:seralize}.

Importantly, we control the original implementations of each attested function. So the overall goal here is \textit{not} containment of arbitrary user programs, but to ensure that adversarial modifications to our implementations are caught by a variety of heuristics.

\subsection{Flow Control and Accelerator Memory Management}
\label{sec:systems:flow-control}

Since our driver and proof work are done asynchronously to training processes, we must ensure that buffers needed for the proof are not freed too early while ensuring that we do not create multiple memory buffers in-flight (e.g., having $s_{i}, s_{i+1}$ in flight at the same time, which would quadruple the HBM memory usage). To achieve this, we use a back-pressured flow control system to decide when to admit more training work given the lifetime of the previous update.

This system enforces two invariants: 1) the maximum number of updates scheduled by the driver is always $\leq u_{\max}$, a parameter set by the user; 2) a sampled check's buffers will never be overwritten by training.

To do this, we maintain a state machine for each update $u_{i}$ following three possibilities: \textbf{active}, meaning computation for evidence is still in place, \textbf{done}, meaning computation for evidence is no longer in place but the update is still memory-resident (e.g., a pointer to its resources is still alive), \textbf{retired}, meaning resources have been freed, or \textbf{crashed}, meaning computation failed.

An update moves from \textbf{active} to \textbf{done} when its resulting $\theta_{i+1}, s_{i+1}$ has been produced on the accelerator.

An update moves from \textbf{done} to \textbf{retired} when the broker signals it is not challenging the update, or the challenge is finished. 

An update moves from \textbf{active} to \textbf{crashed} when it crashes.

There are two potential configurations for the flow control system. The \texttt{ALWAYS} configuration dispatches a new update, $u_{i+1}$, if the current updates in flight $|u_{\text{active}}| + |u_{\text{done}}| < u_{\max}$. The \texttt{UNCHECKED} configuration dispatches $u_{i+1}$ if $|u_{\text{active}}| + |u_{\text{done}}| < u_{\max}$ and there are no updates that are currently being challenged. In both configurations, the flow control crashes to poison the rest of training when any update $u_{i}$ returns the crashed state.

This system ensures that users can trade off throughput with memory usage, and ensures that failures are coordinated between the three subsystems and surfaced eventually to the user.

\subsection{Additional Notes on Execution Order and Pipelining}\label{sec:systems:timing}
At a high level, our system uses a standard API, broker, worker pattern, where a facade Python API calls a Rust task broker that maintains a queue of tasks that it dispatches to a single, cooperatively scheduled worker that releases the Python Global Interpreter Lock (GIL) during user code execution. Notice especially that the driver retains all the \textit{logical} control for the training process, but the worker dispatches the actual update.

\Cref{tab:system-timeline} provides a timeline of the execution model of our system. This design allows the worker/accelerator pair to become desynced from driver and broker computations. Specifically, the hot path for a given update (e.g., the \textit{unchallenged update computation}) can be pipelined back-to-back without waiting for the sampling decision or cryptography (broker action) or user verification code (driver action) to finish. Users can dispatch multiple updates which will be fused by the worker to be back-to-back. 

Two implementation details make worker pipelining possible: $\theta_{i}, s_{i}$ are copied for sketching before the forward pass starts, meaning the HBM load for the $\tilde{f}$ forward pass can also be used for offloading, reducing memory bandwidth contention; furthermore, we donate $\theta_{i}$ immediately after usage, such that $\theta_{i+1}$ can be written in-place. To recover $\theta_{i}$ for sampled checks, we simply subtract the not-yet-donated update buffer (since the sampling decision returns before the backwards pass). We describe these mechanisms in detail in \cref{sec:systems:offloading}.

\section{Additional Evaluations}
\label{appx:sec:additional-evaluations}

\paragraph{Sketch Geometry} We additionally specifically evaluate the security of our checkpoint provenance sketch \cref{sec:sketch} by perturbing a final checkpoint randomly and measuring the degree to which such a perturbation is detected by the sketch, validating the discussion in \cref{sec:sketch}. In order to do this, we sweep across different locations in a final checkpoint and inject random changes across $1 \times 10^{-{5}} - 1\times 10^{-1}$ magnitudes. We measure whether our system detects this checkpoint corruption upon validation, and if not, what the relative effect of the perturbation on the output logits is on orthonormal input vectors. We aim to demonstrate that our sketch can detect any perturbations that meaningfully affect the output logits. In \cref{fig:sec:structured-attack}, we show that our sketch is able to detect any structured parameter corruption that meaningfully affects the output logits as measured by output KL divergence.

\begin{figure}[h]
  \centering
  \includegraphics[width=\linewidth]{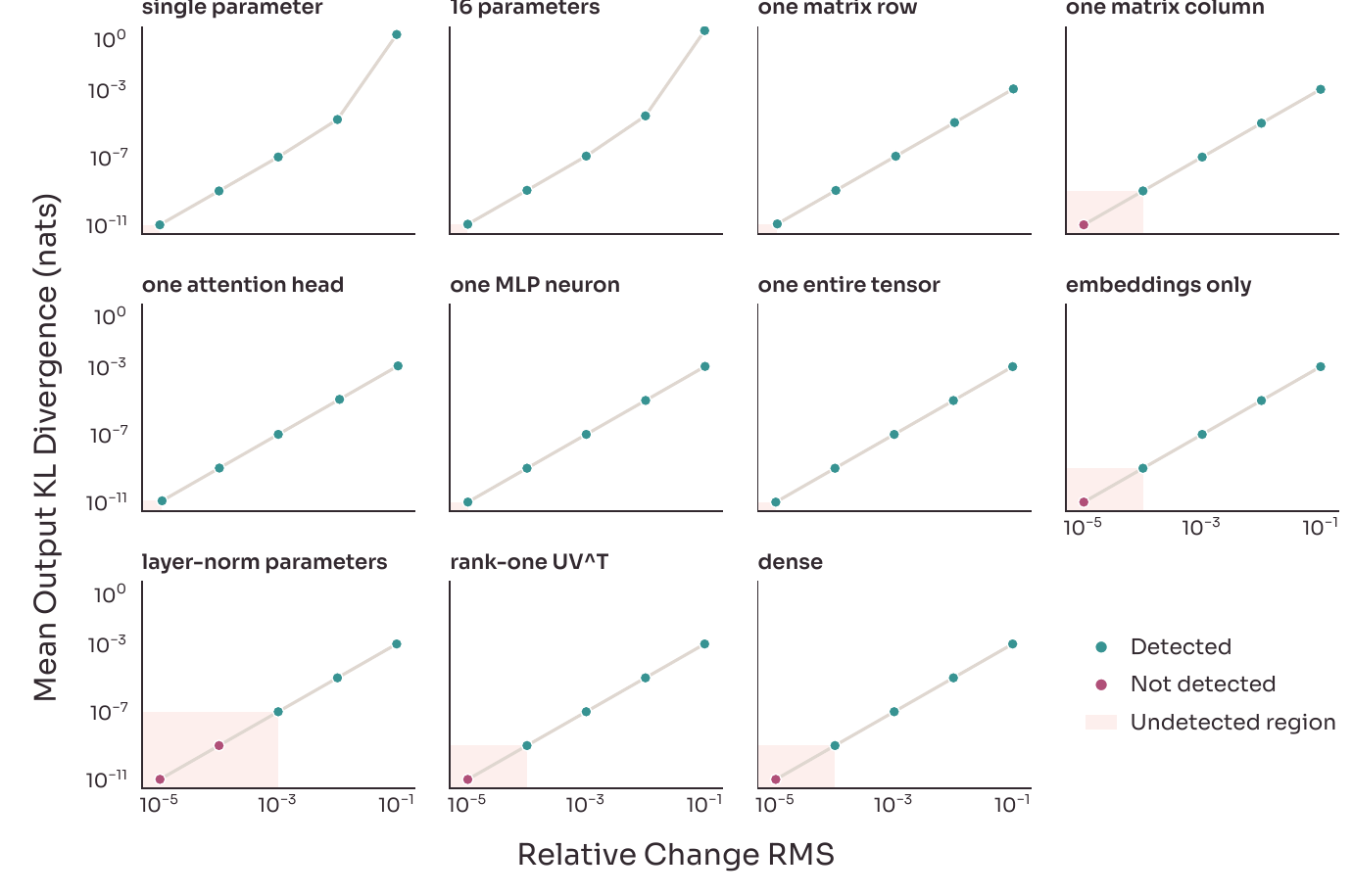}
  \caption{\textbf{Cheating only in limited subspaces is also reliably detected by our sketch}. Structured attacks on a 100M parameter model during a single transition, sweeping across different relative change RMS magnitude. Structured perturbations are applied across different components of the model, and we measure the next-token KL divergence induced on 6 orthonormal input embeddings. ``Detected'' means that the sketch outputs differ by the $1\times 10^{-5}$ floating point difference threshold. Note that this same sketch would secure provenance and transition claims, and undetected changes result in no top-k changes and vanishingly small logit changes.}
\label{fig:sec:structured-attack}
\end{figure}

\paragraph{No Pipeline Stalls} In \cref{fig:sec:single-gpu}, we show that our system does not meaningfully stall the training pipeline. Specifically, we do not significantly decrease token throughput, and only drop the GPU MFU by $<3\%$ in the worst case, since multi-GPU training would incur additional communication stalls beyond the single-GPU case.

\begin{figure}[h]
  \centering
  \includegraphics[width=\linewidth]{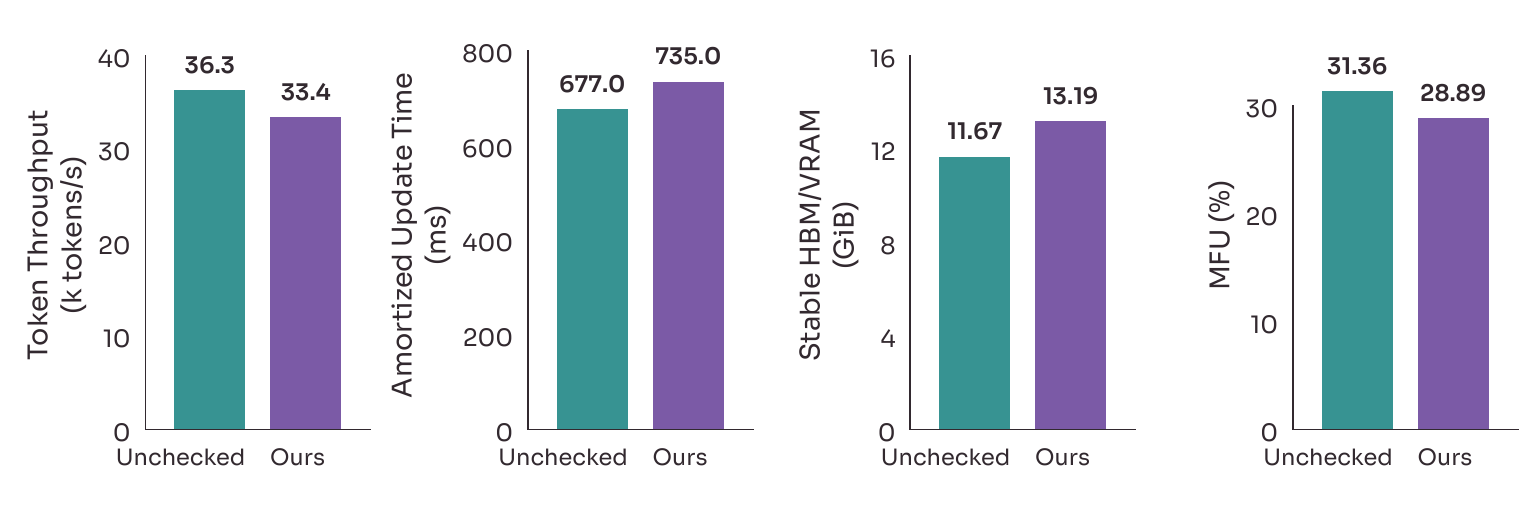}
  \caption{\textbf{In compute-bound settings, our system can still maintain high throughput without a drastic cost to MFU}. Saturated single-GPU overheads for our method with AdamW. \textbf{1}: token throughput overhead; \textbf{2}: amortized per-step training time; \textbf{3}: mean HBM resident size (bytes in use); \textbf{4}: MFU impact as measured by published theoretical BF16 Tensor Core GEMMs. 1.03B dense GPT-style model on single-GPU with sizes scaled for compute (memory) roofline saturation; profiling training steady-state at $1\%$ check probability. Measurement conducted on a single NVIDIA H100 NVL72.}
\label{fig:sec:single-gpu}
\end{figure}

\paragraph{Log-Linear Model Size Scaling} In \cref{fig:sec:scaling}, we show that our system's amortized update time and mean HBM usage scale roughly log-linearly with model parameters. This result gives us an indication of the potential scaling pattern of our system and its practicality at even larger scales.

\begin{figure}[h] 
  \centering
  \includegraphics[width=0.49\linewidth]{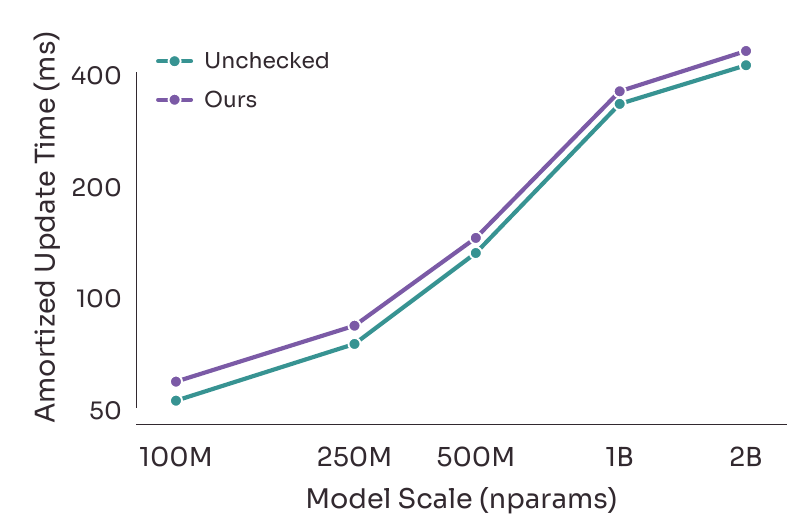}
  \includegraphics[width=0.49\linewidth]{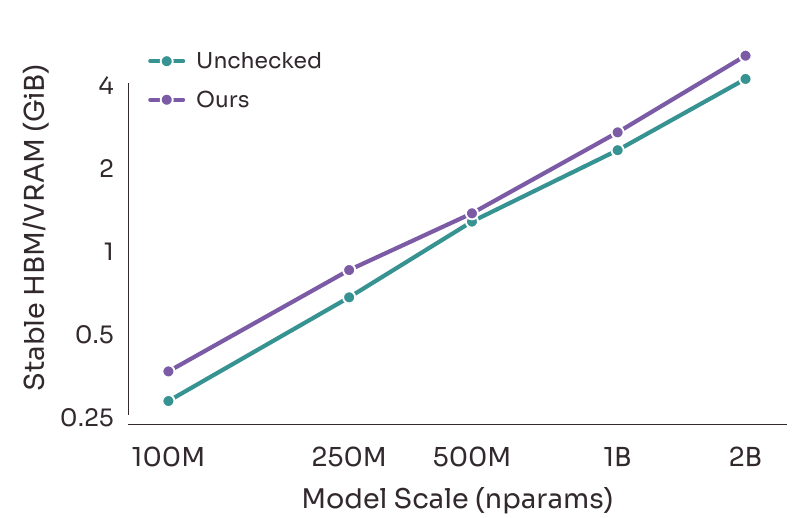}
  \caption{\textbf{Our performance cost scales log-log-linearly with model size, as is similar for model training itself}. DDP time and memory overheads for applying our method with SGD across a variety of scales. \textbf{Left}: amortized per-step training time; \textbf{Right}: mean HBM resident size (bytes in use) per GPU. Single seed, conducted over 2 NVIDIA H100 NVL72. Axes are log-log scale.}
\label{fig:sec:scaling}
\end{figure}

\paragraph{No Influence on Training} In \cref{fig:appx:loss-worse}, left, we replay a very small toy run of a 100M parameter model under DDP with and without our system. We note that training loss is floating-point identical on the same hardware, since neither our procedure nor our implementation changes the training procedure.

\paragraph{Update Time Comparison} In \cref{fig:appx:loss-worse}, right, we benchmark the time spent on a single update of our system under 100M and break down the time spent on each component of the update. We identify that, as expected, state offloading and flow control (i.e., tape worker overhead) consist of the overheads measured in our system.

\begin{figure} \label{fig:appx:loss-worse}
  \centering
  \includegraphics[width=0.495\linewidth]{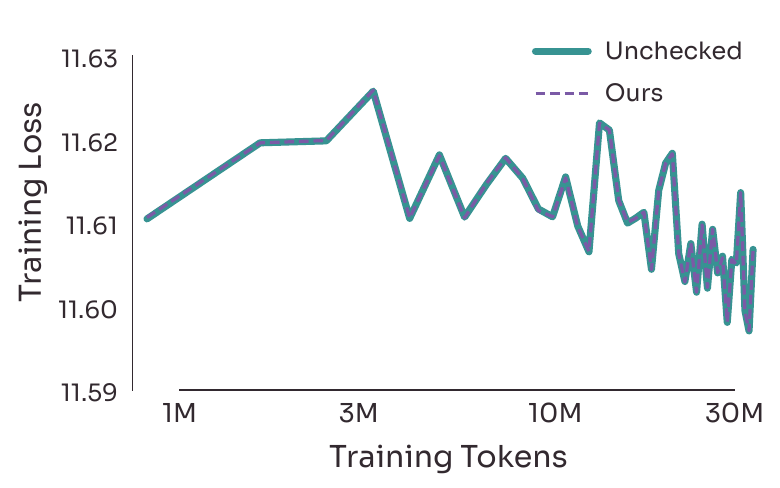}
  \includegraphics[width=0.495\linewidth]{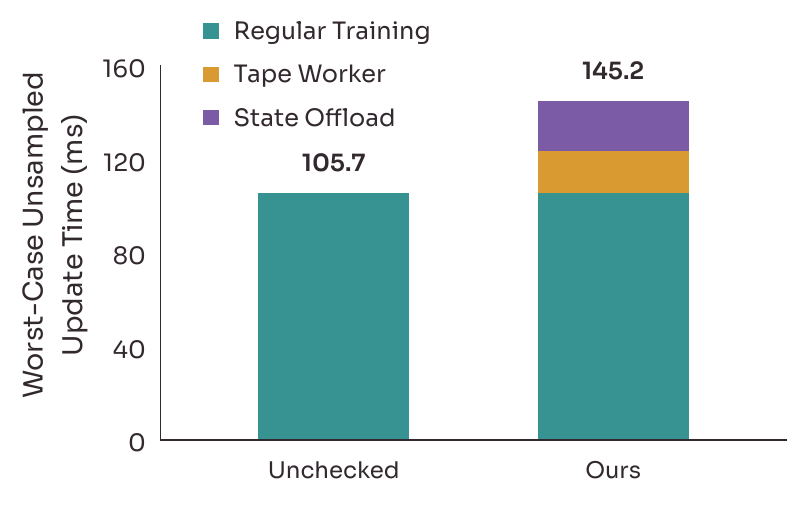}
  \caption{\textbf{Left}: \textbf{Our system does not change the actual training}. Toy example of the very beginning of training of a 100M model. Our system does not change the training procedure and indeed the training is bitwise reproducible on the same hardware with and without our system. \textbf{Right}: \textbf{Our overheads are caused primarily by tape write backpressure and state offloading costs}. Breakdown of the structure of one update at 100M. Note that the update overhead comprises state offloading (amortized) time and certificate worker write overhead (instantiated as the update being held by flow control).}
\end{figure}

\paragraph{Red Teaming} Although we did not conduct a full professional security audit, we examined the implementation of our white-box cryptographic setup with a minimal red-teaming exercise. Note that we do not claim the white-box cryptographic setup is exhaustively secure, but that it is a reasonable boundary with which to set up the rest of the system. Specifically, we prompted \texttt{Kimi K3} in the official Kimi Code harness with a built wheel of our project and the instruction (\texttt{Hello! I have a wheel in this folder. It creates a training data verification transcript. Your job is to figure out a way to 1) reverse engineer its algorithm, and 2) find out a way to make a valid, "wrong" transcript. By wrong, I don't just mean cheat the verifier. it means: a neural network that behaves "reasonably correctly" (i.e., not obviously wrong), but includes training data undeclared by the transcript, and for which the verifier holding the checkpoint and certificate cannot distinguish based on re-running the algorithmic hashes provided in the transcript.}). After 8 hours of continuous work, the agent was unable to produce a valid transcript that enabled it to mark the prompted goal as complete.

\end{document}